\documentclass{article}

\usepackage{microtype}
\usepackage{graphicx}
\usepackage{subcaption}
\usepackage{booktabs} 

\usepackage{hyperref}

\usepackage[preprint]{icml2026}

\usepackage{amsmath}
\usepackage{amssymb}
\usepackage{mathtools}
\usepackage{amsthm}
\usepackage{threeparttable}

\usepackage[capitalize,noabbrev]{cleveref}

\theoremstyle{plain}
\newtheorem{theorem}{Theorem}[section]

\newtheorem{lemma}[theorem]{Lemma}

\theoremstyle{definition}
\newtheorem{definition}[theorem]{Definition}

\theoremstyle{remark}
\newtheorem{remark}[theorem]{Remark}

\usepackage[textsize=tiny]{todonotes}

\usepackage{addition}

\icmltitlerunning{Unlearning Offline Stochastic Multi-Armed Bandits}

\begin{document}

\twocolumn[
  \icmltitle{Unlearning Offline Stochastic Multi-Armed Bandits}



  \icmlsetsymbol{equal}{*}

  \begin{icmlauthorlist}
    \icmlauthor{Zichun Ye}{equal,SJTU}
    \icmlauthor{Runqi Wang}{equal,SJTU}
    \icmlauthor{Xuchuang Wang}{UMass}
    \icmlauthor{Xutong Liu}{UW}
    \icmlauthor{Shuai Li}{SJTU}
    \icmlauthor{Mohammad Hajiesmaili}{UMass}
  \end{icmlauthorlist}

  \icmlaffiliation{SJTU}{Shanghai Jiao Tong University, Shanghai, China}
  \icmlaffiliation{UW}{University of Washington, Tacoma, United States}
  \icmlaffiliation{UMass}{University of Massachusetts Amherst, Massachusetts, United States}

  \icmlcorrespondingauthor{Xutong Liu}{xutongl@uw.edu}
  \icmlcorrespondingauthor{Shuai Li}{shuaili8@sjtu.edu.cn}

  \icmlkeywords{Machine Learning, ICML}

  \vskip 0.3in
]



\printAffiliationsAndNotice{\icmlEqualContribution}

\begin{abstract}
Machine unlearning aims to unlearn data points from a learned model, offering a principled way to process data-deletion requests and mitigate privacy risks without full retraining. Prior work has mainly studied unsupervised / supervised machine unlearning, leaving unlearning for sequential decision-making systems far less understood. We initiate the first study of a foundational sequential decision-making problem: offline stochastic multi-armed bandits (MAB). We formalize the privacy constraint for offline MAB and measure utility by the post-unlearning decision quality. We conduct a systematic study of both single- and multi-source unlearning scenarios under two data-generation models, the fixed-sample model and the distribution model. For these settings, our algorithmic design is built on two canonical base algorithms: Gaussian mechanism and rollback, and we propose adaptive algorithms that switch between them according to the data regime and privacy constraint. We further introduce a mixing procedure that elucidates the rationale behind these baselines. We provide performance guarantees across the above settings and establish lower bounds under both dataset models. Experiments validate the predicted tradeoffs and demonstrate the effectiveness of the proposed methods.

\end{abstract}

\section{Introduction}
Modern machine learning systems are increasingly trained on third-party and user-generated data, from images to texts used in recommendation or advertising \cite{CBCJ19,TPWB23}. When a user requests that certain records be deleted (unlearned) from the offline dataset, it is no longer enough to merely remove the raw data: trained models may still retain information about the deleted records, and this information can be extracted through inference attacks \cite{SSSS17,CLEK19}. Motivated by the demand for unlearning, a growing line of work \cite{GGVZ19,GGHV20,GJNR21,MPSR22,ZZZC22,YJCC25} studied machine unlearning—updating a trained model so that the influence of a specified subset of training data is removed, ideally without retraining from scratch. 

To formalize when data removal is sufficient, \citet{GGVZ19} introduced a probabilistic notion of unlearning, inspired by differential privacy \cite{DR14}. Building on this foundation, a line of work studied supervised machine unlearning \cite{GGHV20,ISCZ21} and unsupervised machine unlearning  \cite{NRSM21,GJNR21,UMRR21}. Most of these results focused on the empirical risk minimization (ERM) setting, aiming to minimize the training loss on the retained dataset after deleting the requested samples, and several approaches are memory intensive \cite{BCCC21,GJNR21}. To address these limitations, \citet{SAKS21} introduced an unlearning definition that incorporated both test-performance guarantees and explicit storage constraints. Building on this framework, \cite{SW22} studied online unlearning request and \cite{LLQR23} investigated unlearning for minimax models.

While substantial progress has been made in supervised / unsupervised machine unlearning, the landscape is far less understood in sequential decision-making paradigm such as multi-armed bandits (MAB) or reinforcement learning (RL). Recent work \cite{YZZ25} has begun exploring reinforcement unlearning and highlights that RL agents may memorize sensitive properties of environments. However, existing formulations primarily target environment-level unlearning and do not address the data-sample deletion guarantees central to machine unlearning. Moreover, they do not quantitatively formalize a privacy-style constraint, and therefore provide no sharp characterization of the resulting privacy-utility tradeoff. As a result, unlearning for decision-making still remains largely unexplored.

\begin{table*}[t]
\centering
\begin{threeparttable}
\caption{Theoretical results for the unlearning problem under various settings.}
\label{tab:theory_results}
\renewcommand{\arraystretch}{1.5}
\setlength{\tabcolsep}{6pt}

\begin{tabularx}{\textwidth}{Xcc}
\toprule
\textbf{Model\textsuperscript{$\star$}} &
\textbf{Unlearning Type\textsuperscript{$\circ$}} &
\textbf{Results\textsuperscript{$\dagger$}} \\
\midrule
\multirow{2}{3.2cm}{\\ Fixed-sample \\ model ($\+M_f$) }
& Single
& \makecell[l]{Upper bound:
$
\begin{cases}
    O\tp{\max\set{\sqrt{\frac{\ln \tp{N m} }{N(a_0)} } + \frac{k \gamma}{N(a_0)}, \sqrt{\frac{\ln \tp{N m} }{N_*}} }}, \gamma < \gamma_0 \\ 
    O\tp{\max\set{\sqrt{\frac{\ln \tp{N m} }{N(a_0) - k}}, \sqrt{\frac{\ln \tp{N m} }{N_*}} }}, \gamma \ge \gamma_0 
\end{cases} 
$ \\
Lower bound: \,\; 
$\Omega\tp{e^{-\eps}\sqrt{\frac{1}{N(a_0)-k}}}$ } \\
\cmidrule(lr){2-3}

& Multi
& \makecell[l]{Upper bound:
$ 
\begin{cases}
O\tp{\max\set{\sqrt{\frac{\ln \tp{N m} }{N_{\min}} } + \frac{k_{\max} \gamma}{N_{\min}}, \sqrt{\frac{\ln \tp{N m} }{N_*}} }}, \gamma < \gamma_0' \\ 
O\tp{\max\set{\sqrt{\frac{\ln \tp{N m} }{N_{\min} - k_{\max}}}, \sqrt{\frac{\ln \tp{N m} }{N_*}} }}, \gamma \ge \gamma_0'
\end{cases}$ }  \\
\midrule

\multirow{2}{3.2cm}{\\ Distribution \\ model ($\+M_d$)}
& \multirow{2}{1cm}{\\ Single}
& $C^*\in [2,\infty):$ \makecell[l]{Upper bound:
$O\tp{\min\set{ \sqrt{\frac{C^*\ln \tp{N m} }{\max\set{1, N - 2kC^*}}}, \sqrt{\frac{C^*\ln \tp{N m} }{N}} + \frac{ k \gamma C^*}{N} } }$    \\
Lower bound:
$\Omega\tp{e^{-\eps}\sqrt{\frac{C^*}{N-k}}}$ } \\
\cmidrule(lr){3-3}

& 
& $C^*\in (1,2):$ \makecell[l]{Upper bound:
$
O\tp{e^{-\frac{(N-k)}{2}\ln \frac{C^*}{8(C^*-1)} + \frac{(N+k)}{2}\ln \frac{2}{C^*} }}$ \\
Lower bound:
$\Omega\tp{(2-C^*)e^{-\eps-(N-k)(2-C^*)\ln\tp{\frac{2}{C^*-1}}} }$} \\
\bottomrule
\end{tabularx}
    \begin{tablenotes}[flushleft]
    \footnotesize
    \item[\(\star\)] $\+A$ is the set of base arms, $\abs{\+A} = m$. $N(a)$ denotes the sample counts of arm $a$ in the offline dataset $D$, $\abs{D} = N$. For $(\eps,\delta)$-unlearning, denote $\gamma = \frac{\sqrt{2\ln \frac{1.25}{\delta}}}{\eps}$. $\+M_f$ has the assumption that $N(a^*) \ge N_*$, while $\+M_d$ assumes that the behavior policy satisfies $d(a^*) \ge \frac{1}{C^*}$. More details are given in \Cref{sec:prelim}.
    \item[\(\circ\)] Single-source unlearning means only one arm (i.e. $a_0$) makes an unlearning request $U$. Let $\abs{U} = k$. Multi-source unlearning means at least two arms make the request, denote them as $\set{a_{u_i}}_{i=1}^\ell$. And the unlearning request $U_i$ from $a_{u_i}$ satisfies $\abs{U_i} = k_i$.
    \item[\(\dagger\)] For single-source, $\gamma_0= \frac{4}{3}\sqrt{\frac{\pi \ln (Nm)}{N(a_0)-k}}$. For multi-source, $N_{\min} = \min_{i\in [\ell]} N(a_{u_i}), k_{\max} = \max_{i\in[\ell]} k_i, \gamma_0' = \frac{4}{3}\sqrt{\frac{\pi \ln (Nm)}{N_{\min}-k_{\max}}}$. 
    \end{tablenotes}
\end{threeparttable}
\end{table*}

In this work, we take a first step by studying data-level unlearning for more fundamental decision-making primitive of offline stochastic multi-armed bandits. Offline MAB learning studies how to select the best arm purely from an offline dataset, without any additional exploration. The objective is to minimize the expected sub-optimality, which is the gap of mean reward between the best arm $a^*$ and the output arm $\hat a$. Unequal arm coverage is a central challenge in offline learning and directly motivates the pessimism principle. Concretely, pessimism discounts poorly supported arms through lower confidence bound (LCB) estimates, and has been widely adopted with minimax-optimal guarantees \cite{RZMJ21,LMS22}.

Since conventional machine unlearning methods are not directly applicable due to fundamental differences, our goal is to address two questions: (1) How can we design unlearning algorithms, and what is the trade-off between sub-optimality and privacy? (2) Under the same privacy constraint, which unlearning mechanism yields better sub-optimality? 

\paragraph{Our contributions.}

\textit{Problem formulation.} 
We initiate the study of $(\varepsilon,\delta)$-unlearning for offline stochastic MAB. Our definition follows the general framework of \citet{SAKS21,SW22,LLQR23}, which captures both performance on unseen test data and storage constraints. We evaluate utility via post-unlearning sub-optimality, and to systematically characterize how unlearning affects learning performance, we study two standard offline dataset models: the fixed-sample model and the distribution model. We also distinguish single- and multi-source unlearning based on how many arms the unlearning request $U$ comes from.

\textit{Algorithms and theoretical results.} 
We propose two unlearning base algorithms, the Gaussian mechanism and rollback, and develop unlearning algorithms that combine them across the above settings. Leveraging the learned model (arm), we detect whether the request $U$ comes from $a^*$, and choose between the two base algorithms according to the privacy constraint and problem parameters. Our analysis draws on pessimism principles from offline bandit learning. The results yield a simple guideline: when $U$ comes from $a^*$ and the privacy constraint is relatively loose, the Gaussian mechanism yields better performance; otherwise, rollback is preferable. We also introduce a mixing procedure to motivate this baseline design. We complement our upper bounds with lower bounds for single-source unlearning under both dataset models. When $\eps$ is small, our bounds are near-tight up to logarithmic factors in several regimes, including under $\+M_f$ when $N_* \ge N(a_0)-k$ and under $\+M_d$. Theoretical results are summarized in \Cref{tab:theory_results}.

\textit{Extensive experiments.} 
We conduct extensive experiments on synthetic offline MAB datasets to validate our theoretical findings and evaluate empirical performance. For both dataset models, our proposed algorithms consistently achieve competitive or superior performance compared to several baselines, and effectively select between Gaussian mechanism and rollback depending on the data regime and privacy constraint. These results demonstrate the robustness of our approach across different settings, and highlight the benefits of adaptivity in offline bandit unlearning.

\subsection{Related Work}

\paragraph{Machine unlearning.}
Machine unlearning was first proposed in \citet{CY15}, and \citet{GGVZ19} subsequently introduced a DP-inspired notion of unlearning that has served as the foundation for much of the later literature. Existing unlearning algorithms draw on a range of techniques, including perturbed gradient descent \citep{NRSM21,UMRR21}, projected residual updates \citep{ISCZ21}, and Newton-style updates \citep{GGHV20,SAKS21,SW22,LLQR23}. The first two approaches are typically analyzed through training loss and can incur substantial storage overhead, whereas Newton-style updates could provide test loss performance and storage-efficient implementations. Despite this progress, decision-making systems have received far less attention. Our work builds on the Newton-style updates, which applies a Newton step on the model parameters that largely removes the influence of the deleted samples. This method directly motivates the Gaussian mechanism and rollback components in our algorithm design.

\paragraph{Learning for offline stochastic multi-armed bandits.}

Offline (or batch) learning has attracted substantial recent attention in settings where additional exploration is costly or impossible. A growing body of work studies offline RL \citep{FMP19,KRNJ20,YTYE20,KZTL20,GSG21}, while parallel lines develop theory for offline bandits, including pessimism-based analyses for offline MAB and contextual MAB \citep{RZMJ21}, similar methods for offline linear contextual bandits \citep{LMS22}, and more recently offline combinatorial MAB \citep{LDZW25}. Across these problems, performance hinges on coverage of the arm space, and different data-collection models yield different assumptions and guarantees. In offline MAB, two standard models are the fixed-sample model, where per-arm counts are fixed in advance \citep{XWMD2021,LMS22,LTND22,WBHJ22}, and the distribution model, where arms are drawn i.i.d. from a behavior policy \cite{RZMJ21,ZZ24,LDZW25}. However, unlearning has not been systematically studied in offline bandits. And we fill this gap by analyzing offline bandit unlearning under both models and characterizing the resulting privacy-utility tradeoff.

\paragraph{Differential privacy.}
A closely related line of work \cite{CZZY20,RZLS20,ZCHL20,TWZW22,ZZ24} studies MAB under differential privacy (DP), especially local differential privacy (LDP), where users do not trust the server and thus each user-side curator applies a DP mechanism to the raw reward (or context / arm) before it is revealed to the learner. This privacy model could be applied to offline learning \cite{ZZ24}, where one may privatize each data point and then run an offline learner on the privatized dataset. Importantly, DP/LDP can be seen as a sufficient condition for unlearning \cite{GGHV20,SAKS21,LLQR23}. The downside is that achieving DP/LDP typically requires adding substantial noise, which may degrade decision quality. This motivates our central viewpoint: when unlearning from an offline dataset, we can do better than perturbing every data point by leveraging the structure of the requested deletions to achieve lower sub-optimality under the same privacy constraint.

\section{Preliminaries} \label{sec:prelim}
In this section, we introduce the model of offline stochastic MAB, including its learning and unlearning problems.

\subsection{Learning for Offline Stochastic MAB} 
\paragraph{Offline dataset.}
We focus on offline stochastic MAB. The environment has a set of base arms $\+A$ denoted by $[m] = \set{1,\cdots,m}$. Pulling arm $a\in \+A$ yields a random reward $r\in [0,1]$ drawn from a fixed but unknown distribution $R(a)$ with mean $\mu(a) = \E[r\sim R(a)]{r}$. An offline dataset $D=\set{(a_i,r_i)}_{i=1}^N$ of size $N$ is generated from these distributions, where the reward component
always follows $r_i\sim R(a_i)$ and the arm component $\set{a_i}_{i=1}^N$ depends on the dataset model. Let $a^* = \argmax_{a\in\+A}\mu(a)$ denote the best arm. We let $N(a)$ be the sample counts of arm $a$ in $D$ and $\+Z$ be the data point space. Therefore $D\in \+Z^N$.

To comprehensively study unlearning, we consider two standard offline dataset models which decide the per-arm sample counts in $D$: the fixed-sample model $\+M_f$ and the distribution model $\+M_d$. Under $\+M_f$ \cite{LMS22,LTND22,WBHJ22}, the sample-count vector $\vN\in \bb N^m$ is fixed in advance with $\norm{\vN}_1 = N$. The only randomness in $D(\vN)$ comes from the rewards. Under $\+M_d$ \cite{RZMJ21,ZZ24,LDZW25}, each arm $a_i$ is sampled i.i.d. from a behavior policy $d \in \Delta(\+A)$, and then $r_i\sim R(a_i)$. Equivalently, each data point is sampled i.i.d. from the joint distribution $\+D \defeq d\otimes R$, i.e., $(a_i,r_i)\sim \+D$ and $D\sim \+D^N$. 

\paragraph{Data coverage condition.}
Under the fixed-sample model $\+M_f$, to guarantee the learning algorithm’s performance, we require that for every dataset $D(\vN)$, $a^*$ satisfies $N(a^*) \ge N_*$ for some constant $N_*$. Under the distribution model $\+M_d$, following \citet{RZMJ21}, we assume a finite concentrability coefficient $C^* > 1$ such that $d(a^*) \ge \frac{1}{C^*}$. The parameter $C^*$ captures deviation between the behavior policy $d$ and the distribution induced by the optimal algorithm. It also serves to guarantee the performance via the Chernoff bound on $N(a^*)$.

\paragraph{Performance Metrics.}
 A learning algorithm $\pi$ trains on $D$ and outputs an arm $\hat a\in \+A$, i.e., $\pi: \+Z^N \to \+A$. The goal of the learning algorithm is to minimize the sub-optimality 
\[
    \+M_f: \inf_{\pi}\sup_{D(\vN)}\E{\mu(a^*) - \mu(\hat a)},
\]
\[
    \+M_d: \inf_{\pi}\sup_{D\sim \+D^N}\E{\mu(a^*) - \mu(\hat a)},
\]
where the expectation is taken over the generation of $D$ and output of $\pi$.
We take the LCB algorithm of \citet{RZMJ21} as our default learning algorithm. This choice is motivated by its minimax-optimal guarantees under both $\+M_f$ and $\+M_d$ when $C^* \ge 2$. Details of LCB algorithm are provided in \Cref{sec:algo}. For the regime $C^* \in(1,2)$ under $\+M_d$, imitation learning leads to a sharper guarantee and we present this as an extension in \Cref{sec:algo_extension}.

\subsection{Unlearning for Offline Stochastic MAB}

Given an offline dataset $D$, an unlearning request specifies a subset $U\subseteq D$ of size $\abs{U} = k$ to be deleted. In addition to the learned decision $\pi(D)\in\+A$, the unlearning procedure is allowed to access a stored statistic $T(D)\in\+T$, which summarizes $D$ but does not contain the full dataset and is intended to facilitate efficient unlearning. An unlearning algorithm $\pi'$ then takes $(U,\pi(D),T(D))$ as input and outputs an updated arm $\hat a'\in\+A$, i.e., $\pi': \+Z^k \times \+A \times \+T \rightarrow \+A$. Let $\emptyset$ be the empty request. We follow the definition of $(\eps,\delta)$-unlearning in \citet{SAKS21,LLQR23}.
\begin{definition}[($\eps, \delta$)-unlearning ($(\eps,\delta)$-UL) \cite{SAKS21,LLQR23}] \label{def:unlearning}
A privacy-preserving data-removal mechanism $\pi': \+Z^k \times \+A \times \+T \rightarrow \+A$ performs $(\eps,\delta)$-UL for a learning algorithm $\pi : \+Z^N \to \mathcal{A}$, if for all $D \subseteq \+Z^N, U\subseteq D, A \subseteq \+A$:
\begin{align*}
	    &\phantomeq \Pr{\pi'(U,\pi(D),T(D)) \in A} \\
        &\leq e^\eps\cdot  \Pr{\pi'(\emptyset, \pi(D\setminus U), T(D\setminus U)) \in A} + \delta,
\end{align*}
and 
\begin{align*}
        &\phantomeq \Pr{\pi'(\emptyset, \pi(D\setminus U), T(D\setminus U)) \in A} \\ 
        &\leq e^\eps\cdot \Pr{\pi'(U,\pi(D),T(D)) \in A} + \delta.
\end{align*}
\end{definition}
\Cref{def:unlearning} characterizes the indistinguishability between the output distribution of (1) the model trained on $D$ and then unlearned with $U$ and (2) the model trained on $D\setminus U$ and then unlearned with an empty set. 
We emphasize that $\emptyset$ in (2) is a fictitious request therefore additional information about the realistic request $U$ will also be stored in $T(D\setminus U)$. 

The $(\eps,\delta)$-UL requirement acts as a constraint in our optimization problem. Our objective is to minimize the expected sub-optimality between $\hat a'$ and $a^*$, subject to $\pi', \pi$ satisfying $(\eps,\delta)$-UL. More formally, under $\+M_f$, \, $SubOpt(\vN,N_*,k,\eps,\delta) =$
\begin{align*}
    \inf_{\pi',\pi:(\eps,\delta)-UL}\sup_{D(\vN), U\subseteq D} \E{\mu(a^*) - \mu(\hat a')},
\end{align*}
while under $\+M_d$, \, $SubOpt(N,C^*,k,\eps,\delta) =$
\begin{align*}
    \inf_{\pi',\pi:(\eps,\delta)-UL}\sup_{D\sim \+D^N, U\subseteq D} \E{\mu(a^*) - \mu(\hat a')}.
\end{align*}

The expectation is taken over the generation of $D$ and output of $\pi$ and $\pi'$. Throughout this paper, we assume that $U$ is selected independently of the reward in each data point. Consequently, the empirical mean reward of each arm computed from $D\setminus U$ remains an unbiased estimator of its true mean. We call the request single-source if all samples in $U$ come from the same arm, and multi-source if $U$ contains samples from multiple arms.

\section{Our Results} \label{sec:algo}
Challenges of unlearning in offline bandits involve a privacy-utility trade-off and must also account for storage and computation. Unlike classical machine unlearning, the effect of deletions depends on the source of $U$ (i.e., which arm the deleted samples come from). Finally, the algorithm design and analysis must adapt to the data-generation models.

In this section, we first present the learning algorithm and its sub-optimality guarantee, and then introduce two unlearning primitives, the Gaussian mechanism and rollback. Next, in
\Cref{sec:algo_single_ub}, we propose an adaptive algorithm for single-source unlearning under both $\+M_f$ and $\+M_d$ and establish upper bounds. In \Cref{sec:algo_single_lb}, we prove lower bounds for single-source unlearning under both models. In \Cref{sec:algo_extension}, we introduce a mixing procedure to motivate the two base algorithms and present an improved learning-unlearning pair for $\+M_d$ when $C^*\in(1,2)$. Finally, we extend to multi-source unlearning under $\+M_f$ and establish an upper bound.

\paragraph{The Lower Confidence Bound algorithm.} 
Denote $\hat \mu(a), b(a)$ as the empirical mean and the penalty term of arm $a$ respectively. \Cref{alg:LCB} calculates the LCB: $\hat \mu(a) - b(a)$ for every arm based on $D$ and outputs the arm with the largest LCB. And we have an upper bound as follows.

\begin{lemma}[Theorem 1 \cite{RZMJ21}]\label{lem:ub_LCB}
    For any offline dataset $D = \set{(a_i,r_i)}_{i=1}^N$, $\abs{\+A} = m$, the arm $\hat a$ returned by \Cref{alg:LCB} follows that 
    $\+M_f: \E{\mu(a^*) - \mu(\hat a) } \leq \sqrt{\frac{2\ln \frac{ m}{\tau}}{N_*}} + \tau$, and $\+M_d: \E{\mu(a^*) - \mu(\hat a) } \leq 2\sqrt{\frac{C^*\ln \frac{ m}{\tau}}{N}} + 2\tau$.
\end{lemma}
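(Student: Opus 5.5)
We follow the standard pessimism argument of \citet{RZMJ21}. We take the penalty in \Cref{alg:LCB} to be $b(a)=\sqrt{\ln(m/\tau)/(2N(a))}$, or the analogous form with the constant needed to match the stated bound. Define the good event
\[
\+E=\set{\forall a\in\+A:\ \abs{\hat\mu(a)-\mu(a)}\le b(a)}.
\]

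\textbf{Step 1: concentration.} Condition on the arm counts. Rewards lie in $[0,1]$, so Hoeffding's inequality gives $\Pr{\abs{\hat\mu(a)-\mu(a)}>b(a)}\le \tau/m$ for each arm with $N(a)\ge1$, up to the constant in $b$. Arms with $N(a)=0$ get LCB $-\infty$ and are never selected, so they cause no failure. A union bound over the $m$ arms gives $\Pr{\+E^c}\le\tau$. Since the sub-optimality is always at most $1$, the failure event contributes at most $\tau$ to the expectation (or $2\tau$ under $\+M_d$, as below).

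\textbf{Step 2: pessimism on $\+E$.} On $\+E$ we have, by the choice of $\hat a$,
\[
\mu(\hat a)\ge\hat\mu(\hat a)-b(\hat a)\ge\hat\mu(a^*)-b(a^*)\ge\mu(a^*)-2b(a^*).
\]
Hence $\mu(a^*)-\mu(\hat a)\le 2b(a^*)$. The key point is that only the coverage of $a^*$ matters.

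\textbf{Step 3: fixed-sample model $\+M_f$.} Here $N(a^*)\ge N_*$, so $2b(a^*)\le\sqrt{2\ln(m/\tau)/N_*}$. Adding the failure contribution $\tau$ gives the stated bound.

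\textbf{Step 4: distribution model $\+M_d$.} Now $N(a^*)\sim\mathrm{Bin}(N,d(a^*))$ with mean at least $N/C^*$. A multiplicative Chernoff bound gives $\Pr{N(a^*)<N/(2C^*)}\le e^{-N/(8C^*)}$.
\begin{itemize}
\item If $e^{-N/(8C^*)}\le\tau$, intersect $\+E$ with the event $N(a^*)\ge N/(2C^*)$. On this intersection, $2b(a^*)\le 2\sqrt{C^*\ln(m/\tau)/N}$, and the two failure events together contribute at most $2\tau$.
\item Otherwise $N<8C^*\ln(1/\tau)$. Then $2\sqrt{C^*\ln(m/\tau)/N}\ge 1/\sqrt2$, and we make the bound at least $1$ by adjusting constants, so it holds trivially.
\end{itemize}

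\textbf{Main obstacle.} The delicate step is Step 4. The Hoeffding bound must be applied conditionally on the random counts; this is valid because the rewards are i.i.d. given the arms. We must also check that the constants in $b(a)$ and the Chernoff step match the stated factor $2\sqrt{C^*\ln(m/\tau)/N}$. If they do not, we would refine $b$ or carry the small-$N$ regime through explicitly.
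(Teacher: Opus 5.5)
Your outline is the standard pessimism argument of \citet{RZMJ21}: Hoeffding plus a union bound, $\mu(a^*)-\mu(\hat a)\le 2b(a^*)$ on the good event, and a binomial lower bound on $N(a^*)$. The paper cites this result without proof and reuses the same template in its appendix analyses. However, the lemma states explicit constants and places no condition on $N$, and two of your steps do not deliver those constants.

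\textbf{The factor of two in Hoeffding.} The penalty is fixed by \Cref{alg:LCB} as $b(a)=\sqrt{\ln(m/\tau)/(2N(a))}$, so you cannot retune it without changing the first term. With this $b$, two-sided Hoeffding gives failure probability $2\tau/m$ per arm, not $\tau/m$. Your Step 1 therefore yields $+2\tau$ under $\mathcal{M}_f$ and $+3\tau$ under $\mathcal{M}_d$.

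The fix is to use only the one-sided deviations that Step 2 actually needs:
\begin{itemize}
\item $\hat\mu(a)-\mu(a)\le b(a)$ for each $a\neq a^*$;
\item $\mu(a^*)-\hat\mu(a^*)\le b(a^*)$.
\end{itemize}
These are $m$ events, each failing with probability at most $\tau/m$, so the total failure probability is at most $\tau$. A side remark: \Cref{alg:LCB} gives unsampled arms LCB $-1$, not $-\infty$. Such arms can be selected when every sampled arm has LCB below $-1$, but this is harmless, since $\mu(a)\ge -1$ holds trivially.

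\textbf{The small-$N$ branch of Step 4.} There you only know the right-hand side exceeds $1/\sqrt2+2\tau$, which can be below $1$. So ``adjusting constants'' proves a weaker inequality, not the stated one. Your case split suffices only when $N\ge 8C^*\ln(1/\tau)$. That is exactly what the paper assumes whenever it invokes the lemma: \Cref{thm:dist_single_ub} takes $\tau=1/N$ and $N>8C^*\ln N$.

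To get the lemma as stated for every $N$, drop the high-probability lower bound on $N(a^*)$ and average directly. Let $X=N(a^*)\sim\mathrm{Bin}(N,p)$ with $p\ge 1/C^*$, and $L=\ln(m/\tau)$, which is at least $1/4$ for $m\ge 2$.
\begin{itemize}
\item On the good event, the sub-optimality is at most $g(X)=\min\{1,\sqrt{2L/X}\}$, with $g(0)=1$.
\item Since $L\ge 1/4$, we have $g(X)^2\le 4L/(X+1)$.
\item The binomial identity gives $\mathbb{E}[1/(X+1)]=\tfrac{1-(1-p)^{N+1}}{(N+1)p}\le C^*/N$.
\item Jensen then gives $\mathbb{E}[g(X)]\le\sqrt{\mathbb{E}[g(X)^2]}\le 2\sqrt{C^*L/N}$.
\end{itemize}
Adding the $\tau$ from the one-sided union bound yields $\mathbb{E}[\mu(a^*)-\mu(\hat a)]\le 2\sqrt{C^*\ln(m/\tau)/N}+\tau$ for all $N$, which implies the stated bound.
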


\begin{algorithm}[H]
\caption{LCB algorithm \cite{RZMJ21}}
\label{alg:LCB}
{\bfseries Input:} Dataset: $D=\set{(a_i,r_i)}^N_{i=1}$, a confidence level: $\tau \in (0,1)$  
\begin{algorithmic}[1]
    \STATE Set $N(a) \gets \sum_{i=1}^N \1{a_i=a}$ for all $a\in \+A$
    \FOR{$a\in \+A$}
        \IF{$N(a) = 0$}
            \STATE Set the empirical mean reward $\hat\mu(a) \gets 0$
            \STATE Set the penalty $b(a) \gets 1$
        \ELSE
            \STATE Compute the empirical mean reward  $\hat \mu(a) \gets \frac{\sum_{i=1}^N r_i\1{a_i = a} }{N(a)}$
            \STATE Compute the penalty $b(a) \gets \sqrt{\frac{\ln \frac{ m}{\tau}}{2N(a)}}$
        \ENDIF
    \ENDFOR \\
\end{algorithmic}
{\bfseries Output:} $\hat a= \argmax_{a} \hat \mu(a) - b(a)$ 
\end{algorithm}

\paragraph{Two unlearning base algorithms.} 
Our algorithms are built around two canonical extremes derived from Newton-style updates: Gaussian mechanism and rollback. The Gaussian mechanism adds calibrated Gaussian noise to the learner's output (e.g., the LCB of each arm) so that the released result satisfies the desired $(\eps,\delta)$-UL constraint. In contrast, rollback efficiently removes the request $U$ from the stored statistics and recomputes the learner's output on the retained dataset $D\setminus U$, which is equivalent to retraining on $D\setminus U$ for offline bandit learner since the learner depends only on the LCB. Our adaptive procedures compare these two base algorithms under the same privacy budget and select the one that yields smaller sub-optimality.

\paragraph{Notations.}
For the rest part of \Cref{sec:algo}, we denote $\wh \vmu$, $\wh \vN$ as the empirical mean vector and the empirical sample-count vector computed from $D$ respectively. Denote $\vf: \+Z^* \rightarrow \bb R^{m}$ as the function which calculates the LCB of each arm and arranges them in a vector based on a certain dataset. For any vector $\vnu$, let $\vnu_{(i)}$ be the $i$-th dimension of $\vnu$. Therefore $\wh \vf_{(i)}(D)$ is the LCB of the $i$-th arm on dataset $D$. Let $\gamma = \frac{\sqrt{2\ln \frac{1.25}{\delta}}}{\eps}$, $D' = D\setminus U$. Due to notational convenience and space constraint, we only provide unlearning algorithms when unlearned with the realistic non-empty set $U$. For algorithms when unlearned with the empty set, we place them in \Cref{app:algo}. They are included only as references in proving $(\eps,\delta)$-UL.

\subsection{Unlearning Algorithms and Upper Bounds}
\label{sec:algo_single_ub}
In this section, we consider the single-source unlearning under both models, where $U$ comes from arm $a_0$ and $\abs{U} = k$. We propose \Cref{alg:single} below. The algorithm takes as input the learned arm $\hat a$ from \Cref{alg:LCB}, the unlearning request $U$, additional statistics $\wh \vf(D)$, $\wh \vmu$, $\wh \vN$ and a confidence level $\tau$. Under $\+M_f$, $\wh \vN=\vN$ is fixed, whereas under $\+M_d$ it is sampled from behavior policy $d$. The key design principle is to trade off Gaussian mechanism and rollback. Intuitively, when $a_0$ coincides with the best arm and the privacy constraint is relatively loose, adding appropriately calibrated Gaussian noise preserves performance. In all other cases, rollback is preferable. Concretely, \Cref{alg:single} adds Gaussian noise if $\hat a = a_0$ and $\gamma < \gamma_0$; otherwise, it performs rollback. Detailed proofs are given in \Cref{app:algo_single_ub}.

\begin{algorithm}[H]
\caption{Single-source unlearning algorithm}
\label{alg:single}
{\bfseries Input:} Output of $\pi(D)$: $\hat a$, unlearning request: $U$ from $a_0$, $\abs{U}=k$, additional statistics $T(D)$: $\wh \vf(D)$, $\wh \vmu$, $\wh \vN$, 
a confidence level: $\tau \in (0,1)$  
\begin{algorithmic}[1]
    \STATE Set $\wh \vf(D') \gets \wh \vf(D), \gamma_0 \gets \frac{4}{3}\sqrt{\frac{\pi \ln \frac{m}{\tau}}{N(a_0)-k}}$
    \IF{$\tp{\hat a = a_0}$ and $\tp{\gamma < \gamma_0 }$ } 
        \STATE Set $\Delta_{\vf} \gets \frac{3k}{2N(a_0)}$, $\sigma \gets \Delta_{\vf} \gamma$ 
        \STATE Sample $\nu \in \bb R$ from $\+N(0,\sigma^2)$   
        \STATE Set $\wh \vf_{(a_0)}(D') \gets \wh \vf_{(a_0)}(D') + \nu$ \textcolor{gray}{\COMMENT{Gaussian Mechanism}}
    \ELSE 
        \STATE Set $N'(a_0) \gets N(a_0) - k$  
        \STATE Compute the new empirical mean reward  $\hat \mu'(a_0) \gets \frac{\hat\mu(a_0)\cdot N(a_0) - \sum_{(a_0,r_i) \in U} r_i}{N'(a_0)}$
        \STATE Compute the new penalty $b'(a_0) \gets \sqrt{\frac{\ln \frac{ m}{\tau}}{2N'(a_0)}}$ 
        \STATE Set $\wh \vf_{(a_0)}(D') \gets \hat \mu'(a_0) - b'(a_0)$  \textcolor{gray}{\COMMENT{Rollback}}
    \ENDIF    
\end{algorithmic}
{\bfseries Output:} $\hat a' = \argmax_{a} \wh \vf_{(a)}(D')$ 
\end{algorithm}

\begin{theorem} \label{thm:unlearn_fixed_single}
    \Cref{alg:single} and \Cref{alg:LCB} are $(\eps,\delta)$-unlearning under both $\+M_f$ and $\+M_d$.
\end{theorem}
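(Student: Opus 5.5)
We plan to prove \Cref{thm:unlearn_fixed_single} by fixing $D$, $U\subseteq D$ (all samples from $a_0$, $|U|=k$), and comparing the two processes of \Cref{def:unlearning}. We write $P_1$ for the output distribution of $\pi'(U,\pi(D),T(D))$ and $P_2$ for that of $\pi'(\emptyset,\pi(D'),T(D'))$, where $D'=D\setminus U$. The argument will be pointwise in the realized rewards, so it applies to both $\+M_f$ and $\+M_d$. The dataset model only affects how $D$ is drawn; \Cref{def:unlearning} quantifies over every $D$, so nothing model-specific is needed.

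The key fact we rely on is that the empty-request algorithm in \Cref{app:algo} mirrors \Cref{alg:single}. Using the stored information about the fictitious request, it reads the same branch condition ($\hat a=a_0$ computed from the original $D$, and $\gamma<\gamma_0$). In the Gaussian branch it releases $\wh\vf(D')+\nu e_{a_0}$ with the same $\sigma$. In the rollback branch it outputs $\argmax_a\wh\vf_{(a)}(D')$, which is exactly the LCB on $D'$.

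Given this, the proof splits into two cases.

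\emph{Rollback branch.} \Cref{alg:single} recomputes $\hat\mu'(a_0)$ and $b'(a_0)$ from $D'$ exactly. Arms $a\ne a_0$ are untouched. So $\wh\vf(D')$ as computed by \Cref{alg:single} equals the true LCB vector on $D'$, and both processes output the same deterministic arm. Hence $P_1=P_2$, and the two inequalities hold trivially (with $\eps=0$, $\delta=0$).

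\emph{Gaussian branch.} The two released vectors differ only in coordinate $a_0$. Process 1 releases $\wh\vf_{(a_0)}(D)+\nu$; process 2 releases $\wh\vf_{(a_0)}(D')+\nu'$. We bound the sensitivity $|\wh\vf_{(a_0)}(D)-\wh\vf_{(a_0)}(D')|\le \frac{3k}{2N(a_0)}=\Delta_{\vf}$ and then invoke the standard Gaussian mechanism guarantee. With $\sigma=\Delta_{\vf}\sqrt{2\ln(1.25/\delta)}/\eps$, it gives $(\eps,\delta)$-indistinguishability of the noisy vectors in both directions. Since the output arm is the argmax of the released vector, post-processing transfers this to the output arm for every $A\subseteq\+A$.

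\emph{Sensitivity bound (main obstacle).} Write $n=N(a_0)$. The mean term changes by at most
\[
\Bigl|\tfrac{S}{n}-\tfrac{S-S_U}{n-k}\Bigr| = \Bigl|\tfrac{nS_U-kS}{n(n-k)}\Bigr| .
\]
With rewards in $[0,1]$, $S\le n$ and $S_U\le k$, and $S-S_U\le n-k$, this equals $\bigl|\tfrac{S_U}{n}-\tfrac{k}{n}\cdot\tfrac{S-S_U}{n-k}\bigr|\le \tfrac{k}{n}$.

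The penalty changes by
\[
\sqrt{\tfrac{L}{2}}\Bigl(\tfrac{1}{\sqrt{n-k}}-\tfrac1{\sqrt n}\Bigr),\qquad L=\ln\tfrac m\tau,
\]
and we must show this is at most $\frac{k}{2n}$. This is where $\gamma<\gamma_0$ and $\hat a=a_0$ are needed. The natural route is to use $\hat a=a_0$ (so $a_0$ beat competitors in LCB, hence $n$ is large relative to $L$) and the regime defined by $\gamma_0$ to guarantee $\sqrt{L/(n-k)}$ is small enough that the penalty difference, roughly $\frac{k}{2}\sqrt{L/2}\,(n-k)^{-3/2}$, fits within $\frac{k}{2n}$.

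If this cannot be derived from the branch condition, the fallback is to bound the penalty difference directly against $\frac{k}{2n}$ using $n-k\ge1$. Failing that, we would treat the penalty shift as a deterministic offset known from stored $\wh\vN$ and $k$, which both processes apply identically. In that case only the mean difference of at most $\frac{k}{n}\le\Delta_{\vf}$ matters. We expect this accounting to be the delicate part; the rest is routine application of the Gaussian mechanism and post-processing.
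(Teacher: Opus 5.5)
Your architecture matches the paper's: identical outputs in the rollback branch, and the Gaussian mechanism plus post-processing in the Gaussian branch. Your bound of $k/N(a_0)$ on the mean term is also correct. But there is a genuine gap at exactly the step you flagged: none of your three routes controls the penalty term, because what is missing is a hypothesis, not an argument.

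The paper's proof goes through \Cref{lem:LCB_sensi}, which assumes $k \le N(a_0)-\sqrt{N(a_0)\ln\frac{m}{\tau}/2}$. This is the standing condition of \Cref{thm:fixed_single_ub} and \Cref{thm:dist_single_ub}, left implicit in the displayed statement; in your notation it reads $n-k\ge\sqrt{nL/2}$. With it, the penalty gap is one line:
\[
\sqrt{\tfrac{L}{2}}\Bigl(\tfrac{1}{\sqrt{n-k}}-\tfrac{1}{\sqrt{n}}\Bigr)=\sqrt{\tfrac{L}{2}}\,\frac{k}{\sqrt{n}\sqrt{n-k}\,(\sqrt{n}+\sqrt{n-k})}\le\frac{k\sqrt{L}}{2\sqrt{2n}\,(n-k)}\le\frac{k}{2n}.
\]
Without this hypothesis, the bound $\Delta_{\mathbf{f}}\le\frac{3k}{2N(a_0)}$ is false. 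Then $\sigma=\frac{3k}{2N(a_0)}\gamma$ is not a valid calibration, and the Gaussian-mechanism step does not go through.

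Each of your routes fails for a concrete reason.

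(i) The branch condition points the wrong way. $\gamma<\gamma_0$ is equivalent to $n-k<\frac{16\pi L}{9\gamma^2}$, which is an \emph{upper} bound on $n-k$. So the Gaussian branch is entered precisely when $n-k$ is small relative to $L$, which is where the penalty gap is largest. Likewise, $\hat a=a_0$ only compares LCBs across arms. It holds, for example, when all arms share the same small count and $a_0$ has the largest empirical mean, so it gives no lower bound on $n$.

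(ii) The bound $n-k\ge 1$ is far too weak. At $n-k=1$ with $n$ large, the gap is close to $\sqrt{L/2}$, while $\frac{k}{2n}<\frac12$. And $L>\frac12$ in any nontrivial instance; this always holds under $\tau=1/N$.

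(iii) The penalty shift cannot be discarded as a common known offset. Process 1 releases $\hat\mu(a_0)-b(a_0)+\nu$ and process 2 releases $\hat\mu'(a_0)-b'(a_0)+\nu'$. The two Gaussians' means therefore differ by $(\hat\mu(a_0)-\hat\mu'(a_0))+(b'(a_0)-b(a_0))$. In the worst case (deleted rewards all $1$, retained rewards all $0$) both summands are positive. A deterministic offset cancels only if both processes apply the \emph{same} one, and here they apply different ones. Making them equal, e.g.\ by releasing $\hat\mu(a_0)-b'(a_0)+\nu$ in the Gaussian branch, would change \Cref{alg:single} rather than prove the stated theorem.

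The fix is to import the hypothesis on $k$ explicitly.
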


To more finely characterize how the sample counts of the unlearned arm $a_0$ and the optimal arm $a^*$ in $D$ affect sub-optimality, we first study $\+M_f$, where the per-arm sample counts are fixed. We provide an upper bound of $SubOpt(N,N_*,k,\eps,\delta)$.

\begin{theorem}[$\+M_f$] \label{thm:fixed_single_ub}
Consider any offline dataset $D(\vN)$, any unlearning request $U$ from $a_0$. $N(a^*)\ge N_*, \abs{U} = k$. When $k \leq  N(a_0) - \sqrt{\frac{N(a_0) \ln \frac{m}{\tau}}{2}}$, by setting $\tau = \frac{1}{N}$, the arm $\hat a'$ returned by \Cref{alg:single} satisfies when $\gamma < \gamma_0 = \frac{4}{3}\sqrt{\frac{\pi \ln (Nm)}{N(a_0)-k}}$, \, $SubOpt(\vN,N_*,k,\eps,\delta) =$
\begin{align*}
    O\tp{\max\set{\sqrt{\frac{\ln \tp{N m} }{N(a_0)} } + \frac{k \gamma}{N(a_0)}, \sqrt{\frac{\ln \tp{N m} }{N_*}} }}. 
\end{align*}  
while when $\gamma \ge \gamma_0$, \, $SubOpt(\vN,N_*,k,\eps,\delta) =$
 \begin{align*}
    O\tp{\max\set{\sqrt{\frac{\ln \tp{N m} }{N(a_0) - k}}, \sqrt{\frac{\ln \tp{N m} }{N_*}} }}. 
\end{align*}    
\end{theorem}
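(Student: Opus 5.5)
We analyze the two branches of \Cref{alg:single} separately, after conditioning on a good concentration event. Let $\+E$ be the event that for every arm $a$ and for both datasets $D$ and $D'=D\setminus U$, the bound $\abs{\hat\mu(a)-\mu(a)}\le \sqrt{\ln(m/\tau)/(2N(a))}$ holds. Here the counts are $N(a)$, respectively $N'(a)$, and $N'(a)=N(a)$ for every $a\ne a_0$. Since $U$ is chosen independently of rewards, both empirical means are unbiased averages of i.i.d. $[0,1]$ variables. Hoeffding plus a union bound over at most $2m$ estimates then gives $\Pr{\+E^c}\le 4\tau$ up to constants, so the sub-optimality contributes at most $O(\tau)=O(1/N)$ off $\+E$. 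On $\+E$, every LCB is a lower bound on its mean, and the LCB of arm $a$ is at least $\mu(a)-2b(a)$. This is exactly the pessimism argument behind \Cref{lem:ub_LCB}.

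\textbf{Rollback branch.} This covers $\hat a\ne a_0$, or $\gamma\ge\gamma_0$. The output is the LCB-argmax on $D'$, so on $\+E$ we have $\mu(\hat a')\ge \hat\mu'(\hat a')-b'(\hat a')\ge \hat\mu'(a^*)-b'(a^*)\ge \mu(a^*)-2b'(a^*)$. Hence the gap is at most $2\sqrt{\ln(m/\tau)/(2N'(a^*))}$.
\begin{itemize}
\item If $a^*\ne a_0$, then $N'(a^*)=N(a^*)\ge N_*$, giving $O(\sqrt{\ln(Nm)/N_*})$.
\item If $a^*=a_0$, then $N'(a^*)=N(a_0)-k$, giving $O(\sqrt{\ln(Nm)/(N(a_0)-k)})$.
\end{itemize}
This already yields the bound claimed for $\gamma\ge\gamma_0$.

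For $\gamma<\gamma_0$ and $\hat a\ne a_0$, we still need the first expression, which has no $N(a_0)-k$ term. There are two subcases.
\begin{itemize}
\item If $a^*\ne a_0$, the bound $\sqrt{\ln/N_*}$ suffices.
\item If $a^*=a_0$, use $\gamma<\gamma_0$, i.e.\ $\sqrt{\ln/(N(a_0)-k)}\lesssim 1/\gamma$. One option is to compare via the LCB on $D$. Since $\hat a\ne a_0=a^*$, the learned arm beat $a^*$ on $D$. Removing samples from $a_0$ only lowers or perturbs $a_0$'s LCB, so $\hat a'$ is either $\hat a$ or another arm whose LCB on $D$ dominates. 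Then $\mu(a^*)-\mu(\hat a')\le 2b(a^*)$ evaluated on $D$, which is $O(\sqrt{\ln/N(a_0)})$.
\end{itemize}
I expect the second option to need care, because $\hat\mu'(a_0)$ may increase after deletion. If it does not go through cleanly, I would fall back on the stated assumption $k\le N(a_0)-\sqrt{N(a_0)\ln(m/\tau)/2}$ to control $N(a_0)-k$ relative to $N(a_0)$.

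\textbf{Gaussian branch.} Here $\hat a=a_0$ and $\gamma<\gamma_0$. All LCBs except $a_0$'s are taken from $D$, and $a_0$'s LCB is $\wh\vf_{(a_0)}(D)+\nu$ with $\nu\sim\+N(0,\sigma^2)$ and $\sigma=\frac{3k\gamma}{2N(a_0)}$. On $\+E$, all non-$a_0$ LCBs are valid lower bounds of their means. For the noisy arm, $\abs{\nu}\le \sigma\sqrt{2\ln(1/\tau)}$ with probability $1-\tau$, though using $\E{\abs{\nu}}=\sigma\sqrt{2/\pi}$ directly keeps the clean form $k\gamma/N(a_0)$.

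Split on the identity of $\hat a'$ and $a^*$:
\begin{itemize}
\item If $\hat a'=a_0$: when $a^*\ne a_0$, the argmax gives $\mu(a^*)-2b(a^*)\le \wh\vf_{(a^*)}\le \wh\vf_{(a_0)}(D)+\nu\le \mu(a_0)+\nu$. When $a^*=a_0$, the loss is zero.
\item If $\hat a'\ne a_0$: then $\mu(\hat a')\ge \wh\vf_{(\hat a')}\ge \wh\vf_{(a^*)}$ if $a^*\ne a_0$. If $a^*=a_0$, then $\mu(\hat a')\ge \wh\vf_{(a_0)}(D)+\nu\ge \mu(a_0)-2b(a_0)+\nu$.
\end{itemize}
Taking expectations, the gap is $O\big(\sqrt{\ln(Nm)/N(a_0)}+\sqrt{\ln(Nm)/N_*}+k\gamma/N(a_0)\big)$, which gives the first bound.

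The main obstacle is the rollback subcase $\gamma<\gamma_0$, $\hat a\ne a_0=a^*$. Avoiding a $\sqrt{1/(N(a_0)-k)}$ term there requires relating the post-deletion LCB of $a_0$ to its pre-deletion value. My first attempt would be a Hoeffding bound on the retained average, combined with the sample-size assumption on $k$ and the $\gamma<\gamma_0$ condition, which converts $\sqrt{\ln/(N(a_0)-k)}$ into at most a constant times $1/\gamma\cdot\ldots$. If that fails, I would check whether the max in the claim can absorb the term after all.
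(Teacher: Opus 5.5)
Your overall structure matches the paper's. You use a Hoeffding good event that also covers the retained average of $a_0$ (relying on $U$ being independent of rewards), split on which branch runs and on whether $a_0=a^*$, apply the pessimism inequality $\mu(\hat a')\ge\mu(a^*)-2b(a^*)$, and bound the noise by $\mathbb{E}|\nu|=O(k\gamma/N(a_0))$. The $\gamma\ge\gamma_0$ bound and the Gaussian branch go through as written. In the Gaussian branch with $a^*\ne a_0$ you carry an extra $\mathbb{E}|\nu|$ that the paper avoids: when $\hat a'=a_0$ the output coincides with $\hat a$, whose loss is already at most $2b(a^*)$. This is harmless, because $\sqrt{\ln(Nm)/N_*}+k\gamma/N(a_0)$ is at most twice the claimed maximum.

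The gap is the subcase $\gamma<\gamma_0$, $\hat a\ne a_0=a^*$, which you leave open as the ``main obstacle.'' It is not an obstacle, and the possibility that $\hat\mu'(a_0)$ increases does not matter. Rollback changes only the coordinate of $a_0$, so every $a\ne a_0$ keeps its $D$-score and $\hat a$ still dominates all of them. Hence $\hat a'\in\{\hat a,a_0\}$: the ``other arm'' in your sketch can only be $a_0$, and what matters is its score on $D'$, not on $D$. If $a_0$ wins, then $\hat a'=a^*$ and the loss is $0$. Otherwise $\hat a'=\hat a$, and on the good event $\mu(\hat a)\ge\widehat{f}_{(\hat a)}(D)\ge\widehat{f}_{(a^*)}(D)\ge\mu(a^*)-2b(a^*)$, so the loss is at most $\sqrt{2\ln(Nm)/N(a_0)}$. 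This pointwise ``no worse than the learner'' comparison is exactly how the paper closes the case.

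Your fallback would fail. The condition $\gamma<\gamma_0$ rearranges to $\sqrt{\ln(Nm)/(N(a_0)-k)}>\frac{3}{4\sqrt{\pi}}\gamma$. That is a \emph{lower} bound on the rollback term (which is why noise is chosen in this regime), not a bound of the form $\lesssim 1/\gamma$. Nor can the term be absorbed into the maximum. Take equal counts, $N_*=N(a_0)$, $\gamma\to 0$, and $N(a_0)-k$ near the minimum $\sqrt{N(a_0)\ln(Nm)/2}$ allowed by the hypothesis. Then the rollback term is of order $(\ln(Nm)/N(a_0))^{1/4}$, while the claimed bound is of order $\sqrt{\ln(Nm)/N(a_0)}$.
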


\begin{remark}
Our proof reveals that when $a_0 = a^*$, the performance of \Cref{alg:single} is guaranteed by the learning algorithm (\Cref{alg:LCB}), corresponding to the second term in the upper bound. When $a_0 \neq a^*$, the performance of \Cref{alg:single} is the minimum of the Gaussian mechanism and rollback guarantees, with the dominating term switching as the privacy constraint $\gamma$ varies; this trade-off is captured by the first term in the upper bound. The threshold $\gamma_0$ is defined by comparing the two corresponding bounds, namely the value at which they coincide, and thus serves as the switching point between the two regimes.
\end{remark}

We further consider $\+M_d$, where the per-arm sample counts can vary every time. Despite this additional randomness, \Cref{alg:single} continues to achieve good performance as under $\+M_f$. The resulting bounds depend on the $N$, $k$, and the coverage parameter $C^*$, rather than on a fixed realization of the sample-count vector. Again we give the upper bound of $SubOpt(N,C^*,k,\eps,\delta)$. 

\begin{theorem}[$\+M_d$] \label{thm:dist_single_ub}
Consider any offline dataset $D\sim \+D^N$, where the behavior policy satisfies $d(a^*) \ge \frac{1}{C^*}$ for some $C^* > 1$, any unlearning request $U$ from $a_0$. $\abs{U} = k$. When $N > 8C^*\ln N$ and $k \leq  N(a_0) - \sqrt{\frac{N(a_0) \ln \frac{m}{\tau}}{2}}$, by setting $\tau = \frac{1}{N}$, the arm $\hat a'$ returned by \Cref{alg:single} satisfies 
\begin{align*}
    &SubOpt(N,C^*,k,\eps,\delta) = O\Bigg( \min\Bigg\{  \\
    &\sqrt{\frac{C^*\ln \tp{N m} }{\max\set{1, N - 2kC^*}}}, \sqrt{\frac{C^*\ln \tp{N m} }{N}} + \frac{ k \gamma C^*}{N}  \Bigg \} \Bigg).
\end{align*}   
\end{theorem}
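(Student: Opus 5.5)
The plan is to reduce \Cref{thm:dist_single_ub} to the fixed-sample analysis behind \Cref{thm:fixed_single_ub}. We condition on the realized count vector $\wh\vN$ and then control the counts that appear in that bound, namely $N(a^*)$ and $N(a_0)-k$, by concentration under the behavior policy $d$.

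First, a multiplicative Chernoff bound gives $N(a^*)\ge N/(2C^*)$ with probability at least $1-e^{-N/(8C^*)}$, since $\E{N(a^*)}\ge N/C^*$. The assumption $N>8C^*\ln N$ makes this failure probability at most $1/N$. On the failure event the sub-optimality is at most $1$, so it contributes $O(1/N)$, which is negligible. On the good event we apply the fixed-sample argument with $N_* = N/(2C^*)$.

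Next we bound each term of the minimum separately. The expected sub-optimality of \Cref{alg:single} is at most the better of its two branch guarantees, as in the remark after \Cref{thm:fixed_single_ub}, so it suffices to show the rate is dominated by each expression in turn.

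\emph{Rollback term.} The retained dataset $D'=D\setminus U$ has $N-k$ points. For $a_0\neq a^*$, removing samples from $a_0$ does not reduce $N(a^*)$, so LCB on $D'$ still gives $O(\sqrt{C^*\ln(Nm)/N})$. For $a_0=a^*$ we have $N'(a^*)=N(a^*)-k\ge N/(2C^*)-k = (N-2kC^*)/(2C^*)$. When this is positive, the LCB bound of \Cref{lem:ub_LCB} in its $\+M_f$ form, with $N_*$ replaced by this count, gives $\sqrt{2C^*\ln(Nm)/(N-2kC^*)}$. Otherwise the trivial bound $1\le\sqrt{C^*\ln(Nm)}$ holds, which is where $\max\{1,\cdot\}$ comes from. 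The condition $k\le N(a_0)-\sqrt{N(a_0)\ln(m/\tau)/2}$ keeps the rollback penalty well-defined.

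\emph{Gaussian term.} When the Gaussian branch is taken and $a_0=a^*$, the noise has standard deviation $\sigma=3k\gamma/(2N(a^*))\le 3k\gamma C^*/N$ on the good event. We then run the standard perturbed-LCB argument. On the concentration event of \Cref{lem:ub_LCB}, $\mu(a^*)-\mu(\hat a')\le 2b(a^*)+|\nu|$, and $\E{|\nu|}=\sigma\sqrt{2/\pi}$. This yields $O(\sqrt{C^*\ln(Nm)/N}+k\gamma C^*/N)$. When $a_0\neq a^*$, the Gaussian branch only fires if $\hat a=a_0$. Perturbing a suboptimal arm's LCB costs at most $|\nu|$ plus the LCB error, and $N(a_0)\ge N(\hat a)$ is not controlled here.

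\emph{Main obstacle.} The hardest case is $a_0\ne a^*$ with $\hat a=a_0$ in the Gaussian branch, because there $\sigma$ scales with $1/N(a_0)$ rather than $1/N(a^*)$. I would handle it as follows. Since $a_0$ won the LCB comparison, $\hat\mu(a_0)-b(a_0)\ge \hat\mu(a^*)-b(a^*)$. The branch condition $\gamma<\gamma_0$ gives $k\gamma/N(a_0)\lesssim \sqrt{\ln(Nm)/N(a_0)}\cdot k/\sqrt{N(a_0)-k}$. I expect $a_0$ winning to force $b(a_0)\lesssim 1$, that is, $N(a_0)$ cannot be tiny relative to $N(a^*)$. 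Concretely, $\mu(a_0)\ge\mu(a^*)-2b(a^*)$ implies the sub-optimality is already $O(b(a^*))$. The noise can then only cause a switch to an arm whose LCB exceeds $\mathrm{LCB}(a_0)-|\nu|$, so the loss is $O(b(a^*)+\E{|\nu|})$.

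It then remains to convert $\E{|\nu|}$ into either of the two displayed terms. If $\E{|\nu|}$ is too large, the switching rule chooses rollback instead, because $\gamma_0$ is defined as the crossover point. Verifying this crossover inequality in the random-count setting is the delicate step. Finally, taking $\tau=1/N$ and adding the $O(1/N)$ failure-probability terms gives the stated minimum.
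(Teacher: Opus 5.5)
Your reduction to fixed counts via $N(a^*)\ge N/(2C^*)$ and your Gaussian-branch computation for $a_0=a^*$ match the paper. However, the case analysis has a genuine gap exactly where you locate the main obstacle. Take $a_0\neq a^*$ with $\hat a=a_0$ and $\gamma<\gamma_0$. Your bound $O(b(a^*)+\mathbb{E}|\nu|)$ with $\sigma=3k\gamma/(2N(a_0))$ cannot be converted into either displayed term, because nothing ties $N(a_0)$ to $N/C^*$ when $a_0\neq a^*$. The switching rule does not rescue this. The threshold $\gamma_0$ is calibrated to $N(a_0)-k$, so $\gamma<\gamma_0$ only yields $\mathbb{E}|\nu|\lesssim\sqrt{\ln(Nm)/(N(a_0)-k)}$, which can be far larger than $\sqrt{C^*\ln(Nm)/N}$.

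The missing observation is that when $a_0\neq a^*$, the score $\mathrm{LCB}(a^*)$ is never modified. On the concentration event there are two possibilities:
\begin{itemize}
\item If $\hat a'=a_0$, then $\mu(a_0)\ge\mathrm{LCB}(a_0)\ge\mathrm{LCB}(a^*)\ge\mu(a^*)-2b(a^*)$, because $a_0=\hat a$ won originally.
\item If $\hat a'=a\neq a_0$, then $\mathrm{LCB}(a)\ge\mathrm{LCB}(a^*)$, because $a^*$ is still competing with its unperturbed score. Hence again $\mu(a)\ge\mu(a^*)-2b(a^*)$.
\end{itemize}
So the noise contributes nothing. The same argument, using concentration of the rolled-back estimate, covers rollback on $a_0$. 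The whole case $a_0\neq a^*$ therefore costs only $O(\sqrt{C^*\ln(Nm)/N})$, which lies below both terms of the minimum. This is how the paper disposes of it, and it removes the need for your ``convert $\mathbb{E}|\nu|$'' step entirely.

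A second gap is the case $a_0=a^*$ with $\hat a\neq a_0$. Here rollback is forced whatever $\gamma$ is, so your premise that the algorithm takes the better of its two branch guarantees does not apply. Your bound for this case, $\sqrt{2\ln(Nm)/(N(a^*)-k)}$, is not dominated by the Gaussian-side term $\sqrt{C^*\ln(Nm)/N}+k\gamma C^*/N$. This fails, for example, when $\gamma$ is small and $k$ is close to $N(a^*)$, which the hypothesis on $k$ allows. The paper instead notes that rollback changes only the score of $a^*$. Hence $\hat a'\in\{\hat a,a^*\}$, and the loss is pointwise at most that of LCB on $D$.

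Only the remaining case $\hat a=a_0=a^*$ needs the crossover, and it is not delicate. Write $L=\ln(m/\tau)$ and define the two branch bounds $R_g=\sqrt{2L/N(a_0)}+\sigma/\sqrt{2\pi}$ and $R_r=\sqrt{2L/(N(a_0)-k)}$. The threshold $\gamma_0$ and both bounds are functions of the realized $N(a_0)$. A direct computation shows that $\gamma<\gamma_0$ implies $R_g\le 2R_r$, and $\gamma\ge\gamma_0$ implies $R_r\le R_g$. Both bounds decrease in $N(a_0)$, so substituting $N(a_0)=N(a^*)\ge N/(2C^*)$ yields the stated minimum.
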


\begin{remark}
Under $\+M_d$, similarly, when $a_0=a^*$, the performance is governed by \Cref{alg:LCB}, corresponding to the second term in the upper bound. When $a_0\neq a^*$, the performance is the better of the Gaussian mechanism and rollback guarantees, and the dominating term switches with the privacy constraint $\gamma$, which is captured by the first term.
\end{remark}

\subsection{Lower Bounds} \label{sec:algo_single_lb}
In this section, we present the lower bounds under both models for single-source unlearning. The proof idea is to utilize the definition of $(\eps,\delta)$-UL and apply Le Cam's method on two similar instances. Detailed proofs are given in \Cref{app:algo_single_lb}. 

\begin{theorem}[$\+M_f$] \label{thm:fixed_single_lb}
Let $\eps \ge 0$, $\delta \leq \frac{1}{8\sqrt{e}}$. For any unlearning request $U$ from $a_0$, the lower bound of sub-optimality satisfies $SubOpt(\vN,N_*,k,\eps,\delta) = \Omega\tp{e^{-\eps}\sqrt{\frac{1}{N(a_0)-k}}}$.
\end{theorem}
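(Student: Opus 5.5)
We argue by reducing the unlearning problem to a retrained learner and then applying Le Cam's two-point method. The plan is to lower bound the sub-optimality of the pair $(\pi,\pi')$ on the request $U$ by the sub-optimality of the "reference" output $\pi'(\emptyset,\pi(D'),T(D'))$ with $D'=D\setminus U$, which depends only on $D'$. In $D'$, arm $a_0$ has exactly $N(a_0)-k$ samples.

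\textbf{Reduction via $(\eps,\delta)$-UL.} Fix any pair $(\pi,\pi')$ satisfying \Cref{def:unlearning}. Take the event $A=\set{\hat a'\neq a^*}$ (or more generally a set of arms with gap at least $\Delta$). The second inequality of \Cref{def:unlearning} gives
\[
\Pr{\pi'(U,\pi(D),T(D))\in A}\ge e^{-\eps}\bigl(\Pr{\pi'(\emptyset,\pi(D'),T(D'))\in A}-\delta\bigr).
\]
The composed map $D'\mapsto \pi'(\emptyset,\pi(D'),T(D'))$ is a (randomized) offline learner $\tilde\pi$ that sees only $D'$. It therefore suffices to show that every learner on $D'$ errs with constant probability on one of two instances. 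With $\delta\le \frac{1}{8\sqrt e}$ absorbed, this yields an $\Omega(e^{-\eps})$ probability of picking an arm with gap $\Delta$, and hence expected sub-optimality $\Omega(e^{-\eps}\Delta)$.

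\textbf{Two-point construction.} Use two arms, $a_0$ and a reference arm $a_1$. Choose the sample counts so that $a_1$ carries the remaining samples. For the $N_*$ condition, either instance's optimal arm must meet the coverage requirement, so set $N(a_1)$ large (or make $a_1$'s mean known, e.g.\ deterministic rewards at $1/2$). Instance $P_+$ has $a_0\sim\mathrm{Bern}(1/2+\Delta)$ and instance $P_-$ has $a_0\sim\mathrm{Bern}(1/2-\Delta)$, with $a_1\sim\mathrm{Bern}(1/2)$ in both. The optimal arm differs between them, and any output arm incurs gap $\ge\Delta$ on at least one instance. The reduced datasets $D'$ differ only in the $N(a_0)-k$ rewards of $a_0$, so
\[
\mathrm{KL}(P_+^{D'}\,\|\,P_-^{D'})\le (N(a_0)-k)\cdot c\Delta^2 .
\]
Since $U$ is chosen independently of rewards, the deleted rewards carry no extra information for $\tilde\pi$. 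Take $\Delta = c'/\sqrt{N(a_0)-k}$ so that this KL is at most a constant. By Le Cam (Bretagnolle–Huber), $\max_\pm \Pr{\tilde\pi \text{ errs}}\ge \frac14 e^{-\mathrm{KL}}\ge \frac{1}{4\sqrt e}$ when $\mathrm{KL}\le 1/2$. This is exactly where the constant $\frac{1}{8\sqrt e}$ comes from: subtracting $\delta\le \frac{1}{8\sqrt e}$ leaves $\frac{1}{8\sqrt e}$.

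\textbf{Combining and the main obstacle.} Combining the two steps, on the worse instance
\[
\E{\mu(a^*)-\mu(\hat a')}\ge \Delta\cdot e^{-\eps}\cdot\frac{1}{8\sqrt e}=\Omega\tp{e^{-\eps}\sqrt{\tfrac{1}{N(a_0)-k}}}.
\]
Taking the supremum over $D(\vN),U$ and the infimum over UL pairs gives the claim. The delicate point is the first step. The UL inequality must be applied to the same event on the same instance, and the error event for $\tilde\pi$ must be handled carefully under the max over two instances. We will apply Le Cam to $\tilde\pi$ first to pick the bad instance, and only then transfer that instance's probability to $\pi'$. We also need the instances to be consistent with the fixed $\vN$ and the $N(a^*)\ge N_*$ constraint, which the known-mean reference arm handles.
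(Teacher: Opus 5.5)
Your proposal is correct and follows essentially the same route as the paper's proof. You transfer the error event through the second $(\varepsilon,\delta)$-UL inequality to a learner that sees only $D\setminus U$, use the Bernoulli pair $(\tfrac12\pm\Delta,\tfrac12)$ with $U$ chosen independently of rewards, and apply Bretagnolle--Huber with $\mathrm{KL}\le 10\Delta^2(N(a_0)-k)\le\tfrac12$, taking $\Delta\asymp 1/\sqrt{N(a_0)-k}$ capped at a constant such as $\tfrac15$. Then $\delta\le\frac{1}{8\sqrt e}$ halves the $\frac{1}{4\sqrt e}$ term, exactly as in the paper.

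The differences are cosmetic. You pick the worse instance first, where the paper averages the two. Your ``known-mean reference arm'' remark does not actually address the $N(a^*)\ge N_*$ condition, which is a sample-count requirement on both instances' optimal arms, but the paper's proof does not verify that condition either.
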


\begin{remark}
If $N_* \ge N(a_0) - k$, our upper bound is dominated by the first term. In this regime, the lower bound matches the upper bound up to a logarithmic factor when $\eps$ is relatively small. 
\end{remark}

\begin{theorem}[$\+M_d$] \label{thm:dist_single_lb}
Let $\eps \ge 0$, $\delta \leq \frac{1}{8\sqrt{e}}$. For any single-source unlearning request $U$, when $C^* \in [2, \infty)$, the lower bound of sub-optimality satisfies
$SubOpt(N,C^*,k,\eps,\delta) = \Omega\tp{e^{-\eps}\sqrt{\frac{C^*}{N-k}}}$, while $C^*\in (1,2)$, $SubOpt(N,C^*,k,\eps,0)=$ $\Omega\tp{(2-C^*)e^{-\eps-(N-k)(2-C^*)\ln\tp{\frac{2}{C^*-1}}} }$.
\end{theorem}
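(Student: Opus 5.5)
We prove both parts of \Cref{thm:dist_single_lb} with a two-point (Le Cam) argument combined with \Cref{def:unlearning}. The unlearning guarantee lets us move from the pair $(\pi,\pi')$ run on $D$ with request $U$ to the pair run on $D'=D\setminus U$ with the empty request. This costs a multiplicative factor $e^{\eps}$ and an additive $\delta$.

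\textbf{Reduction.} Fix instance $I$, let $E$ be a set of outputs, and write $p_I(E)=\Pr{\pi'(U,\pi(D),T(D))\in E}$ and $q_I(E)=\Pr{\pi'(\emptyset,\pi(D'),T(D'))\in E}$. Then $p_I(E)\ge e^{-\eps}(q_I(E)-\delta)$. The map $D'\mapsto\pi'(\emptyset,\pi(D'),T(D'))$ is an ordinary offline learner on a dataset of size $N-k$. Since $U$ is chosen independently of the rewards, $D'\sim\+D^{N-k}$ (or we choose $U$ so that this holds). It therefore suffices to lower bound the probability that an arbitrary learner on $N-k$ i.i.d.\ samples picks a suboptimal arm. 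The condition $\delta\le \frac{1}{8\sqrt e}$ is there to absorb the additive $\delta$.

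\textbf{Case $C^*\ge2$.} Take two arms with $d(1)=1/C^*$ and $d(2)=1-1/C^*$. Use Bernoulli rewards with $\mu(2)=1/2$ fixed. In instance $I_\pm$, set $\mu(1)=1/2\pm\Delta$. The optimal arm is $1$ under $I_+$, so coverage holds, and $2$ under $I_-$, which is also fine since $d(2)\ge 1/C^*$. Arm $2$'s data are identical across the two instances. The KL divergence between the laws of $D'$ is therefore
\[
(N-k)\,d(1)\,\mathrm{KL}\bigl(\mathrm{Ber}(\tfrac12+\Delta)\,\|\,\mathrm{Ber}(\tfrac12-\Delta)\bigr)=O\bigl((N-k)\Delta^2/C^*\bigr).
\]
Choose $\Delta=c\sqrt{C^*/(N-k)}$ so that the KL is at most $1/2$. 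The Bretagnolle--Huber inequality then gives $q_+(\{2\})+q_-(\{1\})\ge \frac12 e^{-1/2}$. Hence one of the two instances has $q\ge \frac14e^{-1/2}\ge 2\delta$, and so $p\ge e^{-\eps}\delta$, which is a constant times $e^{-\eps}$. Multiplying by the gap $\Delta$ gives $\Omega(e^{-\eps}\sqrt{C^*/(N-k)})$.

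\textbf{Case $C^*\in(1,2)$, $\delta=0$.} Here $d(a^*)>1/2$, and the hardness comes from the event that the data contain no informative samples. Take arms $1,2$ with the same $d$ in both instances: $d(1)=1/C^*$. Let $d(2)$ be small, and place the leftover mass $1-1/C^*-d(2)$ on uninformative arms with reward $0$. Swap which arm has reward $1$ and which has reward $1/2$ (or a similar choice), choosing parameters so that the gap is of order $2-C^*$ and both instances respect $d(a^*)\ge1/C^*$. On the event that every sample of $D'$ avoids the informative arms, the two instances induce identical laws. That event has probability at least $\bigl(\tfrac{C^*-1}{2}\bigr)^{(N-k)(2-C^*)}$ after tuning, which matches the exponent $e^{-(N-k)(2-C^*)\ln\frac{2}{C^*-1}}$. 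On this event any learner errs with probability $1/2$ on one of the instances. Since $\delta=0$, the reduction gives $p\ge e^{-\eps}q$, and multiplying by the gap yields the claimed bound.

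\textbf{Main obstacle.} The hardest step is designing this $C^*\in(1,2)$ construction. The probability of the indistinguishability event must produce exactly the exponent $(N-k)(2-C^*)\ln\frac{2}{C^*-1}$ while keeping the suboptimality gap of order $2-C^*$. I would first try making arm $2$'s probability mass $2-C^*$ scaled appropriately and treating the two instances as mirror images. I would then adjust constants until the stated form appears.
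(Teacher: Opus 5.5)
Your argument for $C^*\ge 2$ is the paper's: the same pair of instances, the transfer to $D\setminus U$ through the unlearning inequality, and Bretagnolle--Huber with KL of order $(N-k)\Delta^2/C^*$. It needs three small repairs.
\begin{itemize}
\item Cap $\Delta$; the paper uses $\Delta\le\frac15$.
\item Replace ``$p\ge e^{-\eps}\delta$'', which is vacuous when $\delta$ is small, by $p\ge e^{-\eps}(q-\delta)\ge e^{-\eps}\bigl(\tfrac{1}{4\sqrt{e}}-\tfrac{1}{8\sqrt{e}}\bigr)$.
\item $D\setminus U\sim\+D^{N-k}$ requires $U$ to be a uniformly random index set independent of the arm labels (\Cref{lem:dist_equiv}). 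Independence from the rewards alone is not enough.
\end{itemize}

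The genuine gap is the case $C^*\in(1,2)$, and tuning constants cannot close it, because your construction is infeasible. With a common behavior policy and $d(1)=1/C^*>\frac12$, every other arm has $d(a)<\frac12<1/C^*$. Coverage therefore forces $a^*=1$ in both instances. The instance in which arm $2$ carries the larger reward is inadmissible. Moreover, the constant rule ``output arm $1$'' is trivially unlearning and has zero suboptimality on every admissible instance with that $d$.

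Your probability claim also fails. Arm $1$ is one of the swapped arms, so the event that $D'$ avoids all informative arms has probability at most $(1-1/C^*)^{N-k}\le 2^{-(N-k)}$. The target $\bigl(\tfrac{C^*-1}{2}\bigr)^{(N-k)(2-C^*)}$ instead has a per-sample factor tending to $1$ as $C^*\to 2$.

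The missing idea is that for $C^*<2$ the majority arm of $d$ is necessarily optimal. The two hard instances must therefore differ in the behavior policy itself, and the hardness lies in identifying the majority arm, not in estimating rewards on a rare event. The paper uses mirrored instances, both of which satisfy coverage:
\begin{itemize}
\item $d_{I_1}=(\tfrac1{C^*},1-\tfrac1{C^*})$ and $\vmu_{I_1}=(\tfrac12+\Delta,\tfrac12)$;
\item $d_{I_2}=(1-\tfrac1{C^*},\tfrac1{C^*})$ and $\vmu_{I_2}=(\tfrac12,\tfrac12+\Delta)$;
\item $\Delta=\tfrac{2-C^*}{2}$.
\end{itemize}
This choice of $\Delta$ makes the reward-$0$ cells of the two joint laws coincide. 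The per-sample KL between the joint (arm, reward) laws is then $\tfrac{2-C^*}{C^*}\ln\tfrac{3-C^*}{C^*-1}\le\tfrac{2-C^*}{C^*}\ln\tfrac{2}{C^*-1}$. Bretagnolle--Huber with $\delta=0$ gives the lower bound $\tfrac{(2-C^*)e^{-\eps}}{8}\exp\bigl(-\tfrac{(N-k)(2-C^*)}{C^*}\ln\tfrac{2}{C^*-1}\bigr)$. Since $C^*>1$, this yields the stated bound.
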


\begin{remark}
When $C^*\in [2,\infty)$, the lower bound matches the upper bound up to a logarithmic factor when $\eps$ is relatively small. When $C^*\in (1,2)$, the lower bound exhibits an $e^{-N}$ dependence on $N$ , which is much smaller than the generic upper bound. This motivates us to draw on ideas from imitation learning to design new learning-unlearning pairs in \Cref{sec:algo_extension}.
\end{remark}

\subsection{Extensions}
\label{sec:algo_extension}
\paragraph{3.3.1 The Mixing Algorithm.}
To justify the rationale behind our baselines, we introduce a mixing procedure under $\+M_f$ as a control baseline (as an example, we choose $\+M_f$, we could obtain the same conclusion under $\+M_d$). The mixing procedure interpolates between pure Gaussian perturbation and pure rollback. It introduces an additional parameter $k' \in [0,k]$ that uniformly selects a subset $U'\subseteq U$ to be handled via rollback first; it then adds Gaussian noise calibrated to the sensitivity of the remaining requests $U\setminus U'$. We show that, under the same privacy constraint, this mixing procedure cannot outperform the better of the two extremes. The algorithm and the corresponding analysis are deferred to \Cref{app:algo_fixed_single_mixing}. In \Cref{sec:exp}, we also include experiments that corroborate this conclusion and use the mixing procedure as an additional baseline.

\paragraph{3.3.2 Improved Results for $C^* \in (1,2)$.}
When $C^* \in (1,2)$ is known, an imitation learning rule can yield a faster dependence on $N$ in the offline MAB setting by simply returning the most frequently selected arm in the dataset $D$ \cite{RZMJ21}. Motivated by this improvement, we design \Cref{alg:dist_single_le2} to enhance the performance under this regime, which is $(0,0)$-UL with imitation learning. We have the following upper bound. The pseudo-code and corresponding analysis are deferred to \Cref{app:algo_dist_single_le2}. Then we have the following upper bound.

\begin{theorem}[$\+M_d$] \label{thm:dist_single_le2_ub}
Consider any offline dataset $D\sim \+D^N$, where $d(a^*) \ge \frac{1}{C^*}$ for some $C^* \in (1,2)$, any unlearning request $U$ from $a_0$. $\abs{U} = k$. When $k \leq min\set{\frac{3N}{4}, \frac{(2-C^*)N}{C^*}}$, the arm $\hat a'$ returned by \Cref{alg:dist_single_le2} satisfies $SubOpt(N,C^*,k,\eps,0) = O\tp{e^{-\frac{(N-k)}{2}\ln \frac{C^*}{8(C^*-1)} + \frac{(N+k)}{2}\ln \frac{2}{C^*} }}$.
\end{theorem}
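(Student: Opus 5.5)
Since the pseudo-code of \Cref{alg:dist_single_le2} is deferred, I will work with the natural imitation-learning rule. The learner outputs the empirical mode $\hat a=\argmax_a N(a)$ and stores $T(D)=\wh\vN$. The unlearner subtracts the counts of $U$, i.e.\ sets $N'(a_0)=N(a_0)-k$, and outputs $\argmax_a N'(a)$, with a fixed deterministic tie-breaking rule. The output then depends only on the count vector of $D\setminus U$. Feeding $(U,\pi(D),T(D))$ and $(\emptyset,\pi(D\setminus U),T(D\setminus U))$ gives identical outputs, so the mechanism is $(0,0)$-UL trivially. The rewards are never used, so the reward-independence assumption on $U$ is not even needed here. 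The whole task is therefore to bound $\E{\mu(a^*)-\mu(\hat a')}\le \Pr{\hat a'\ne a^*}$, since rewards lie in $[0,1]$.

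Write $p=d(a^*)\ge 1/C^*>1/2$, so every other arm together has mass $1-p\le (C^*-1)/C^*<1/2$. The bad event $\hat a'\neq a^*$ requires some arm $a\ne a^*$ with $N'(a)\ge N'(a^*)$. I split into two cases.

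\textbf{Case $a_0\ne a^*$.} Deletion only lowers a competitor's count, so the bad event forces $N(a^*)\le N(\+A\setminus\{a^*\})$. By itself this already gives the $(1-p)$-type bound.

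\textbf{Case $a_0=a^*$.} The adversary removes $k$ copies of $a^*$, so the bad event forces $N(a^*)-k\le N(\+A\setminus\{a^*\})=N-N(a^*)$, i.e.\ $N(a^*)\le (N+k)/2$. Since $\Pr{N(a^*)\le (N+k)/2}$ dominates the first case's bound as well, it suffices to bound this probability. The hypothesis $k\le (2-C^*)N/C^*$ is exactly $(N+k)/2\le N/C^*\le pN$, so the threshold lies below the mean and we are in a lower-tail regime. The hypothesis $k\le 3N/4$ presumably keeps $N-k$ large enough for the exponent to make sense.

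To bound the tail, I sum binomial terms directly rather than using KL Chernoff, so as to match the stated form:
\[
\Pr{N(a^*)\le \tfrac{N+k}{2}}\le \sum_{j\ge (N-k)/2}\binom{N}{j}(1-p)^{j}p^{N-j}\le 2^N (1-p)^{(N-k)/2}p^{(N+k)/2}\cdot\text{(geometric factor)},
\]
where $j$ counts non-$a^*$ samples. The bound $2^N$ on the binomial coefficients and the monotonicity of $(1-p)^j p^{N-j}$ in $j$ (decreasing, since $1-p<p$) let me keep only the extreme term. Then I plug in $1-p\le (C^*-1)/C^*$ and control the $p^{(N+k)/2}$ factor using $p\ge 1/C^*$. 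Rearranging $2^N(1-p)^{(N-k)/2}\cdots$ into the form $\exp\bigl(-\tfrac{N-k}{2}\ln\tfrac{C^*}{8(C^*-1)}+\tfrac{N+k}{2}\ln\tfrac{2}{C^*}\bigr)$ absorbs the $2^N=2^{(N-k)/2}2^{(N+k)/2}$ split and the constant slack (the factor $8$ versus $4$ absorbs the geometric-sum factor and the tie-breaking).

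The main obstacle is this last algebraic matching. The term $p^{(N+k)/2}$ is increasing in $p$, so I cannot simply substitute the lower bound $1/C^*$. I will try to handle this by pairing $(1-p)^{(N-k)/2}p^{(N+k)/2}$ and maximizing over $p\in[1/C^*,1)$ under the constraint $(N+k)/2\le pN$. If that fails, I will fall back to bounding by the worst-case $p=1/C^*$, argued via a monotone coupling: increasing $d(a^*)$ stochastically increases $N(a^*)$ and so only decreases the failure probability. I expect this coupling route to be cleaner, and it yields exactly $(C^*-1)/C^*$ and $1/C^*$ in the exponent.
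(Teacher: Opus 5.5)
Your proposal is correct, and its skeleton is the paper's. You bound the loss by $\mathbb{P}(\hat a'\neq a^*)$, reduce both cases to $\mathbb{P}\bigl(N(a^*)\le\tfrac{N+k}{2}\bigr)$, and use $k\le\frac{(2-C^*)N}{C^*}$ to put this threshold at or below $N/C^*$.

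The routes differ only in the tail bound. The paper applies the KL form of the Chernoff bound with $\mathrm{Ber}(1/C^*)$. It then relaxes the two logarithms using $\frac{N+k}{2N}\ge\frac12$ and $\frac{N-k}{2N}\ge\frac18$. The second relaxation is the only use of $k\le\frac{3N}{4}$ and is where the constant $8$ comes from; it is not needed ``for the exponent to make sense.''

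Your counting bound $2^N(1-p)^{(N-k)/2}p^{(N+k)/2}$ is cruder than the unrelaxed Chernoff bound. At $p=1/C^*$, however, it gives $\bigl(\frac{2(C^*-1)}{C^*}\bigr)^{(N-k)/2}\bigl(\frac{2}{C^*}\bigr)^{(N+k)/2}$. This is smaller than the stated bound by a factor $4^{(N-k)/2}$ and does not need $k\le\frac{3N}{4}$ at all.

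The obstacle you flag dissolves at once. The map $p\mapsto p^{(N+k)/2}(1-p)^{(N-k)/2}$ is unimodal with mode $\frac{N+k}{2N}\le\frac{1}{C^*}$, so it is decreasing on $[1/C^*,1]$ and maximized at $p=1/C^*$. Your coupling argument works equally well. Either one also justifies a step the paper takes silently: plugging $1/C^*$ rather than the true $d(a^*)$ into the Chernoff bound.

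Two small fixes. First, drop the geometric factor. Bound $(1-p)^jp^{N-j}$ by its value at $j_0=\lceil\frac{N-k}{2}\rceil$, then use $\sum_j\binom{N}{j}=2^N$. If you keep the factor, it equals $\frac{p}{2p-1}\le\frac{1}{2-C^*}$, which is not a uniform constant as $C^*\to 2$. So your ``8 versus 4'' absorption argument is not right as stated: your route yields $2$, not $4$. The factor is harmless only incidentally, because the target bound exceeds $1$ whenever $C^*\ge 8/7$. Second, ties need no extra slack, since your bad event already uses $\ge$.
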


\begin{remark}
By switching the learning algorithm to an imitation learning rule and using rollback for unlearning, \Cref{thm:dist_single_le2_ub} yields a sharper upper bound with an $e^{-N}$ dependence on $N$ when $C^*\rightarrow 1$. This term matches the lower bound in \Cref{thm:dist_single_lb} when $C^* \in(1,2)$. 
\end{remark}

\paragraph{3.3.3 Multi-Source Unlearning Algorithm Under $\+M_f$.}
For multi-source unlearning under $\+M_f$, the unlearning request $U$ may involve multiple arms $a_{u_1},\ldots,a_{u_\ell}$ and can be denoted as $U=\bigcup_{i=1}^{\ell} U_i$, where $U_i$ denotes the subset of deleted samples associated with arm 
$a_{u_i}$ for $1\le i\le \ell$, and $\lvert U_i\rvert = k_i$. We further define
$N_{\min} \;=\; \min_{i\in[\ell]} N(a_{u_i}), k_{\max} \;=\; \max_{i\in[\ell]} k_i$. We design \Cref{alg:fixed_multi}, which is $(\eps,\delta)$-UL with \Cref{alg:LCB}. Its pseudo-code and corresponding analysis are deferred to \Cref{app:algo_fixed_multi}. Accordingly, the form of sub-optimality becomes $SubOpt(\vN,N_*,\{k_i\}_{i=1}^{\ell},\varepsilon,\delta)$. We now present the corresponding upper bound.

\begin{theorem}[$\+M_f$] \label{thm:fixed_multi_ub}
Consider any offline dataset $D(\vN)$, any unlearning request $U = \bigcup_{i=1}^{\ell} U_i$ where $U_i$ is selected from the data points of $a_{u_i}$. $N(a^*) \ge N_*$, $\abs{U_i} = k_i, \forall i\in[\ell]$. When $k_i \leq  N(a_{u_i}) - \sqrt{\frac{N(a_{u_i})\ln\frac{m}{\tau}}{2}}, \forall i\in [\ell]$, $k_{\max} < N_{\min}$, by setting $\tau = \frac{1}{N}$, the arm $\hat a'$ returned by \Cref{alg:fixed_multi} satisfies when $\gamma < \gamma_0' = \frac{4}{3}\sqrt{\frac{\pi \ln \tp{Nm} }{N_{\min} - k_{\max}}}$, \, $SubOpt(\vN,N_*,\set{k_i}_{i=1}^\ell,\eps,\delta) = $
\begin{align*}
    O\tp{\max\set{\sqrt{\frac{\ln \tp{Nm}}{N_{\min}}} +  \frac{k_{\max} \gamma}{N_{\min}}, \sqrt{\frac{\ln \tp{N m} }{N_*}} }}. 
\end{align*}
while when $\gamma \ge \gamma_0$, \, $SubOpt(\vN,N_*,\set{k_i}_{i=1}^\ell,\eps,\delta) =$
\begin{align*}
    O\tp{\max\set{\sqrt{\frac{\ln \tp{N m} }{N_{\min} - k_{\max}} }, \sqrt{\frac{\ln \tp{N m} }{N_*}} }}. 
\end{align*} 
\end{theorem}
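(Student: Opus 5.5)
The plan is to reduce the multi-source bound to the single-source analysis behind \Cref{thm:fixed_single_ub}, applied once per requesting arm. I expect \Cref{alg:fixed_multi} to act as \Cref{alg:single} arm by arm. When $\hat a \in \set{a_{u_i}}_{i=1}^\ell$ and $\gamma<\gamma_0'$, it perturbs the LCB of $\hat a$ with Gaussian noise calibrated to sensitivity $\frac{3k_{\max}}{2N_{\min}}$, and it rolls back every other requesting arm. Otherwise it rolls back all the $U_i$. The $(\eps,\delta)$-UL property then follows as in \Cref{thm:unlearn_fixed_single}. Only one coordinate is noised, its $\ell_\infty$ and $\ell_2$ sensitivity between $D$ and $D'$ is at most $\frac{3k_{\max}}{2N_{\min}}$, and all rolled-back coordinates coincide exactly with those computed on $D'$. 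The per-arm sensitivity bound $\frac{3k_i}{2N(a_{u_i})}$ (mean shift at most $k_i/N(a_{u_i})$, plus penalty shift controlled by the condition $k_i\le N(a_{u_i})-\sqrt{N(a_{u_i})\ln\frac m\tau/2}$) is taken from the single-source proof.

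For utility with $\tau=1/N$, I would condition on the good event $\+E$ on which every empirical mean, computed from $D$ and from $D'$, lies within its confidence radius. A union bound over $m$ arms and the two datasets gives failure probability $O(\tau)$, so it contributes $O(1/N)$. I then split into cases, as in the remark after \Cref{thm:fixed_single_ub}.

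\textbf{Case (i): $a^*$ is not a requesting arm.} The LCB of $a^*$ is untouched. On $\+E$, $\hat f(a^*)\ge \mu(a^*)-2b(a^*)$, and the standard pessimism argument bounds the suboptimality of the output arm $\hat a'$ by $2b(a^*)+(\text{upward error in }\hat f(\hat a'))$. The upward error is nonpositive for rolled-back or unrequested arms. For the noised arm it is at most $|\nu|$, whose expectation is $O(\sigma)=O(k_{\max}\gamma/N_{\min})$; the $\sqrt{\ln(Nm)/N_{\min}}$ term comes from its LCB width. This yields the first expression when $\gamma<\gamma_0'$. When $\gamma\ge\gamma_0'$, only rollback is used and the bound is $O(\sqrt{\ln(Nm)/N_*})$, or $O(\sqrt{\ln(Nm)/(N_{\min}-k_{\max})})$ through a rolled-back arm's width.

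\textbf{Case (ii): $a^*=a_{u_j}$ for some $j$.} If $a^*$ is rolled back, its LCB uses at least $N_{\min}-k_{\max}$ samples, giving the rollback term. If $a^*$ is the noised arm, the downward noise only affects its comparison with other arms. I would bound $\E{(-\nu)_+}\le\sigma$ to obtain the Gaussian term, and note that $\hat a=a^*$ already certifies, via \Cref{lem:ub_LCB}, the $\sqrt{\ln(Nm)/N_*}$ term. Taking the max of the arm-$a^*$ term and the worst requesting-arm term, and using $N(a_{u_i})\ge N_{\min}$ and $k_i\le k_{\max}$, gives the stated $\max\{\cdot,\cdot\}$ forms.

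The main obstacle is making the worst-case requesting arm the only place where $N_{\min}$ and $k_{\max}$ enter, without picking up a factor of $\ell$. The threshold $\gamma_0'$ must also be compared uniformly, so that $\frac{k_{\max}\gamma}{N_{\min}}\lesssim\sqrt{\ln(Nm)/(N_{\min}-k_{\max})}$ exactly when $\gamma<\gamma_0'$. I would handle this by showing that only one coordinate ever receives noise, so a single Gaussian tail, rather than a union over $\ell$, suffices. I would then verify the threshold algebra as in the single-source case, using $\E|\nu|=\sigma\sqrt{2/\pi}$; this is the source of the $\frac43\sqrt\pi$ constant.
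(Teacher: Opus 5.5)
Your argument has a genuine gap in the regime $\gamma<\gamma_0'$. It arises in the sub-case where $a^*=a_{u_j}$ is a requesting arm but $\hat a\neq a^*$. There \Cref{alg:fixed_multi} rolls back $a^*$, and your Case (ii) controls the loss through $a^*$'s post-rollback width, i.e.\ by $O\bigl(\sqrt{\ln(Nm)/(N_{\min}-k_{\max})}\bigr)$, plus a noise term if $\hat a$ is the noised arm. That quantity is absent from the first claimed bound and is not dominated by it. The threshold comparison you invoke runs the other way: for $\gamma<\gamma_0'$ the Gaussian expression $\sqrt{\ln(Nm)/N_{\min}}+k_{\max}\gamma/N_{\min}$ is the \emph{smaller} one. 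So taking the maximum of the arm-$a^*$ term and the worst requesting-arm term just reproduces $\max\{\sqrt{\ln(Nm)/(N_{\min}-k_{\max})},\sqrt{\ln(Nm)/N_*}\}$ in both regimes. The first bound needs exactly the fact that rolling back $a^*$ costs nothing when $\gamma<\gamma_0'$.

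The missing idea is to benchmark against $\hat a$'s pre-deletion LCB instead of $a^*$'s rolled-back one. This is the multi-source version of the single-source step ``after rolling back $a^*$ the output is either $\hat a$ or $a^*$, hence no worse than LCB.'' Since $\hat a$ maximizes $\hat f(D)$, on the good event $\hat f_{(\hat a)}(D)\ge\hat f_{(a^*)}(D)\ge\mu(a^*)-2b(a^*)$, with $b(a^*)$ computed from all $N(a^*)\ge N_{\min}$ samples. When $\gamma<\gamma_0'$, the coordinate of $\hat a$ after unlearning is either unchanged ($\hat a$ not requesting) or equal to $\hat f_{(\hat a)}(D)+\nu$ ($\hat a$ requesting, hence the unique noised arm). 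Every other coordinate, including $a^*$'s rolled-back one, is a valid lower confidence bound. If $\hat a'=\hat a$, then $\mu(\hat a)\ge\hat f_{(\hat a)}(D)\ge\mu(a^*)-2b(a^*)$. Otherwise $\mu(\hat a')\ge\hat f_{(\hat a')}(D')\ge\hat f_{(\hat a)}(D)-(-\nu)_+\ge\mu(a^*)-2b(a^*)-(-\nu)_+$. Taking expectations gives $\sqrt{2\ln(Nm)/N_{\min}}+\sigma/\sqrt{2\pi}+O(1/N)$ whether or not $a^*$ is rolled back, which is how the paper argues. The same chain shows that your Case (i) needs no noise term at all, though your looser bound there still fits inside the maximum. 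The width $\sqrt{\ln(Nm)/(N_{\min}-k_{\max})}$ should enter only when $\gamma\ge\gamma_0'$ and both $a^*$ and $\hat a$ are requesting arms. In that case $\hat a$ is itself rolled back and the benchmark is lost.
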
  

\begin{remark}
Our bound is consistent with the single-source case under $\+M_f$ as a special instance: when $\ell=1$, we have $N_{\min}=N(a_{u_1})=N(a_0)$ and $k_{\max}=k_1=k$, and the multi-source guarantee reduces to the corresponding single-source bound. Future work includes designing multi-source unlearning algorithms and deriving upper bounds under $\+M_d$, as well as establishing matching lower bounds for the multi-source setting.
\end{remark}

\begin{figure*}[t]
    \centering
    \begin{subfigure}{0.46\linewidth}
        \centering
        \includegraphics[width=\linewidth]{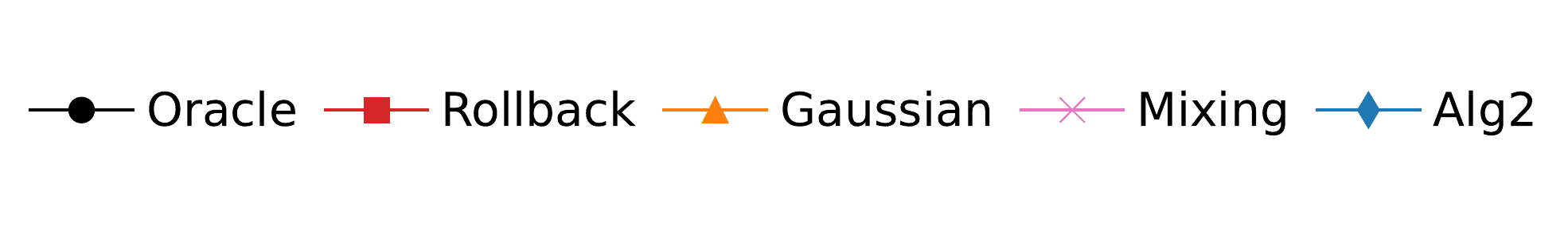}
    \end{subfigure}
    \begin{subfigure}{0.46\linewidth}
        \centering
        \includegraphics[width=\linewidth]{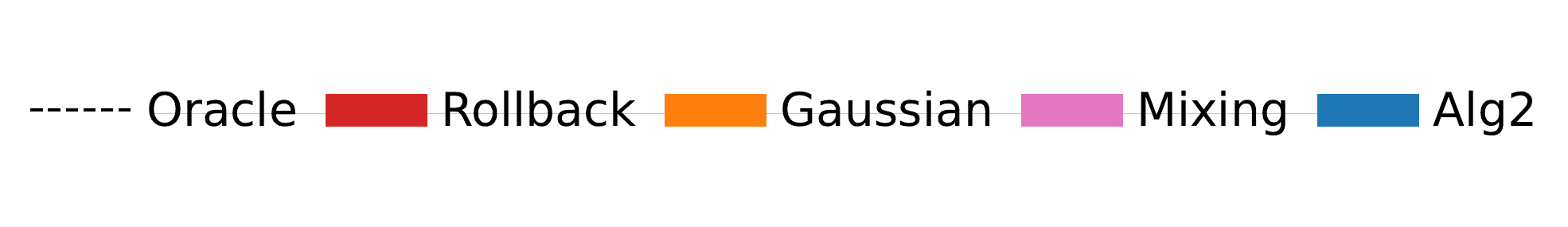}
    \end{subfigure}
    \hfill
    \begin{subfigure}{0.22\linewidth}
        \centering
        \includegraphics[width=\linewidth]{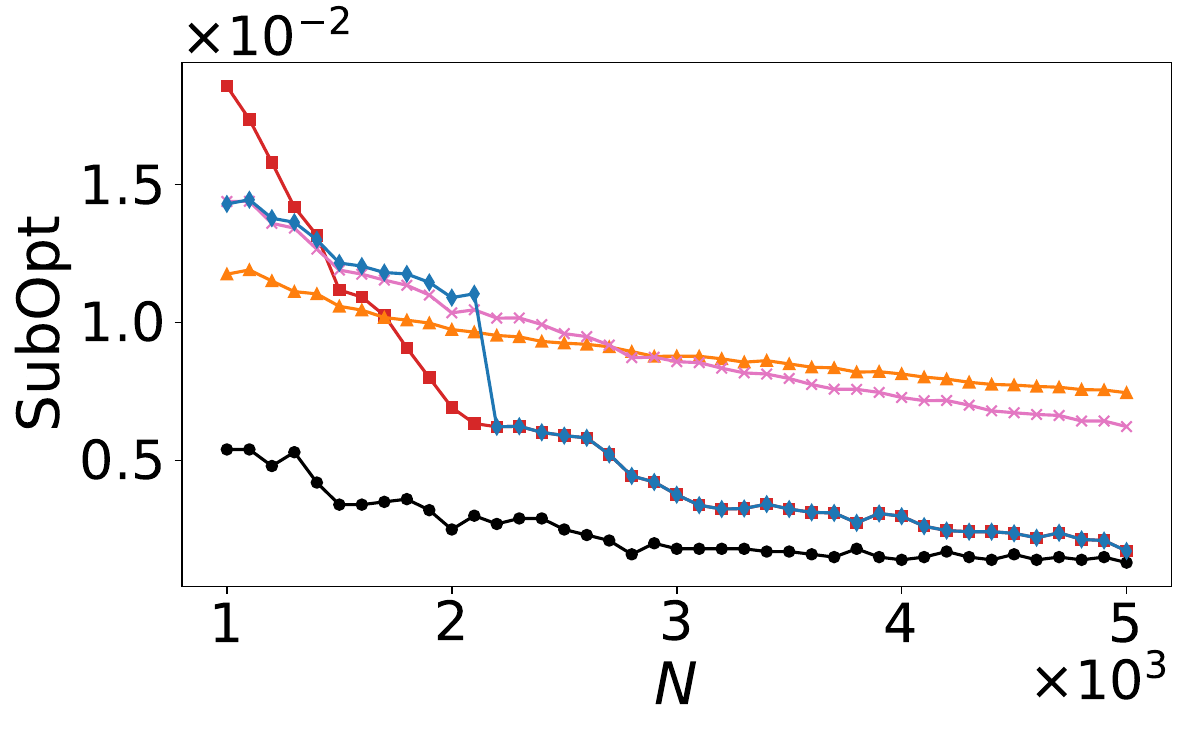}
        \caption{$\+M_f$, hard case}
    \end{subfigure}
    \begin{subfigure}{0.22\linewidth}
        \centering
        \includegraphics[width=\linewidth]{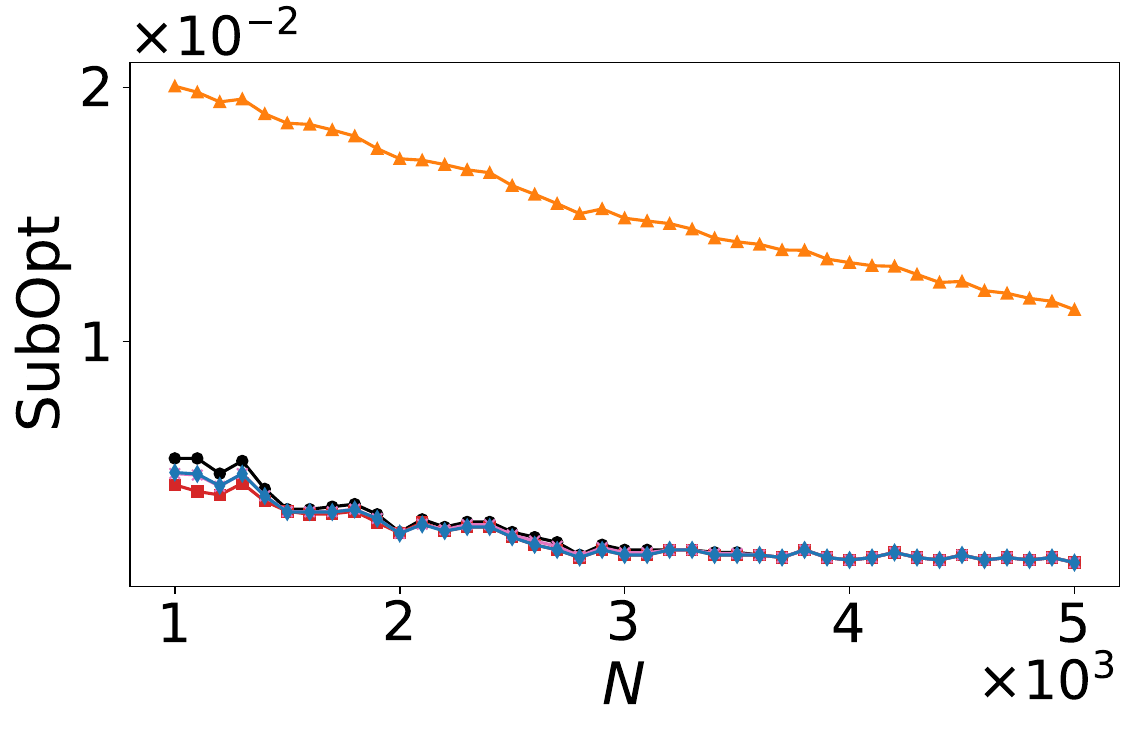}
        \caption{$\+M_f$, easy case}
    \end{subfigure}
    \begin{subfigure}{0.22\linewidth}
        \centering
        \includegraphics[width=\linewidth]{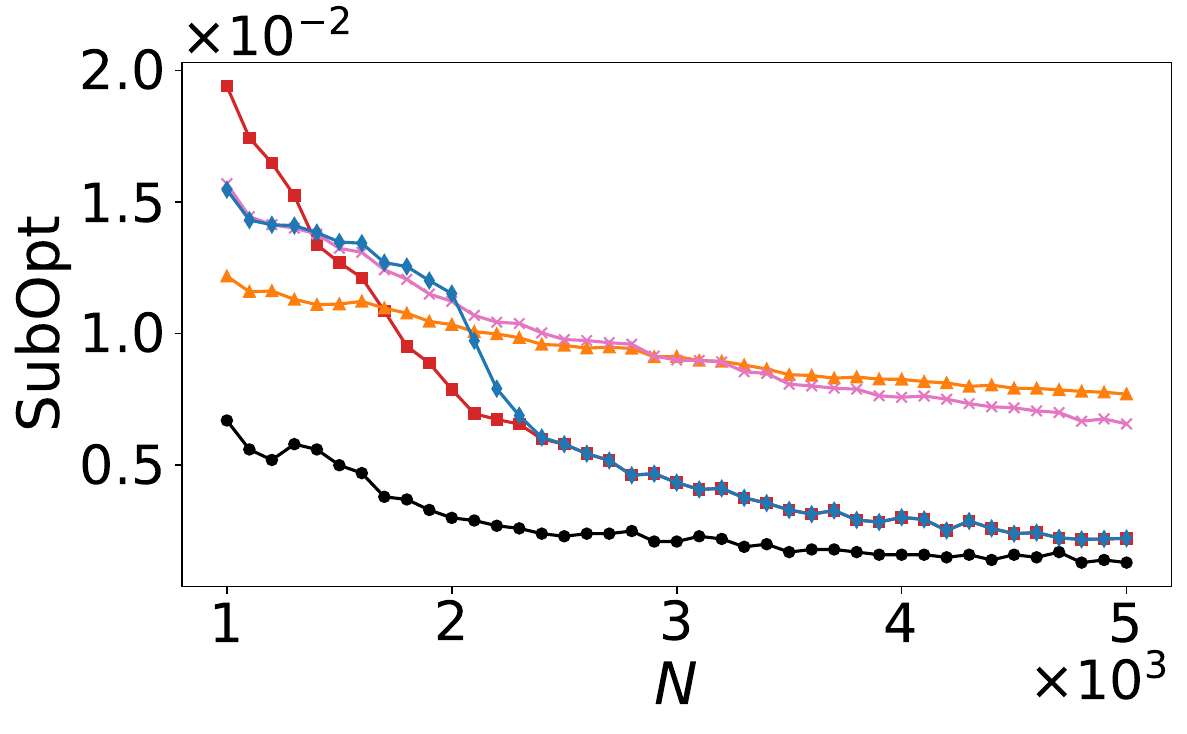}
        \caption{$\+M_d$, hard case}
    \end{subfigure}
    \begin{subfigure}{0.22\linewidth}
        \centering
        \includegraphics[width=\linewidth]{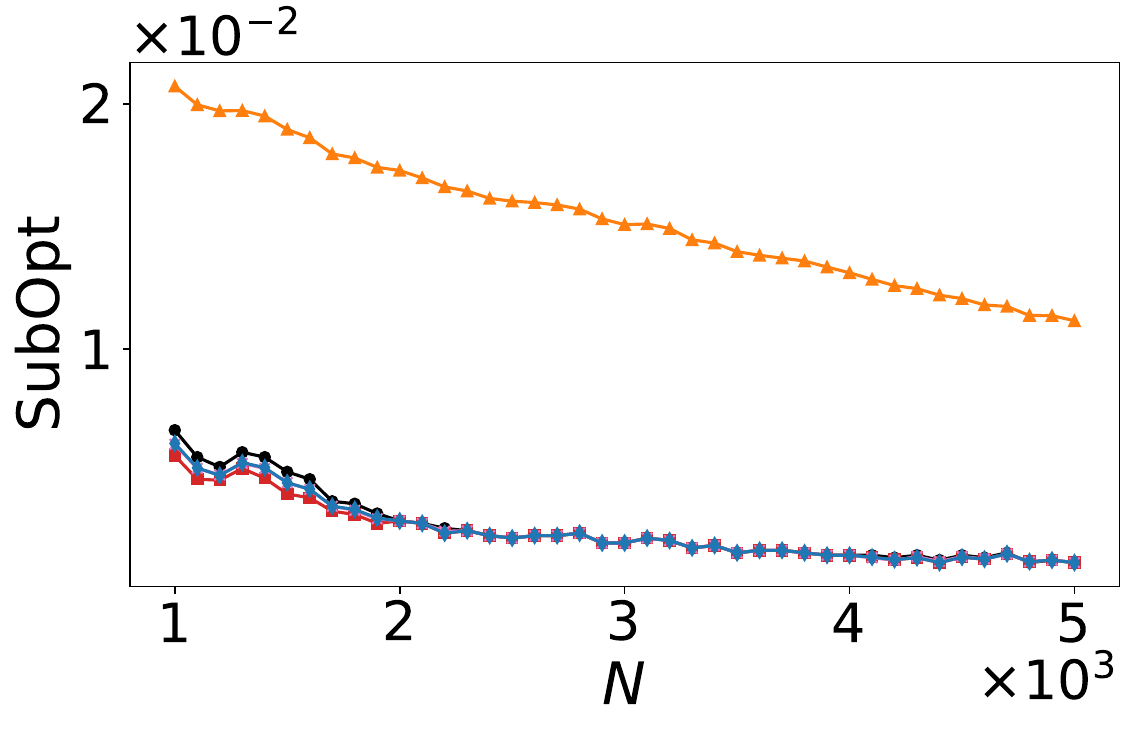}
        \caption{$\+M_d$, easy case}
    \end{subfigure}
    \hfill
    \begin{subfigure}{0.22\linewidth}
        \centering
        \includegraphics[width=\linewidth]{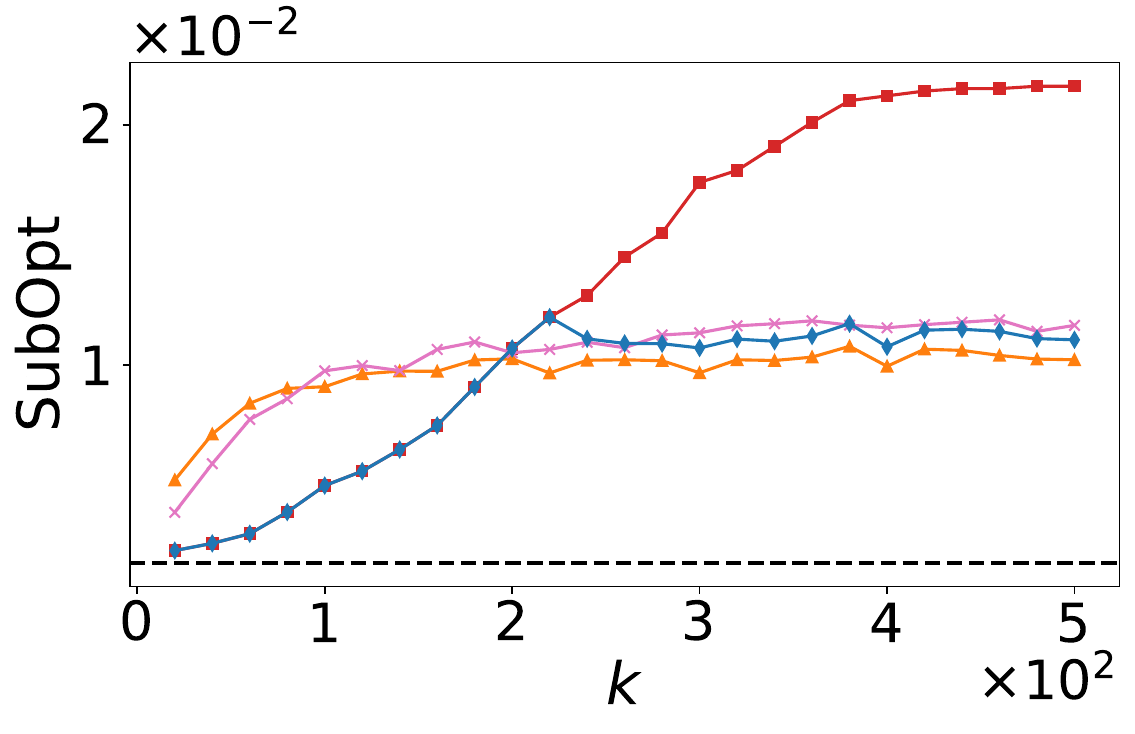}
        \caption{$\+M_f$, hard case}
    \end{subfigure}
    \begin{subfigure}{0.22\linewidth}
        \centering
        \includegraphics[width=\linewidth]{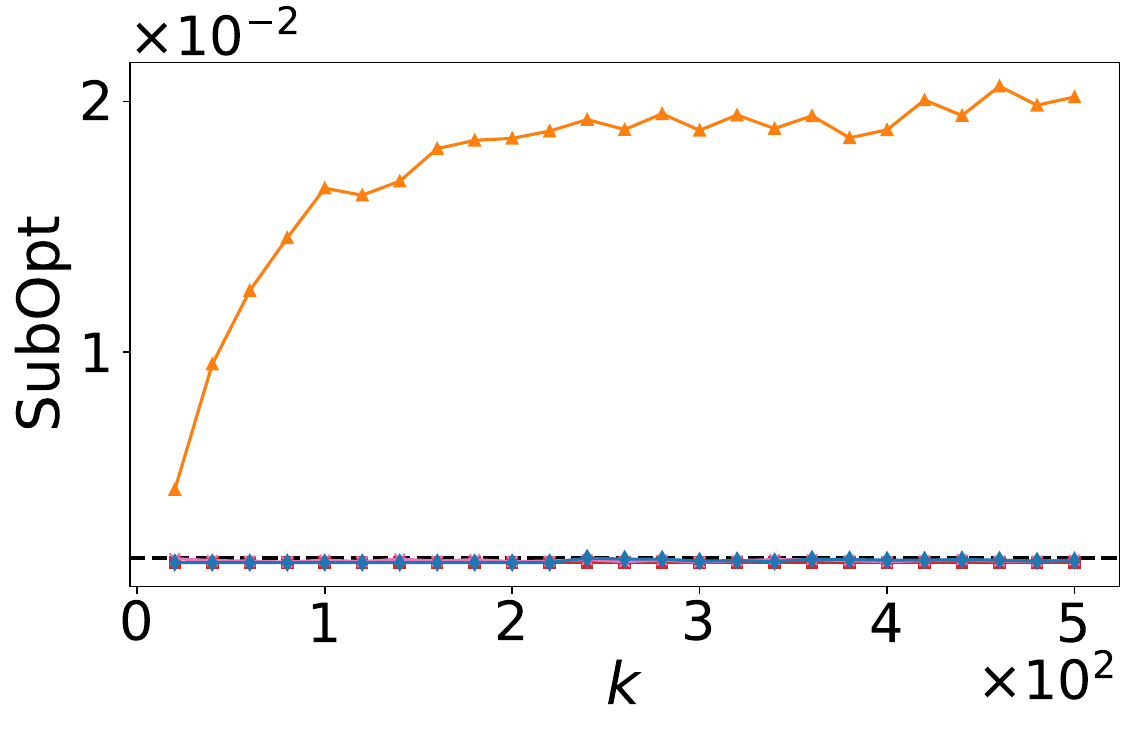}
        \caption{$\+M_f$, easy case}
    \end{subfigure}
    \begin{subfigure}{0.22\linewidth}
        \centering
        \includegraphics[width=\linewidth]{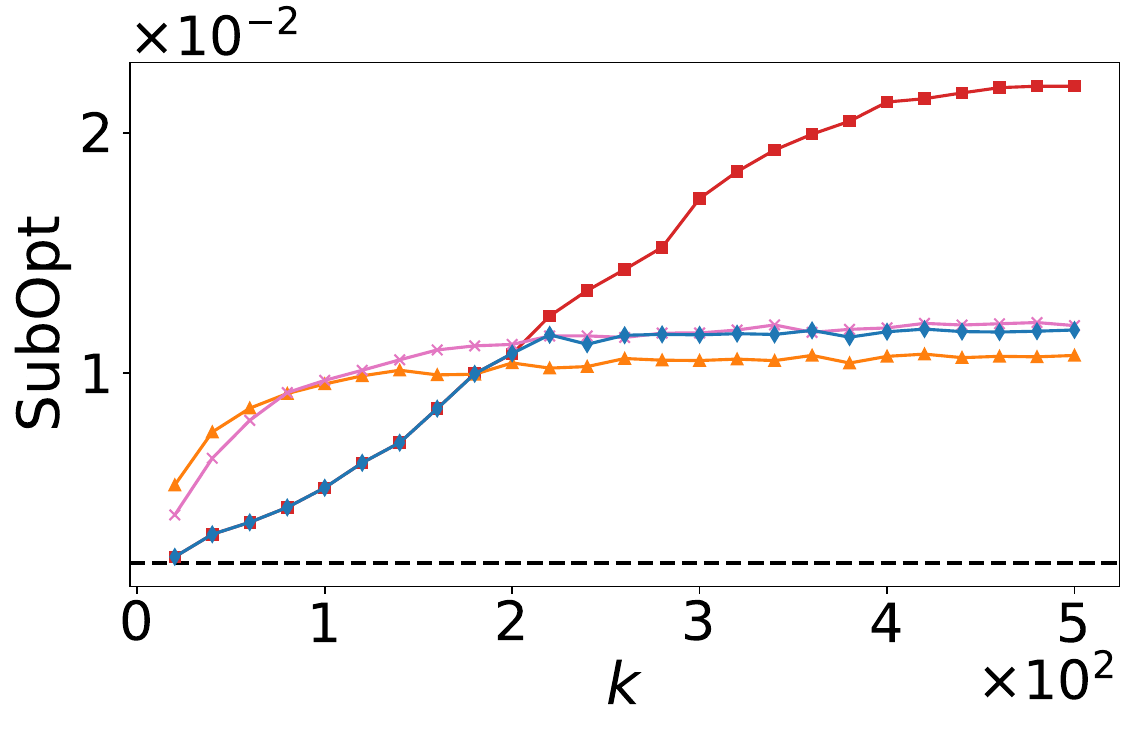}
        \caption{$\+M_d$, hard case}
    \end{subfigure}
    \begin{subfigure}{0.22\linewidth}
        \centering
        \includegraphics[width=\linewidth]{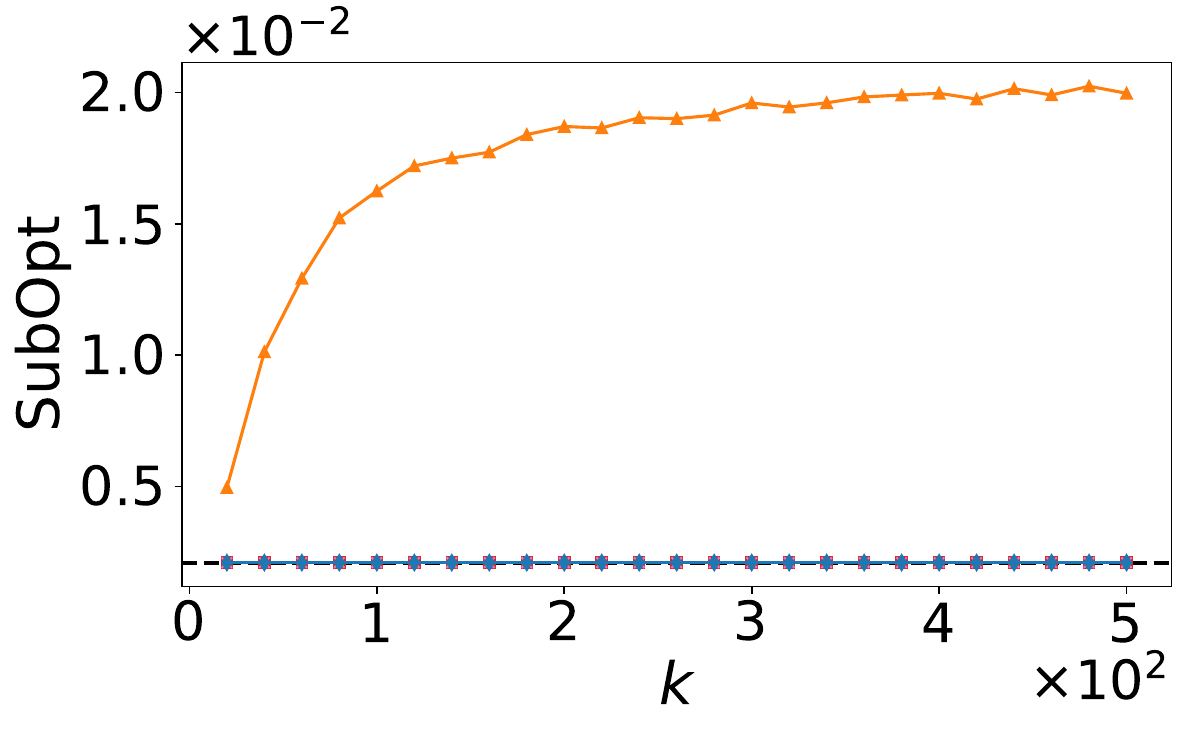}
        \caption{$\+M_d$, easy case}
    \end{subfigure}
    \hfill
    \begin{subfigure}{0.22\linewidth}
        \centering
        \includegraphics[width=\linewidth]{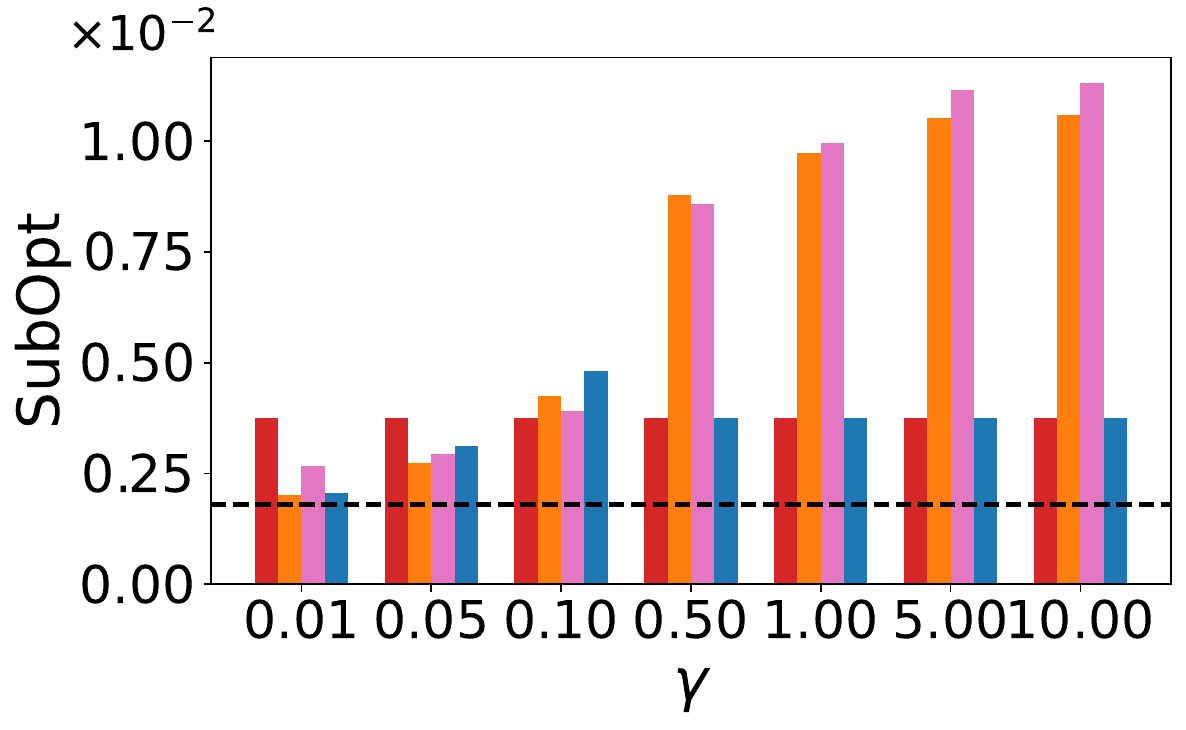}
        \caption{$\+M_f$, hard case}
    \end{subfigure}
    \begin{subfigure}{0.22\linewidth}
        \centering
        \includegraphics[width=\linewidth]{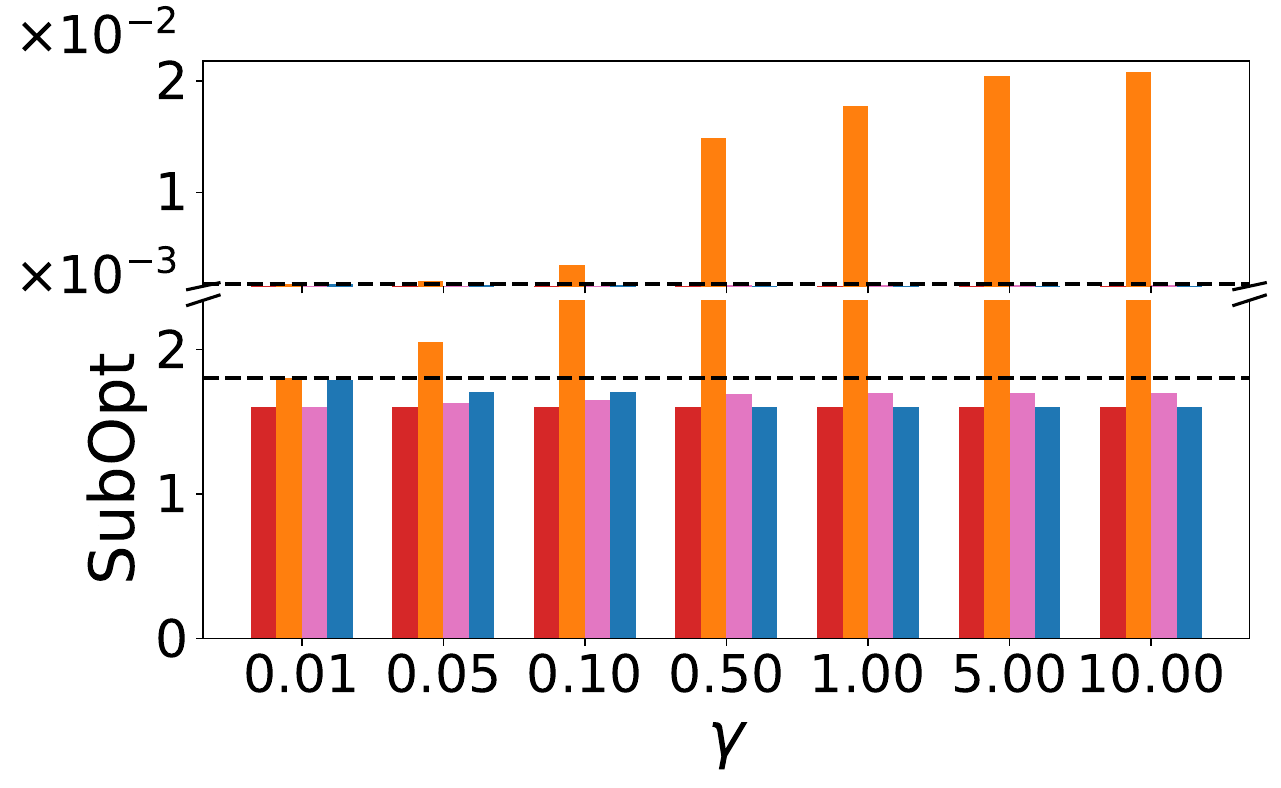}
        \caption{$\+M_f$, easy case}
    \end{subfigure}
    \begin{subfigure}{0.22\linewidth}
        \centering
        \includegraphics[width=\linewidth]{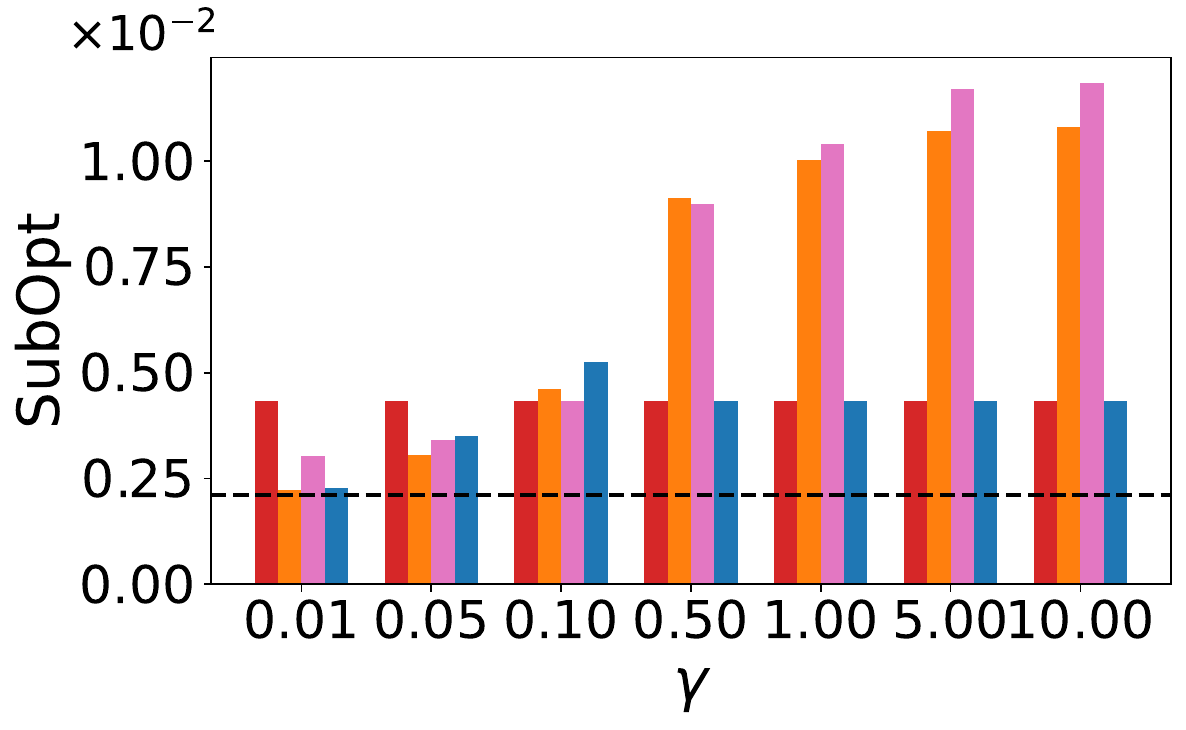}
        \caption{$\+M_d$, hard case}
    \end{subfigure}
    \begin{subfigure}{0.22\linewidth}
        \centering
        \includegraphics[width=\linewidth]{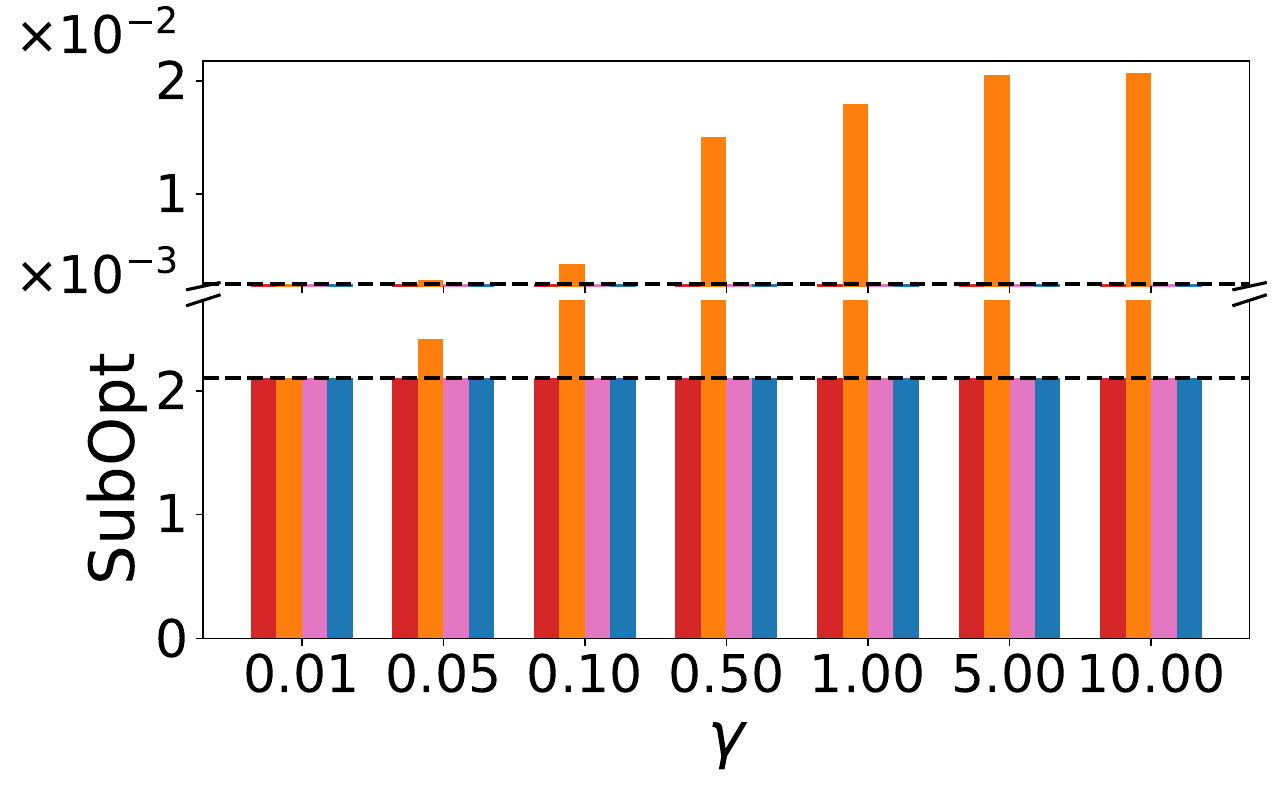}
        \caption{$\+M_d$, easy case}
    \end{subfigure}
    \caption{Comparison of \Cref{alg:single} with baselines under $\+M_f$ (uniform) and $\+M_d$ ($C^*=5$). }
    \label{fig:alg2.pdf}
\end{figure*}

\section{Experiments} \label{sec:exp}
In this section, we report experimental results on synthetic datasets. We compare \Cref{alg:single} with several baselines, including an Oracle, the Gaussian mechanism, rollback and the mixing algorithm under both $\+M_f$ and $\+M_d$. We also compare \Cref{alg:dist_single_le2} with \Cref{alg:single} when $C^* \in (1,2)$ under $\+M_d$. The Oracle corresponds to \Cref{alg:LCB} executed on the original dataset without deletion, while all other methods operate on datasets subject to deletion. The mixing algorithm here refers to the $k'=\frac{k}{2}$ case in \Cref{alg:fixed_single_mixing}. Performance is evaluated in terms of the sub-optimality. In practice, we set $\gamma_0^{\text{exp}} = 1.3\,\gamma_0$ in all experiments to improve performance of algorithms. The theoretical guarantees remain unchanged with $\gamma_0^{\text{exp}}$. Additional experimental details for experimental settings and experimental results of \Cref{alg:fixed_single_mixing} and \Cref{alg:fixed_multi} are deferred to \Cref{append:exp}.

\subsection{Experiments of the Fixed-Sample Model}\label{sec:exp_fixed}
We generate 200 independent synthetic offline bandit datasets using a round-robin behavior policy over 5 arms with Bernoulli rewards \(\vmu=(0.10,0.08,0.06,0.04,0.02)\). To efficiently evaluate deletions, we adopt a prefix-sharing strategy (details in \Cref{append:exp_fixed}). In both settings, each prefix undergoes 5 independent deletion requests. Both the hard case (\(a_0 = a^*\)) and the easy case (\(a_0\) is the second-best suboptimal arm) are considered, and results are averaged over 10 runs per configuration. We vary one of $N$, $k$, or $\gamma$
while fixing the remaining two parameters to $(N, k, \gamma) = (3000, 80, 0.5)$. Due to multiple sources of randomness, the variance across runs is heterogeneous. For clarity, we therefore report mean performance across runs. 

\Cref{fig:alg2.pdf} shows that \Cref{alg:single} consistently demonstrates robust and competitive performance across all settings under $\+M_f$, yielding low sub-optimality across a wide range of $N$, $k$, and $\gamma$. In the hard case, the algorithm adaptively balances rollback and Gaussian mechanism: for large $\frac{k}{N}$, it favors adding Gaussian noise, while for small $\frac{k}{N}$, it reliably selects rollback. Quantitatively, when varying $N$, \Cref{alg:single} achieves 35-77\% and 39-73\% reduction in sub-optimality relative to the Gaussian mechanism and mixing algorithm for $N \geq 2200$, and 4-23\% reduction relative to rollback for $N \leq 1300$. In the easy case, the algorithm remains highly stable, consistently achieving at least a 76\% improvement over Gaussian and staying within 11\% of the best method. Similar trends are observed when varying $k$ or $\gamma$, highlighting its overall robustness. Overall, \Cref{alg:single} consistently achieves low sub-optimality and ranks among the best or second-best methods across most regimes, while the baselines exhibit highly variable sub-optimality and perform well only in isolated settings.

\subsection{Experiments of the Distribution Model}\label{sec:exp_distribution}
When $C^*\in [2,\infty)$, we generate 200 independent synthetic offline datasets using a stochastic behavior policy that selects $a^*$ with probability $1/C^*, C^*=5$ and distributes the remaining probability uniformly over suboptimal arms. Other configurations are the same as those in \Cref{sec:exp_fixed}. \Cref{fig:alg2.pdf} shows that \Cref{alg:single} also maintains robust and competitive performance under $\+M_d$. In the hard case, it adaptively switches between Gaussian mechanism and rollback based on $\frac{k}{N}$, reliably stabilizing performance. When varying $N$, \Cref{alg:single} achieves 21-72\% and 24-68\% reduction in sub-optimality relative to the Gaussian mechanism and mixing algorithm for $N \geq 2200$, and 7-20\% reduction relative to rollback for $N \leq 1300$. In the easy case, the algorithm consistently attains at least a 70\% improvement over Gaussian and remains either the best-performing method or within 10\% of it. Similar trends are observed when varying $k$ or $\gamma$. Overall, \Cref{alg:single} consistently maintains low sub-optimality across regimes, while the baselines suffer from pronounced variability in sub-optimality.

The setup for $C^* \in (1,2)$ is identical to that for $C^* \in [2, \infty)$ and only the range of $C^*$ differs. We focus on the hard case, as the sample counts of $a^*$ dominates the dataset in this regime. As shown in \Cref{tab:dist_single_le2}, when $C^* = 1.3$, \Cref{alg:dist_single_le2} achieves consistently competitive performance across different deletion scenarios, yielding up to more than $93\%$ reduction in sub-optimality compared to \Cref{alg:single}.

\begin{table}[H]
    \centering
    \caption{Comparison of \Cref{alg:dist_single_le2} and \Cref{alg:single} under distribution model ($C^*=1.3$).}
    \label{tab:dist_single_le2}
    \resizebox{\linewidth}{!}{%
    \begin{tabular}{lcccc}
        \toprule
        \multirow{2}{*}{Algorithm} & \multicolumn{4}{c}{Sub-optimality} \\ 
        \cmidrule(lr){2-5}
                  & $N=900$ & $N=1000$ & $N=1100$ & $N \geq 1200$ \\
        \midrule
        \Cref{alg:dist_single_le2} & 0.0008 & 0.0000 & 0.0000 & 0.0000 \\
        \Cref{alg:single}           & 0.0120 & 0.0115 & 0.0110 & 0.0000 \\
        \bottomrule
    \end{tabular}%
    }
\end{table}

\section{Conclusion and Future Work}
In this paper, we introduce $(\varepsilon,\delta)$-unlearning for offline stochastic MAB. We design adaptive algorithms for single-source unlearning under both the fixed-sample and distribution models, and provide upper and lower bounds in both models. We further develop extensions including a mixing baseline, an alternative learning--unlearning pair under the distribution model when $C^*\in (1,2)$, and multi-source unlearning under the fixed-sample model. Experiments validate the predicted trade-offs and the effectiveness of our methods. Several promising directions for future work emerge from our results. These include tightening the remaining gaps between upper and lower bounds, extending multi-source unlearning to the distribution model (both algorithm design and theory), and generalizing our framework to contextual/linear bandits or broader offline RL settings.

\section*{Impact Statement}

This paper presents work whose goal is to advance the field of Machine
Learning. There are many potential societal consequences of our work, none
which we feel must be specifically highlighted here.

\bibliography{example_paper}
\bibliographystyle{icml2026}

\newpage

\appendix
\onecolumn

\section{Missing Proofs in \Cref{sec:algo}} \label{app:algo}

\subsection{Algorithms and Upper Bounds for Single-Source Unlearning} \label{app:algo_single_ub}

Remember that the realistic unlearning request $U$ is from $a_0$, $\abs{U} = k$, $D' = D\setminus U$ and $N(a_0)$ is the sample counts of $a_0$ in $D$. We present the single-source unlearning algorithm when the unlearning request is $\emptyset$.

\begin{algorithm}[H]
\caption{Single-source unlearning algorithm with input $\emptyset$}
{\bfseries Input:} Output of $\pi(D')$, additional statistics $T(D')$: $\pi(D) = \hat a$, $a_0$, $k$, $N(a_0)$, $\wh \vf(D')$, a confidence level: $\tau \in (0,1)$  
\begin{algorithmic}[1]
    \STATE $\gamma_0 \gets \frac{4}{3}\sqrt{\frac{\pi \ln \frac{m}{\tau}}{N(a_0)-k}}$
    \IF{$\tp{\hat a = a_0}$ and $\tp{\gamma < \gamma_0 }$ } 
        \STATE Set $\Delta_{\vf} \gets \frac{3k}{2N(a_0)}$, $\sigma \gets \Delta_{\vf}\gamma$ 
        \STATE Sample $\nu \in \bb R$ from $\+N(0,\sigma^2)$   
        \STATE Set $\wh \vf_{(a_0)}(D') \gets \wh \vf_{(a_0)}(D') + \nu$ 
    \ENDIF    
\end{algorithmic}
{\bfseries Output:} $\hat a' = \argmax_{a} \wh \vf_{(a)}(D')$ 
\end{algorithm}

Note that the above algorithm adds the Gaussian noise only if \Cref{alg:single} adds the Gaussian noise, and these noises are sampled from the same distribution. Next we prove $(\eps,\delta)$-UL for \Cref{alg:single} and \Cref{alg:LCB}. The first step is to calculate $\Delta_{\vf}$, which is the $\ell_2$ sensitivity of $\vf$.

\begin{lemma} \label{lem:LCB_sensi}
    For any dataset $D$, any unlearning request $U$ from $a_0$, when $k \leq  N(a_0) - \sqrt{\frac{N(a_0) \ln \frac{m}{\tau}}{2}}$, we have $\Delta_{\vf} \leq \frac{3k}{2N(a_0)}$.
\end{lemma}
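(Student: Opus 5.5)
Since $U$ comes only from $a_0$, the LCB vectors $\vf(D)$ and $\vf(D\setminus U)$ agree in every coordinate except $a_0$. The $\ell_2$ sensitivity therefore reduces to the single scalar
\[
\abs{\wh\vf_{(a_0)}(D) - \wh\vf_{(a_0)}(D')} \le \abs{\hat\mu(a_0) - \hat\mu'(a_0)} + \abs{b(a_0) - b'(a_0)},
\]
where the primed quantities are computed on $N' = N(a_0)-k$ samples. The plan is to bound the two terms separately, aiming for $\frac{k}{N(a_0)}$ from the mean and $\frac{k}{2N(a_0)}$ from the penalty.

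\textbf{Mean term.} Write $S$ for the sum of rewards of $a_0$ in $D$ and $S_U$ for the sum over $U$. Then
\[
\hat\mu(a_0) - \hat\mu'(a_0) = \frac{S}{N(a_0)} - \frac{S - S_U}{N'} = \frac{k}{N(a_0)}\bigl(\hat\mu_U - \hat\mu'(a_0)\bigr),
\]
where $\hat\mu_U = S_U/k$. Both averages lie in $[0,1]$, so this term is at most $k/N(a_0)$, with no condition on $k$ needed.

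\textbf{Penalty term.} Set $c = \ln\frac{m}{\tau}/2$. Then
\[
b'(a_0) - b(a_0) = \sqrt{c}\Bigl(\frac{1}{\sqrt{N'}} - \frac{1}{\sqrt{N(a_0)}}\Bigr) = \sqrt{c}\,\frac{k}{\sqrt{N(a_0) N'}\,\bigl(\sqrt{N(a_0)}+\sqrt{N'}\bigr)}.
\]
Bounding $\sqrt{N(a_0)}+\sqrt{N'} \ge \sqrt{N(a_0)}$ gives at most $\frac{k\sqrt{c}}{N(a_0)\sqrt{N'}}$. It then suffices to have $\sqrt{c/N'} \le \frac12$, i.e. $N' \ge 4c = 2\ln\frac{m}{\tau}$.

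\textbf{Main obstacle.} The hypothesis is $N' \ge \sqrt{N(a_0)\ln\frac{m}{\tau}/2}$, not $N' \ge 2\ln\frac{m}{\tau}$, so the condition has to be matched. From the hypothesis, $N'^2 \ge N(a_0)\,c \ge N'\,c$, hence $N' \ge c$. That only yields $\sqrt{c/N'} \le 1$, so I will use the factor $N'^2 \ge N(a_0)c$ more efficiently. Write
\[
\frac{\sqrt{c}}{\sqrt{N(a_0)N'}} \le \frac{\sqrt{c}}{\sqrt{N(a_0)}}\cdot\frac{1}{\bigl(N(a_0)c\bigr)^{1/4}} = \frac{c^{1/4}}{N(a_0)^{3/4}}.
\]
Combined with the remaining factor $\frac{1}{\sqrt{N(a_0)}+\sqrt{N'}}$, this makes the penalty term roughly $\frac{k\,c^{1/4}}{N(a_0)^{5/4}}$. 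To get $\le \frac{k}{2N(a_0)}$ this needs $c \lesssim N(a_0)$, which follows from $N(a_0)c \le N'^2 \le N(a_0)^2$. Using $N' \ge \sqrt{N(a_0)c}$ in the denominator as well, the constants should close to $\frac12$. The delicate part is the constant bookkeeping, possibly via the substitution $x = \sqrt{c/N(a_0)} \le 1$ and checking that a one-variable function stays below $\frac12$. Summing the two terms then gives $\Delta_{\vf} \le \frac{3k}{2N(a_0)}$.
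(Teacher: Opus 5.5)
Your proposal is correct and follows the paper's proof. Both split the $a_0$-coordinate difference into a mean part bounded by $\frac{k}{N(a_0)}$ and a penalty part bounded by $\frac{k}{2N(a_0)}$ using exactly the hypothesis $N'\ge\sqrt{N(a_0)c}$, $c=\frac{1}{2}\ln\frac{m}{\tau}$; your convex-combination identity for the mean part is equivalent to the paper's choice of rewards $1$ on $U$ and $0$ elsewhere. The constant you left open does close. Since the denominator $\sqrt{N(a_0)N'}\,(\sqrt{N(a_0)}+\sqrt{N'})$ is increasing in $N'$, substituting $N'\ge xN(a_0)$ with $x=\sqrt{c/N(a_0)}\le 1$ into both factors gives $\frac{k}{N(a_0)}\cdot\frac{\sqrt{x}}{1+\sqrt{x}}\le\frac{k}{2N(a_0)}$. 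The paper gets this in one line from $\sqrt{N(a_0)}+\sqrt{N'}\ge 2\sqrt{N'}$, i.e.\ $\frac{k\sqrt{c}}{2N'\sqrt{N(a_0)}}\le\frac{k}{2N(a_0)}$. Drop your intermediate loosening $\sqrt{N(a_0)}+\sqrt{N'}\ge\sqrt{N(a_0)}$: it discards exactly the factor that is needed, and would require $c\le N(a_0)/16$, which the hypothesis does not give.
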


\begin{proof}
For any dataset $D$, after the unlearning of $U\subseteq D$, $\vf_{(a_0)}(D')$ becomes $\hat\mu'(a_0) - b'(a_0)$ where $N'(a_0)= N(a_0)-k$, $\hat\mu'(a_0) = \frac{\hat\mu(a_0)\cdot N(a_0) - \sum_{(a_0,r_i)\in U} r_i}{N'(a_0)}$ and $b'(a_0) = \sqrt{\frac{\ln \frac{m}{\tau}}{2N'(a_0)}}$. For $a\neq a_0$, $\vf_{(a)}(D') = \vf_{(a)}(D)$. Now we calculate the sensitivity when the dataset changes from $D$ to $D\setminus U = D'$:

\begin{align*}
    \Delta_{\vf} 
    &= \max_{D, D'} \norm{\vf(D)-\vf(D')}_2 \\
    &= \max_{D, D'}
    \abs{\hat\mu(a_0) - \frac{\hat\mu(a_0)\cdot N(a_0) - \sum_{(a_0,r_i)\in U} r_i}{N(a_0)-k} + \sqrt{\ln \frac{m}{\tau}} \tp{\sqrt{\frac{1}{2(N(a_0) - k)}} - \sqrt{\frac{1}{2N(a_0)}}} } \\
    &\leq \max_{D, D'}
    \abs{\frac{ N(a_0)\cdot \sum_{(a_0,r_i)\in U} r_i - k \sum_{(a_0,r_i)\in D} r_i }{N(a_0)\tp{N(a_0)-k}} + \frac{k \sqrt{\ln \frac{m}{\tau}}}{2\tp{N(a_0)-k}\sqrt{2N(a_0)} } } \\
    &\overset{(a)}{=} \abs{\frac{k}{N(a_0)} +   \frac{k \sqrt{\ln \frac{m}{\tau}}}{2\tp{N(a_0)-k}\sqrt{2N(a_0)} } } \overset{(b)}{\leq} \frac{3k}{2N(a_0)},
\end{align*}
where $(a)$ comes from choosing $r_i = 1, \forall (a_0,r_i)\in U$ and $r_j = 0, \forall (a_0,r_j)\in D\setminus U$, $(b)$ comes from $k \leq N(a_0) - \sqrt{\frac{N(a_0)\ln\frac{m}{\tau}}{2}}$.
\end{proof}

\begin{theorem}[\Cref{thm:unlearn_fixed_single} restated]
    \Cref{alg:single} and \Cref{alg:LCB} are $(\eps,\delta)$-unlearning.
\end{theorem}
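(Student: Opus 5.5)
The plan is to compare the two executions in \Cref{def:unlearning} directly: \Cref{alg:single} run on $(U,\pi(D),T(D))$, and the $\emptyset$-input algorithm above run on $(\emptyset,\pi(D'),T(D'))$. The key observation is that both executions take the same branch. The branch condition depends only on $\hat a=\pi(D)$, $a_0$, $k$, $N(a_0)$ and $\gamma$. By the remark after the $\emptyset$-algorithm, all of these are stored in $T(D')$, including $\pi(D)=\hat a$ as the fictitious-request information. So we can split into two cases according to that condition.

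\emph{Case 1 (rollback branch: $\hat a\neq a_0$ or $\gamma\ge\gamma_0$).} Here \Cref{alg:single} replaces $\wh\vf_{(a_0)}(D)$ by $\hat\mu'(a_0)-b'(a_0)$. This quantity is the empirical mean minus penalty of $a_0$ computed from $D'$, so it equals $\wh\vf_{(a_0)}(D')$. The other coordinates are unchanged, and $\wh\vf_{(a)}(D)=\wh\vf_{(a)}(D')$ for $a\ne a_0$. Hence the resulting vector is exactly $\wh\vf(D')$. The $\emptyset$-algorithm in this branch does nothing, so it also outputs $\argmax_a\wh\vf_{(a)}(D')$ with the same tie-breaking. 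The two outputs are therefore identically distributed, and both inequalities of \Cref{def:unlearning} hold with $\eps=\delta=0$, hence a fortiori for any $(\eps,\delta)$.

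\emph{Case 2 (Gaussian branch: $\hat a=a_0$ and $\gamma<\gamma_0$).} Here the first execution releases $\vf(D)+\nu e_{a_0}$ and the second releases $\vf(D')+\nu e_{a_0}$, with $\nu\sim\+N(0,\sigma^2)$ and the same $\sigma=\Delta_{\vf}\gamma$ in both, since $\Delta_{\vf}=\frac{3k}{2N(a_0)}$ is computed from stored quantities. The two vectors differ only in coordinate $a_0$. By \Cref{lem:LCB_sensi}, under the standing assumption on $k$, that difference is at most $\frac{3k}{2N(a_0)}$ in absolute value. The classical Gaussian mechanism guarantee \cite{DR14} then applies: noise of scale $\sigma\ge\Delta\sqrt{2\ln(1.25/\delta)}/\eps$ yields $(\eps,\delta)$-indistinguishability in both directions (it is symmetric), for $\eps\in(0,1)$. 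Both outputs are the same deterministic post-processing $\argmax_a(\cdot)$ of these noisy vectors, so the post-processing property transfers the guarantee to the output arms for every $A\subseteq\+A$.

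The main obstacle is making the branch agreement rigorous. One must argue that the fictitious execution has access to $\pi(D)$, $a_0$, $k$ and $N(a_0)$ through $T(D')$; otherwise the branch choice itself would leak information. I would state this explicitly as part of the storage convention. A secondary point is the range condition $\eps<1$ in the Dwork--Roth Gaussian mechanism theorem; I would note it as inherited, or cite the analytic version. The proof is the same under $\+M_f$ and $\+M_d$, since it is pointwise in $D$ and $U$.
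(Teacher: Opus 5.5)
Your proposal is correct and follows the paper's proof. Both split on the branch taken. In the rollback branch the released LCB vector is exactly that of $D\setminus U$, which is identical to the $\emptyset$-input output. In the noisy branch you use \Cref{lem:LCB_sensi} with the Dwork--Roth Gaussian mechanism, then finish by post-processing through the $\arg\max$ map.

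You also make explicit two points the paper leaves implicit without changing the argument. One is that both executions take the same branch because $\pi(D)$, $a_0$, $k$ and $N(a_0)$ are stored in $T(D\setminus U)$. The other is the $\eps<1$ range condition inherited from the Gaussian mechanism theorem.
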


\begin{proof}
From \Cref{lem:LCB_sensi}, we know that 
\[
    \abs{\vf(D\setminus U) - \vf(D)} \leq \Delta_{\vf} \leq \frac{3k}{2N(a_0)}.
\]
Let \(\wh \vf_1, \wh \vf_1'\) denote the LCB vector at the input and output step when the inputs are \(\emptyset,\,\pi(D\setminus U),\,T(D\setminus U)\) respectively, we have \(\wh \vf_1 = \vf(D\setminus U)\). Similarly, let \(\wh \vf_2, \wh \vf_2'\) denote the corresponding vector when the inputs are \(U,\,\pi(D),\,T(D)\), then \(\wh \vf_2 = \vf(D)\). If the unlearning algorithm adds Gaussian noise, then by the proof of the Gaussian mechanism (Theorem~A.1) in \citet{DR14}, for any measurable set $R\subseteq \bb R^{m}$,
\[
    \Pr{\wh \vf_1' \in R} \leq e^\eps\cdot  \Pr{\wh \vf_2' \in R} + \delta, 
    \quad \Pr{\wh \vf_2' \in R} \leq e^\eps\cdot \Pr{\wh \vf_1' \in R} + \delta.
\]
If the unlearning algorithm instead executes rollback, then $\wh \vf_2' = \vf(D\setminus U) = \wh \vf_1 = \wh \vf_1'$, and hence $\Pr{\wh \vf_1' \in R} = \Pr{\wh \vf_2' \in R}$. Finally, viewing \(g(\vf(D)) := \argmax_{a} \vf_{(a)}(D)\) as a post-processing map and applying the post-processing lemma (Proposition~2.1) in \citet{DR14}, we conclude that \Cref{alg:single} and \Cref{alg:LCB} satisfy \((\eps,\delta)\)-unlearning.
\end{proof}

\begin{theorem} [\Cref{thm:fixed_single_ub} restated]
Consider any offline dataset $D(\vN)$, any unlearning request $U$ from $a_0$. $N(a^*)\ge N_*, \abs{U} = k$. When $k \leq  N(a_0) - \sqrt{\frac{N(a_0) \ln \frac{m}{\tau}}{2}}$, by setting $\tau = \frac{1}{N}$, the arm $\hat a'$ returned by \Cref{alg:single} satisfies when $\gamma < \gamma_0 = \frac{4}{3}\sqrt{\frac{\pi \ln (Nm)}{N(a_0)-k}}$,
\begin{align*}
    &\phantomeq SubOpt(\vN,N_*,k,\eps,\delta) = O\tp{\max\set{\sqrt{\frac{\ln \tp{N m} }{N(a_0)} } + \frac{k \gamma}{N(a_0)}, \sqrt{\frac{\ln \tp{N m} }{N_*}} }}. 
\end{align*}  
while when $\gamma \ge \gamma_0$,
\begin{align*}
    &\phantomeq SubOpt(\vN,N_*,k,\eps,\delta) = O\tp{\max\set{\sqrt{\frac{\ln \tp{N m} }{N(a_0) - k}}, \sqrt{\frac{\ln \tp{N m} }{N_*}} }}. 
\end{align*}    
\end{theorem}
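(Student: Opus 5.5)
The proof splits on whether $a_0=a^*$ and on which branch of \Cref{alg:single} runs. Throughout, $\tau=1/N$, so $\ln\frac{m}{\tau}=\ln(Nm)$. We use the standard good event $\mathcal{E}$ of \citet{RZMJ21}: for every arm $a$ and every relevant sample set (the full data of $a$ in $D$, and the retained data of $a_0$ in $D'$),
\[
\abs{\hat\mu(a)-\mu(a)}\le b(a).
\]
By Hoeffding and a union bound over $m+1$ events, $\mathcal{E}$ fails with probability $O(\tau)=O(1/N)$. Rewards lie in $[0,1]$, so the failure event adds only $O(1/N)$ to the sub-optimality. Because $U$ is chosen independently of the rewards, $\hat\mu'(a_0)$ is an average of $N(a_0)-k$ i.i.d. rewards, and the same Hoeffding bound applies to it.

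\textbf{Rollback branch.} The output maximizes the LCB vector $\vf(D')$. Under $\mathcal{E}$, the standard pessimism argument gives
\[
\mu(a^*)-\mu(\hat a')\le \mu(a^*)-\vf_{(\hat a')}(D')\le \mu(a^*)-\vf_{(a^*)}(D')\le 2b_{D'}(a^*).
\]
If $a^*\ne a_0$, then $b_{D'}(a^*)=b(a^*)=O\big(\sqrt{\ln(Nm)/N_*}\big)$. If $a^*=a_0$, then $b_{D'}(a^*)=O\big(\sqrt{\ln(Nm)/(N(a_0)-k)}\big)$. In both cases the bound is dominated by the stated maximum for $\gamma\ge\gamma_0$.

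\textbf{Gaussian branch} ($\hat a=a_0$ and $\gamma<\gamma_0$). Only $\vf_{(a_0)}$ changes, to $\vf_{(a_0)}(D)+\nu$ with $\nu\sim\mathcal{N}(0,\sigma^2)$ and $\sigma=\frac{3k\gamma}{2N(a_0)}$; all other LCBs are computed from $D$.
\begin{itemize}
\item If $a^*=a_0$: either $\hat a'=a^*$, giving zero regret, or some $a\neq a_0$ with $\vf_{(a)}(D)\ge \vf_{(a_0)}(D)+\nu$ is output.
\item If $a^*\ne a_0$: either $\hat a'=a_0$, or some $a\ne a_0$ is output, which under $\mathcal{E}$ has regret at most $2b(a^*)$ by the pessimism argument restricted to arms other than $a_0$.
\end{itemize}
The key case is $\hat a'=a_0\ne a^*$. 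Then $\vf_{(a_0)}(D)+\nu\ge\vf_{(a^*)}(D)$, so
\[
\mu(a^*)-\mu(a_0)\le 2b(a^*)+2b(a_0)+\nu .
\]
Taking expectations and using $\E{\abs{\nu}}=\sigma\sqrt{2/\pi}$ yields
\[
O\Big(\sqrt{\tfrac{\ln(Nm)}{N_*}}+\sqrt{\tfrac{\ln(Nm)}{N(a_0)}}+\tfrac{k\gamma}{N(a_0)}\Big).
\]
The remaining case, $a^*=a_0$ with some other arm output, is handled the same way: that arm's regret is at most $\mu(a_0)-\vf_{(a)}(D)\le 2b(a_0)+\abs{\nu}$. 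The resulting expression is the first claimed bound.

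\textbf{Matching regimes.} Finally, the two branch bounds must be reconciled with the regime split. For $\gamma\ge\gamma_0$ the algorithm always uses rollback, so the second bound follows directly. For $\gamma<\gamma_0$, rollback may still be used when $\hat a\ne a_0$. Its cost $\sqrt{\ln(Nm)/(N(a_0)-k)}$ arises only when $a^*=a_0$ yet $\hat a\ne a_0$. In that situation we instead use the learning guarantee on $D$ together with \Cref{lem:ub_LCB}. More simply, we note that for $\gamma<\gamma_0$,
\[
\sqrt{\tfrac{\ln(Nm)}{N(a_0)-k}}\;\lesssim\;\sqrt{\tfrac{\ln(Nm)}{N(a_0)}}+\tfrac{k\gamma}{N(a_0)}
\]
when $k$ is small relative to $N(a_0)$, because $\gamma_0$ is defined as the point where the two bounds coincide.

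\textbf{Main obstacle.} The hard step is exactly this comparison: showing that whenever rollback is invoked with $\gamma<\gamma_0$, its $\sqrt{1/(N(a_0)-k)}$ term is absorbed into the stated first bound. If needed, the fallback is to bound the probability of the event $\{a^*=a_0,\ \hat a\ne a_0\}$ under $\mathcal{E}$. On $\mathcal{E}$ this event forces $\mu(a_0)-\mu(\hat a)\le 2b(\hat a)+2b(a_0)$. Rollback therefore changes only the LCB of $a_0$, which was already not the maximizer, so the output arm's regret stays within $O\big(\sqrt{\ln(Nm)/N(a_0)}+\sqrt{\ln(Nm)/N_*}\big)$. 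This is why the reference to $N(a_0)-k$ can be avoided in the $\gamma<\gamma_0$ regime.
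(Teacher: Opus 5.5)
Your structure follows the paper's case analysis. You use a Hoeffding good event that also covers the rolled-back estimate of $a_0$, pessimism on $D'$ for the rollback branch, and $\mathbb{E}|\nu|=\sigma\sqrt{2/\pi}$ for the Gaussian branch. Your bound $2b_{D'}(a^*)$ for rollback is more careful than the paper's remark that rollback is ``no worse than the learning algorithm'' when $a_0\neq a^*$, since rollback can raise the LCB of $a_0$ and make it the output.

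The step that fails is your primary resolution of the $\gamma<\gamma_0$ regime. The claimed absorption $\sqrt{\ln(Nm)/(N(a_0)-k)}\lesssim\sqrt{\ln(Nm)/N(a_0)}+k\gamma/N(a_0)$ runs the wrong way. $\gamma_0$ is the crossing point, and for $\gamma<\gamma_0$ the Gaussian bound is the \emph{smaller} of the two. Concretely, take $k=N(a_0)-\sqrt{N(a_0)\ln(Nm)/2}$ (allowed by the hypothesis), $\gamma$ small, and $N_*=N(a_0)$. Then the rollback term is of order $(\ln(Nm)/N(a_0))^{1/4}$, while the target bound is of order $\sqrt{\ln(Nm)/N(a_0)}$. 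So on the event $\{a^*=a_0,\ \hat a\neq a_0\}$ you cannot use the generic rollback bound $2b_{D'}(a^*)$ at all.

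Your fallback is the right argument, and it is exactly the paper's. When $\hat a\neq a_0$, rollback modifies only the coordinate of $a_0$, so if $a_0=a^*$ then $\hat a'\in\{\hat a,a^*\}$. The regret is then either $0$ or the learner's own regret $\mu(a^*)-\mu(\hat a)$.

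As written, though, your intermediate inequality $\mu(a_0)-\mu(\hat a)\le 2b(\hat a)+2b(a_0)$ does not give your conclusion, because $b(\hat a)$ is not controlled by $N(a_0)$ or $N_*$. Use instead, on $\mathcal{E}$,
$\mu(\hat a)\ge \mathrm{LCB}_D(\hat a)\ge \mathrm{LCB}_D(a_0)\ge \mu(a_0)-2b(a_0)$.
This bounds the regret by $2b(a_0)=\sqrt{2\ln(Nm)/N(a_0)}$, with no dependence on $N(a_0)-k$.

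The same observation shows that in the Gaussian branch with $\hat a'=a_0\neq a^*$ you need no $\nu$ term, since then $\hat a'=\hat a$ and the regret is at most $2b(a^*)$. Your extra terms there are harmless. Once the fallback becomes the main argument and this inequality is corrected, your proof is complete and agrees with the paper's.
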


\begin{proof}
Note that through the proof, $a_0$ has been fixed whereas $a^*$ depends on the reward distribution underlying $D(\vN)$. When $\gamma < \gamma_0$, we first consider the case where $a_0 = a^*$. Denote $\Eps_1$ as the event that $\abs{\hat \mu(a) - \mu(a)} \leq \sqrt{\frac{\ln \frac{ m}{\tau}}{2N(a)}}$ for all arms $a$ on the dataset $D$. Applying Hoeffding's inequality for a fixed arm $a$, we have 
\[
    \Pr{\abs{\hat \mu(a) - \mu(a)} \leq \sqrt{\frac{\ln \frac{ m}{\tau}}{2N(a)}}} \ge 1 - \frac{2\tau}{ m},
\]
which implies that $\Pr{\Eps_1} \ge 1- 2\tau$ using the union bound. Furthermore, regardless of whether $a_0 = \hat a$ occurs or not, we could always let $\Eps_2$ be the event that
$\abs{\nu} \leq \sigma\sqrt{2\ln\frac{ m}{\tau}} = t$, according to the property of Gaussian $\+M_d$, we have 
\[
    \Pr{\abs{\nu} \leq t} \ge 1 - 2e^{-\frac{t^2}{2\sigma^2}} = 1- \frac{2\tau}{ m},
\]
then we obtain $\Pr{\Eps_2} \ge 1- \tau$ since $m \ge 2$. Therefore under the condition that $\Eps = \Eps_1 \cap \Eps_2$ occurs, for every $a\in \+A$ and $\nu$, we have 
\[
    \abs{\hat\mu(a) - \mu(a)} \leq b(a), \quad \abs{\nu} \leq \sigma\sqrt{2\ln\frac{ m}{\tau}}.  
\]
If the event $a_0 = \hat a$ occurs, \Cref{alg:single} will add Gaussian noise sampled from $\+N(0,\sigma^2 )$. In view of the definition of $\hat a'$, for $\hat a' \neq a^*$, we have 
\[
    \hat\mu(a^*) - b(a^*) + \nu \leq \hat\mu(\hat a') - b(\hat a') 
    \Rightarrow \mu(a^*) \leq \mu(\hat a') + 2b(a^*) - \nu,
\] 
and this occurs only if $\nu < 0$. Then we have
\begin{align*}
    \E{\mu(a^*) - \mu(\hat a') \mid a_0 = \hat a} 
    &\leq \sqrt{\frac{2\ln \frac{ m}{\tau}}{N(a^*)}} + \frac{1}{\Pr{\Eps_2}}\int_0^{\sigma\sqrt{2\ln\frac{ m}{\tau}}}\frac{x}{\sqrt{2\pi}\sigma}e^{-\frac{x^2}{2\sigma^2}}dx \\ 
    &\leq \sqrt{\frac{2\ln \frac{ m}{\tau}}{N(a_0)}} + \frac{\sigma}{\sqrt{2\pi}}.
\end{align*}
If the event $a_0 \neq \hat a$ occurs, \Cref{alg:single} will execute rollback on $a_0 = a^*$. Before the rollback, we know that $\wh \vf_{(\hat a)}(D)$ is the largest among all $a\in \+A$. Since $\wh \vf_{(a)}(D') = \wh \vf_{(a)}(D)$ for every $a\neq a^*$, $\hat a'$ is either $\hat a$ or $a^*$ after the rollback. Then the expected sub-optimality will not be worse than the learning algorithm, from \Cref{lem:ub_LCB}, we have
\[
    \E{\mu(a^*) - \mu(\hat a') \mid a_0 \neq \hat a} \leq \sqrt{\frac{2\ln \frac{ m}{\tau}}{N(a^*)}} + \tau = \sqrt{\frac{2\ln \frac{ m}{\tau}}{N(a_0)}} + \tau.
\]
In conclusion, the expected sub-optimality satisfies:
\begin{align*}
    \E{\mu(a^*) - \mu(\hat a') } 
     &\leq \E{ \mu(a^*) - \mu(\hat a') \mid \Eps} +  
     \Pr{\Eps^c}\cdot \E{\mu(a^*) - \mu(\hat a') \mid \Eps^c} \\ 
     &= \E{\tp{\mu(a^*) - \mu(\hat a')}\1{a_0 = \hat a} \mid \Eps} +  \E{\tp{\mu(a^*) - \mu(\hat a')}\1{a_0 \neq \hat a} \mid \Eps} + \sum_{j=1}^2 \Pr{\Eps_j^c} \\ 
     &\leq \Pr{a_0 = \hat a \mid \Eps}\cdot \tp{\sqrt{\frac{2 \ln \frac{ m}{\tau}}{N(a_0)}} + \frac{\sigma}{\sqrt{2\pi}} } + \Pr{a_0 \neq \hat a \mid \Eps}\cdot \tp{ \sqrt{\frac{2\ln \frac{ m}{\tau}}{N(a_0)}} + \tau } + 3\tau \\
     &\leq \sqrt{\frac{2\ln \frac{ m}{\tau}}{N(a_0)}} + \frac{\sigma}{\sqrt{2\pi}} + 3\tau \\ 
     &= \sqrt{\frac{2\ln \tp{Nm}}{N(a_0)}} + \frac{3k\gamma}{2\sqrt{2\pi} N(a_0)} + \frac{3}{N}. 
\end{align*}

The second case is $a_0 \neq a^*$. If the event $a_0 = \hat a$ occurs, then \Cref{alg:single} adds Gaussian noise only to $\wh \vf_{(a_0)}(D)$, while leaving $\wh \vf_{(a)}(D)$ unchanged for all $a\neq a_0$. If the noise realization $\nu \ge 0$, the selected arm remains $\hat a' = a_0$, and the resulting sub-optimality is controlled by the guarantee of the learning algorithm. If $\nu \leq 0$, the guarantee still holds, since all coordinates of $\wh \vf_{(a)}(D)$ for $a\neq a_0$ are unchanged. 

If instead the event $a_0\neq \hat a$ occurs, then \Cref{alg:single} executes rollback on arm $a_0$, and the expected sub-optimality cannot be worse than that of the learning algorithm. Therefore, when $a_0\neq a^*$, the expected sub-optimality satisfies
\begin{align*}
 \E{\mu(a^*) - \mu(\hat a') } 
     &\leq  \E{\mu(a^*) - \mu(\hat a) } \leq \sqrt{\frac{2\ln \tp{Nm}}{N(a^*)}} + \tau \leq \sqrt{\frac{2\ln \tp{Nm}}{N_*}} + \frac{1}{N}.  
\end{align*}
In conclusion, when $\gamma < \gamma_0$, the upper bound satisfies
\begin{align*}
 \E{\mu(a^*) - \mu(\hat a')} 
     &=  
     \begin{cases}
        O\tp{\sqrt{\frac{\ln \tp{N m} }{N(a_0)}} + \frac{k\gamma}{N(a_0)}}, & a_0 = a^*, \\
        O\tp{\sqrt{\frac{\ln \tp{N m} }{N_*}}}, & a_0 \neq a^*.
    \end{cases}
\end{align*}
Thus 
\[
SubOpt(\vN,N_*,k,\eps,\delta) = O\tp{\max\set{\sqrt{\frac{\ln \tp{N m} }{N(a_0)} } + \frac{k \gamma}{N(a_0)}, \sqrt{\frac{\ln \tp{N m} }{N_*}} }}.
\]

When $\gamma \ge \gamma_0$. \Cref{alg:single} executes rollback. When $a_0 \neq a^*$, we could similarly prove that the performance will be guaranteed by the learning algorithm,  
\begin{align*}
    \E{\mu(a^*) - \mu(\hat a') } \leq \sqrt{\frac{2\ln \tp{Nm}}{N_*}} + \frac{2}{N}.  
\end{align*}
When $a_0 = a^*$, we change the definition of $\Eps_2$ to be the event that $\abs{\hat \mu(a) - \mu(a)} \leq \sqrt{\frac{\ln \frac{ m}{\tau}}{2N(a)}}$ for every $a\in \+A$ and $\abs{\hat \mu'(a_0) - \mu(a_0)} \leq \sqrt{\frac{\ln \frac{m}{\tau}}{2(N(a_0) - k)}}$ for $a_0$ after the rollback of $k$ points. According to our assumption, $U$ is independent of the reward of $a_0$, thus $\hat \mu'(a_0)$ is unbiased with respect to $\mu(a_0)$. We could also applying Hoeffding's inequality to bound the probability 
\[
    \Pr{\abs{\hat \mu'(a_0) - \mu(a_0)} \leq \sqrt{\frac{\ln \frac{m}{\tau}}{2(N(a_0) - k)}}} \ge 1- \frac{2\tau}{ m} \ge 1-\tau,
\]
which implies that $\Pr{\Eps_1} \ge 1- 3\tau$ using the union bound. Therefore the upper bound satisfies
\begin{align*}
 \E{\mu(a^*) - \mu(\hat a')} 
     &\leq \E{ \mu(a^*) - \mu(\hat a') \mid \Eps} +  
     \Pr{\Eps^c}\cdot \E{\mu(a^*) - \mu(\hat a') \mid \Eps^c} \\ 
     &\leq \sqrt{\frac{2\ln \frac{ m}{\tau}}{N(a_0) - k}} + 4\tau = \sqrt{\frac{2\ln \tp{Nm}}{N(a_0) - k}} + \frac{4}{N}. 
\end{align*}
In conclusion, when $\gamma \ge \gamma_0$, the upper bound satisfies
\begin{align*}
 \E{\mu(a^*) - \mu(\hat a')} 
     &=  
     \begin{cases}
        O\tp{\sqrt{\frac{\ln \tp{N m} }{N(a_0) - k}}}, & a_0 = a^*, \\
        O\tp{\sqrt{\frac{\ln \tp{N m} }{N_*}}}, & a_0 \neq a^*.
    \end{cases}
\end{align*}
Then we obtain that 
\begin{align*}
    &\phantomeq SubOpt(\vN,N_*,k,\eps,\delta) = O\tp{\max\set{\sqrt{\frac{\ln \tp{N m} }{N(a_0) - k}}, \sqrt{\frac{\ln \tp{N m} }{N_*}} }}. 
\end{align*}  
\end{proof}

\paragraph{How to obtain $\gamma_0$}
The threshold $\gamma_0$ is optimized in the case $a_0 = a^*$, since the upper bound is fixed to be $O\tp{\sqrt{\frac{\ln \tp{N m} }{N(a^*)}}}$ when $a_0\neq a^*$. Let 
\[
    \sqrt{\frac{2\ln \frac{ m}{\tau}}{N(a_0)-k}} = \sqrt{\frac{2\ln \frac{ m}{\tau}}{N(a_0)}} + \frac{3k\gamma}{2\sqrt{2\pi} N(a_0)},
\]
and we could obtain that $\gamma = \frac{4\sqrt{\pi N(a_0)\ln \frac{m}{\tau} }}{3\sqrt{N(a_0)-k}\tp{\sqrt{N(a_0)-k} + \sqrt{N(a_0)}} } \approx \frac{4}{3}\sqrt{\frac{\pi \ln \frac{m}{\tau} }{N(a_0)-k}} = \gamma_0$.

We first give the lemma for binomial random variable $N(a^*)$.
\begin{lemma}[\textbf{Lemma 4} in \cite{RZMJ21}]
    \label{lem:binomial}
    With probability at least $1 - e^{-\frac{Nd(a^*)}{8}}$, one has
    \[
        N(a^*) \ge \frac{1}{2}Nd(a^*).
    \]
\end{lemma}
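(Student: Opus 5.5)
Under $\+M_d$ we have $N(a^*) = \sum_{i=1}^N \1{a_i = a^*}$, a sum of $N$ i.i.d.\ Bernoulli$(d(a^*))$ variables with mean $\mu_0 \defeq Nd(a^*)$. I would prove the statement with the multiplicative Chernoff lower-tail bound: for $\theta\in(0,1)$,
\[
\Pr{N(a^*) < (1-\theta)\mu_0} \le \exp\tp{-\frac{\theta^2 \mu_0}{2}}.
\]
Taking $\theta = \tfrac12$ gives $\Pr{N(a^*) < \tfrac12 Nd(a^*)} \le e^{-Nd(a^*)/8}$. The complement event therefore has probability at least $1 - e^{-Nd(a^*)/8}$, which is exactly the claim.

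The remaining step is the Chernoff bound itself. I would derive it in the standard way.
\begin{itemize}
\item For $\lambda>0$, Markov's inequality applied to $e^{-\lambda N(a^*)}$ gives
\[
\Pr{N(a^*)\le (1-\theta)\mu_0}\le e^{\lambda(1-\theta)\mu_0}\,\E{e^{-\lambda N(a^*)}}.
\]
\item By independence and $1+x\le e^x$, the moment generating factor satisfies
\[
\E{e^{-\lambda N(a^*)}} = \tp{1-d(a^*)(1-e^{-\lambda})}^N \le \exp\tp{-\mu_0(1-e^{-\lambda})}.
\]
\item Choosing $\lambda = -\ln(1-\theta)$ yields the bound $\tp{e^{-\theta}/(1-\theta)^{1-\theta}}^{\mu_0}$.
\item The exponent is at most $-\theta^2\mu_0/2$ because $(1-\theta)\ln(1-\theta)\ge -\theta+\theta^2/2$ on $[0,1)$. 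This inequality follows by expanding the series $(1-\theta)\ln(1-\theta) = -\theta + \sum_{j\ge2}\frac{\theta^j}{j(j-1)}$, whose tail terms are all nonnegative.
\end{itemize}

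The only mildly delicate point is the elementary inequality in the last step; the argument reduces to a textbook Chernoff bound. Note that the bound is uniform in $d$ and in the realization, and it holds for any fixed arm in place of $a^*$. Later, this lets us combine it with $d(a^*)\ge 1/C^*$ to get $N(a^*)\ge N/(2C^*)$ with probability at least $1-e^{-N/(8C^*)}\ge 1-1/N$ under the condition $N>8C^*\ln N$.
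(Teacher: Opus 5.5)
Your proof is correct and is essentially the argument behind this lemma. The paper gives no proof of its own and imports the result from \citet{RZMJ21}, where it is the multiplicative Chernoff lower-tail bound $\mathbb{P}(X \le (1-\theta)\mu_0) \le e^{-\theta^2\mu_0/2}$ for the $\mathrm{Binomial}(N, d(a^*))$ count, applied with $\theta = 1/2$, exactly as you do; your self-contained derivation of that bound and your closing observation that $N > 8C^*\ln N$ makes the failure probability at most $1/N$ are both correct.
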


\begin{theorem}[\Cref{thm:dist_single_ub} restated]
Consider any offline dataset $D\sim \+D^N$, where the behavior policy satisfies $d(a^*) \ge \frac{1}{C^*}$ for some $C^* > 1$, any unlearning request $U$ from $a_0$. $\abs{U} = k$. When $N > 8C^*\ln N$ and $k \leq  N(a_0) - \sqrt{\frac{N(a_0) \ln \frac{m}{\tau}}{2}}$, by setting $\tau = \frac{1}{N}$, the arm $\hat a'$ returned by \Cref{alg:single} satisfies 
\begin{align*}
    SubOpt(N,C^*,k,\eps,\delta) = O\Bigg( \min\Bigg\{ \sqrt{\frac{C^*\ln \tp{N m} }{\max\set{1, N - 2kC^*}}}, \sqrt{\frac{C^*\ln \tp{N m} }{N}} + \frac{ k \gamma C^*}{N}  \Bigg \} \Bigg).
\end{align*}   
\end{theorem}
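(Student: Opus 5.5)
We reduce the distribution-model bound to the fixed-sample analysis of \Cref{thm:fixed_single_ub} by conditioning on the realized count vector $\wh\vN$. The key is a high-probability lower bound on $N(a^*)$. By \Cref{lem:binomial} and $d(a^*)\ge 1/C^*$, the event $\Eps_0=\set{N(a^*)\ge \frac{N}{2C^*}}$ holds with probability at least $1-e^{-N/(8C^*)}$. The assumption $N>8C^*\ln N$ makes this failure probability at most $1/N=\tau$, which is absorbed into the additive $O(1/N)$ terms. On $\Eps_0$ we may treat $D$ as a fixed-sample dataset with $N_*=\frac{N}{2C^*}$. Rewards are independent of arms, and $U$ is independent of rewards, so conditionally on $\wh\vN$ the Hoeffding and Gaussian-tail events from the proof of \Cref{thm:fixed_single_ub} still hold with the same probabilities. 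The case analysis then applies verbatim.

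Next we run the case split on whether $a_0=a^*$.

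\emph{Case $a_0\neq a^*$.} As in the $\+M_f$ proof, neither branch of \Cref{alg:single} hurts arm $a^*$'s standing. Adding noise to $a_0=\hat a$, or rolling back $a_0$, leaves $\wh\vf_{(a^*)}$ untouched, so the sub-optimality is bounded by that of \Cref{alg:LCB}. By \Cref{lem:ub_LCB} this gives $O\tp{\sqrt{C^*\ln(Nm)/N}}$, which is dominated by both terms in the minimum.

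\emph{Case $a_0=a^*$.} On $\Eps_0$ we have $N(a_0)\ge N/(2C^*)$, and we bound each branch of the algorithm separately.
\begin{itemize}
\item \emph{Gaussian branch.} The $\+M_f$ computation gives $\sqrt{2\ln(Nm)/N(a_0)}+\frac{3k\gamma}{2\sqrt{2\pi}N(a_0)}+O(1/N)$. Substituting $N(a_0)\ge N/(2C^*)$ yields $O\tp{\sqrt{C^*\ln(Nm)/N}+k\gamma C^*/N}$.
\item \emph{Rollback branch.} The bound is $\sqrt{2\ln(Nm)/(N(a_0)-k)}$, and $N(a_0)-k\ge \frac{N-2kC^*}{2C^*}$. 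When $N-2kC^*\ge 1$ this gives $O\tp{\sqrt{C^*\ln(Nm)/(N-2kC^*)}}$. Otherwise we use the assumption $k\le N(a_0)-\sqrt{N(a_0)\ln(m/\tau)/2}$, which ensures $N(a_0)-k\ge 1$ (indeed of order $\sqrt{N(a_0)\ln}$). The branch is then trivially $O(\sqrt{C^*\ln(Nm)})$, matching the $\max\set{1,\cdot}$ form.
\end{itemize}

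To combine the branches, recall that the algorithm chooses Gaussian exactly when $\gamma<\gamma_0$ and rollback otherwise. The threshold $\gamma_0$ was derived precisely so that the chosen branch's bound is at most a constant times the other branch's bound, with both bounds evaluated at the same $N(a_0)$. Taking the smaller of the two branch bounds, and then inserting $N(a_0)\ge N/(2C^*)$ in each, yields the $\min\{\cdot,\cdot\}$ expression. Finally we add the $\Pr{\Eps_0^c}\le 1/N$ term, using that the sub-optimality is at most 1, and take expectation over $\wh\vN$.

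The main obstacle is the rigorous "min" step. The algorithm uses the realized random $N(a_0)$ in $\gamma_0$, whereas the claimed bound is stated via the worst case $N(a_0)=N/(2C^*)$. The two branch bounds are both monotone decreasing in $N(a_0)$, and the choice of branch is near-optimal pointwise in $N(a_0)$ by the $\gamma_0$ calculation, with the $\approx$ there losing only a factor $\le 2$ since $\sqrt{N(a_0)-k}+\sqrt{N(a_0)}\in[\sqrt{N(a_0)},2\sqrt{N(a_0)}]$. We will therefore verify that the pointwise minimum is at most a constant times each branch bound and then apply the lower bound on $N(a_0)$ to each. Monotonicity lets the worst case pass through the minimum. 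A minor point is that the Gaussian-branch event in the $a_0=a^*$ case also requires $\hat a=a_0$, which the $\+M_f$ argument already handles.
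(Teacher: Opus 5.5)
Your proposal is correct and takes essentially the same route as the paper's proof. The paper likewise uses \Cref{lem:binomial} with $N>8C^*\ln N$ to get $N(a^*)\ge N/(2C^*)$ except with probability $\tau$, intersects this with the Hoeffding events (including the one for the rolled-back estimate $\hat\mu'(a_0)$) and the Gaussian-tail event, and splits on whether $a_0=a^*$. It then combines the Gaussian and rollback bounds $R_g,R_r$ into a minimum via $\gamma_0$, and only afterwards substitutes the lower bound on $N(a_0)$.

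Two of your steps are more careful than the paper's. Your factor-of-$2$ handling of the $\gamma_0$ approximation is sharper: the paper simply asserts $R_g<R_r$ whenever $\gamma<\gamma_0$, which is not exactly true, because $\gamma_0$ can exceed the exact crossing point by up to a factor of $2$. The paper also writes the $\max\{1,\cdot\}$ term without justification, whereas you derive it from $N(a_0)-k\ge 1$.
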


\begin{proof}
We first consider the case where $a_0 = a^*$. Denote $\Eps_1$ as the event that $N(a^*) \ge \frac{N}{2C^*}$. From \Cref{lem:binomial} we know that $\Pr{\Eps_1} \ge 1 - e^{-\frac{N}{8C^*}} \ge 1 - \tau$. Denote $\Eps_2$ as the event that $\abs{\hat \mu(a) - \mu(a)} \leq \sqrt{\frac{\ln \frac{ m}{\tau}}{2N(a)}}$ for all arms $a\in \+A$ and $\abs{\hat \mu'(a_0) - \mu(a_0)} \leq \sqrt{\frac{\ln \frac{m}{\tau}}{2(N(a_0) - k)}}$ for $a_0$ after the rollback of $k$ points. According to our assumption, $U$ is independent of the reward of $a_0$, thus $\hat \mu'(a_0)$ is unbiased with respect to $\mu(a_0)$. Applying Hoeffding's inequality for a fixed arm $a$, we have 
\[
    \Pr{\abs{\hat \mu(a) - \mu(a)} \leq \sqrt{\frac{\ln \frac{ m}{\tau}}{2N(a)}}} \ge 1 - \frac{2\tau}{ m},
\]
and
\[
    \Pr{\abs{\hat \mu'(a_0) - \mu(a_0)} \leq \sqrt{\frac{\ln \frac{m}{\tau}}{2(N(a_0) - k)}}} \ge 1- \frac{2\tau}{ m} \ge 1-\tau,
\]
which implies that $\Pr{\Eps_2} \ge 1- 3\tau$ using the union bound. Furthermore, regardless of the branch \Cref{alg:single} takes, we could always let $\Eps_3$ be the event that
$\abs{\nu} \leq \sigma\sqrt{2\ln\frac{ m}{\tau}} = t$, according to the property of Gaussian $\+M_d$, we have 
\[
    \Pr{\abs{\nu} \leq t} \ge 1 - 2e^{-\frac{t^2}{2\sigma^2}} = 1- \frac{2\tau}{ m},
\]
then we obtain $\Pr{\Eps_3} \ge 1- \tau$ since $m \ge 2$. Therefore under the condition that $\Eps = \Eps_1 \cap \Eps_2 \cap \Eps_3$ occurs, for every $a\in \+A$ and $\nu$, we have 
\[
    N(a_0) \ge \frac{N}{2C^*}, \quad \abs{\hat\mu(a) - \mu(a)} \leq b(a), \quad \abs{\hat \mu'(a_0) - \mu(a_0)} \leq \sqrt{\frac{\ln \frac{m}{\tau}}{2(N(a_0) - k)}}, \quad \abs{\nu} \leq \sigma\sqrt{2\ln\frac{ m}{\tau}}.  
\]
If the event $a_0 = \hat a$ and $\gamma < \gamma_0$ occurs, \Cref{alg:single} will add Gaussian noise sampled from $\+N(0,\sigma^2 )$. In view of the definition of $\hat a'$, for $\hat a' \neq a^*$, we have 
\[
    \hat\mu(a^*) - b(a^*) + \nu \leq \hat\mu(\hat a') - b(\hat a') 
    \Rightarrow \mu(a^*) \leq \mu(\hat a') + 2b(a^*) - \nu,
\] 
and this occurs only if $\nu < 0$. Then we have
\begin{align*}
    \E{\mu(a^*) - \mu(\hat a') \mid a_0=\hat a, \gamma< \gamma_0} 
    &\leq \sqrt{\frac{2\ln \frac{ m}{\tau}}{N(a^*)}} + \frac{1}{\Pr{\Eps_2}} \int_0^{\sigma\sqrt{2\ln\frac{ m}{\tau}}}\frac{x}{\sqrt{2\pi}\sigma}e^{-\frac{x^2}{2\sigma^2}}dx \\ 
    &\leq \sqrt{\frac{2\ln \frac{ m}{\tau}}{N(a_0)}} + \frac{\sigma}{\sqrt{2\pi}} = R_g.
\end{align*}
If either $a_0 \neq \hat a$ or $\gamma \ge \gamma_0$ occurs, \Cref{alg:single} executes rollback. Then the sub-optimality satisfies
\begin{align*}
    \E{\mu(a^*) - \mu(\hat a') \mid a_0=\hat a, \gamma \ge \gamma_0} 
    &\leq \sqrt{\frac{2\ln \frac{ m}{\tau}}{N(a_0) - k}} = R_r.
\end{align*}
According to the definition of $\gamma_0$, it is the threshold at which $R_g = R_r$; moreover, for $\gamma < \gamma_0$, we have $R_g < R_r$. Therefore, 
\[
    \E{\mu(a^*) - \mu(\hat a') \mid a_0=\hat a} = \1{R_g < R_r}\cdot R_g + \1{R_g \ge R_r}\cdot R_r = \min\set{R_r,R_g}. 
\]
If the event \(a_0 \neq \hat a\) occurs, then \Cref{alg:single} executes rollback on arm \(a_0=a^*\). Prior to rollback, \(\wh \vf_{(\hat a)}(D)\) is the largest coordinate among all \(a\in\mathcal A\). Moreover, rollback only modifies the coordinate corresponding to \(a^*\): for every \(a\neq a^*\), we have \(\wh \vf_{(a)}(D')=\wh \vf_{(a)}(D)\). Consequently, after rollback the selected arm \(\hat a'\) can only be either \(\hat a\) or \(a^*\). Therefore, the expected sub-optimality is no worse than that of the learning algorithm. By \Cref{lem:ub_LCB}, we have
\[
    \E{\mu(a^*) - \mu(\hat a') } \leq \sqrt{\frac{2\ln \frac{ m}{\tau}}{N(a^*)}} + \tau = \sqrt{\frac{2\ln \frac{ m}{\tau}}{N(a_0)}} + \tau.
\]
Therefore when $a_0=a^*$, the expected sub-optimality satisfies:
\begin{align*}
    \E{\mu(a^*) - \mu(\hat a') } 
     &\leq \E{ \mu(a^*) - \mu(\hat a') \mid \Eps} +  
     \Pr{\Eps^c}\cdot \E{\mu(a^*) - \mu(\hat a') \mid \Eps^c} \\ 
     &= \E{\tp{\mu(a^*) - \mu(\hat a')}\1{a_0 = \hat a} \mid \Eps} +  \E{\tp{\mu(a^*) - \mu(\hat a')}\1{a_0 \neq \hat a} \mid \Eps} + \sum_{j=1}^3 \Pr{\Eps_j^c} \\ 
     &\leq \Pr{a_0 = \hat a \mid \Eps}\cdot \min\set{R_r,R_g} + \Pr{a_0 \neq \hat a \mid \Eps}\cdot \tp{ \sqrt{\frac{2\ln \frac{ m}{\tau}}{N(a_0)}} + \tau } + 5\tau \\
     &\leq \min\set{R_r,R_g} + 5\tau \\ 
     &\leq \min\set{\sqrt{\frac{4C^*\ln \tp{N m}}{\max\set{1, N - 2kC^*}}}, \sqrt{\frac{4C^*\ln \tp{N m}}{N}} + \frac{3k\gamma C^*}{\sqrt{2\pi}N}} + \frac{5}{N}. 
\end{align*}

The second case is $a_0 \neq a^*$. If the event $a_0 = \hat a$ and $x=1$ occurs, \Cref{alg:single} will add Gaussian noise to $\wh \vf_{(a_0)}(D)$. However, this will not affect $\wh \vf_{(a)}(D)$, $a\neq a_0$. If $\nu \ge 0$, $\hat a'$ will still be $a_0$, the performance will be guaranteed by the learning algorithm. If $\nu \leq 0$, the performance will also be guaranteed by the learning algorithm since $\wh \vf_{(a)}(D)$ of every $a\neq a_0$ remains the same. 

If either $a_0 \neq \hat a$ or $x=0$ occurs, \Cref{alg:single} will execute rollback on $a_0$, and the expected sub-optimality will not be worse than that of the learning algorithm, thus in conclusion, when $a_0 \neq a^*$, the expected sub-optimality satisfies
\begin{align*}
 \E{\mu(a^*) - \mu(\hat a') } 
     &\leq  \E{\mu(a^*) - \mu(\hat a) } \leq 2\sqrt{\frac{C^*\ln \tp{Nm}}{N}} + \frac{2}{N}.  
\end{align*}
Therefore \Cref{alg:single} satisfies that 
\[
 \E{\mu(a^*) - \mu(\hat a')} =  
     \begin{cases}
        O\tp{\min\set{\sqrt{\frac{C^*\ln \tp{N m} }{N}} + \frac{ k \gamma C^*}{N}, \sqrt{\frac{C^*\ln \tp{N m} }{\max\set{1, N - 2kC^*}}} }}, & a_0 = a^*, \\
        O\tp{\sqrt{\frac{C^*\ln \tp{Nm}}{N}}}, & a_0 \neq a^*,
    \end{cases}
\]
which is
\begin{align*}
    SubOpt(N,C^*,k,\eps,\delta) = O\tp{\min\set{\sqrt{\frac{C^*\ln \tp{N m} }{N}} + \frac{ k \gamma C^*}{N}, \sqrt{\frac{C^*\ln \tp{N m} }{\max\set{1, N - 2kC^*}}} }}.
\end{align*}   
\end{proof}

\subsection{Lower Bounds for Single-Source Unlearning} \label{app:algo_single_lb}

\begin{theorem}[\Cref{thm:fixed_single_lb} restated] \label{thm:app_lb_fixed_single}
    Let $\eps \ge 0$, $\delta \leq \frac{1}{8\sqrt{e}}$. For any unlearning request $U$ from $a_0$, the lower bound of sub-optimality satisfies
    \[
        SubOpt(\vN,N_*,k,\eps,\delta) = \Omega\tp{e^{-\eps}\sqrt{\frac{1}{N(a_0)-k}}}.
    \]
\end{theorem}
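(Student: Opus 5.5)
The proof combines the $(\eps,\delta)$-UL inequality with Le Cam's two-point method. The unlearning constraint transfers any lower bound for learning from $D\setminus U$ to the actual unlearned output, at a cost of the factor $e^{-\eps}$ and an additive $\delta$.

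\textbf{Step 1: reduction.} Fix any pair $(\pi,\pi')$ satisfying $(\eps,\delta)$-UL. For any event $A\subseteq\+A$,
\[
\Pr{\pi'(U,\pi(D),T(D))\in A}\ \ge\ e^{-\eps}\big(\Pr{\pi'(\emptyset,\pi(D\setminus U),T(D\setminus U))\in A}-\delta\big).
\]
The map $D\setminus U\mapsto \pi'(\emptyset,\pi(D\setminus U),T(D\setminus U))$ is an ordinary offline learner whose input is a dataset with count vector $\vN'$. This vector agrees with $\vN$ except that $N'(a_0)=N(a_0)-k$. Any estimator lower bound on the probability of choosing a wrong arm from $D\setminus U$ therefore transfers to $\hat a'$, up to the factor $e^{-\eps}$ and the additive $\delta$.

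\textbf{Step 2: two-point instances.} I will take two instances with Bernoulli rewards. All arms other than $a_0$ and some second arm $a_1$ get mean $0$, or any value well below the rest. Arm $a_1$ gets a known mean $1/2$, identical in both instances. Arm $a_0$ gets mean $1/2+\Delta$ in instance $P_1$ and $1/2-\Delta$ in instance $P_2$, with $\Delta = c/\sqrt{N(a_0)-k}$ for a small constant $c$. Then $a^*=a_0$ in $P_1$ and $a^*=a_1$ in $P_2$. Any output arm incurs sub-optimality at least $\Delta$ in at least one instance, apart from the very bad arms, which are even worse. If $N_*$ forces $N(a^*)\ge N_*$ to hold for $a_1$ as well, that is automatic because counts are fixed; I only need to choose $a_1$ with $N(a_1)\ge N_*$, which is possible since $a_0$ is the optimal arm in $P_1$ and the coverage assumption must hold there too.

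The two instances differ only in the rewards of $a_0$. Since $U$ is chosen independently of rewards, the laws of $D\setminus U$ under $P_1$ and $P_2$ differ only through $N(a_0)-k$ i.i.d.\ Bernoulli draws. Hence
\[
\mathrm{KL}(P_1^{D\setminus U}\,\|\,P_2^{D\setminus U}) = (N(a_0)-k)\,\mathrm{kl}(1/2+\Delta,1/2-\Delta)\le 16 (N(a_0)-k)\Delta^2 = O(c^2).
\]
By Bretagnolle--Huber, or by Pinsker plus Le Cam, any test $\psi$ computed from $D\setminus U$ satisfies $P_1(\psi\ne a_0)+P_2(\psi\ne a_1)\ge \tfrac12 e^{-\mathrm{KL}}\ge \tfrac12 e^{-1/2}$ once $c$ is chosen so that $\mathrm{KL}\le 1/2$.

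\textbf{Step 3: combination.} Apply Step 1 with $A=\{a\ne a^*\}$ in each instance and sum the two inequalities:
\[
P_1(\hat a'\ne a_0)+P_2(\hat a'\ne a_1)\ge e^{-\eps}\big(\tfrac12 e^{-1/2}-2\delta\big)\ge e^{-\eps}\tfrac{1}{4\sqrt e}.
\]
The last inequality uses $\delta\le \frac{1}{8\sqrt e}$, which is exactly where that hypothesis enters. Since the supremum over instances is at least the average,
\[
\sup \E{\mu(a^*)-\mu(\hat a')}\ge \tfrac{\Delta}{2}\cdot e^{-\eps}\tfrac{1}{4\sqrt e} = \Omega\big(e^{-\eps}/\sqrt{N(a_0)-k}\big).
\]

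\textbf{Main obstacle.} The delicate point is Step 1 together with Step 2. The UL inequality compares $\hat a'$ not to a learner on $D\setminus U$ in the abstract, but to the specific pipeline $\pi'(\emptyset,\pi(\cdot),T(\cdot))$. I must check that this pipeline is a measurable function of $D\setminus U$ alone, plus independent randomness and the request metadata. That metadata, such as $a_0$ and $k$, is reward-independent and identical across $P_1$ and $P_2$, so it does not break the KL computation. I also need the UL inequality to hold for each fixed realization of the rewards of $U$, which it does because it holds for all $D$. Averaging over the rewards of $U$ then preserves the inequality.
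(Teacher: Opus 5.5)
Your argument is correct and is essentially the paper's proof. The $(\varepsilon,\delta)$-UL inequality moves the error probability onto the pipeline run on $D\setminus U$. Because $U$ is reward-independent, $D\setminus U$ is distributed as a fresh fixed-count dataset with $N(a_0)$ replaced by $N(a_0)-k$. Le Cam with Bretagnolle--Huber and the Bernoulli KL on those $N(a_0)-k$ samples then gives $\Delta\asymp 1/\sqrt{N(a_0)-k}$, and $\delta\le\frac{1}{8\sqrt{e}}$ absorbs the additive term exactly as in the paper. The paper differs only cosmetically: it uses two arms and the bound $\mathrm{KL}\le 10\Delta^2(N(a_0)-k)$ in place of your $16\Delta^2$.

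The one sentence to fix is your coverage justification. The condition $N(a^*)\ge N_*$ in $P_1$ only gives $N(a_0)\ge N_*$ and says nothing about any other arm. So the existence of $a_1\neq a_0$ with $N(a_1)\ge N_*$ must be assumed of the count vector. The paper makes the same tacit assumption: it uses two-arm counts $(N(a_0),N-N(a_0))$ and never checks $N-N(a_0)\ge N_*$. The assumption cannot be derived. If $a_0$ were the only arm with at least $N_*$ samples, every admissible instance would have $a^*=a_0$. Then the constant rule that always outputs $a_0$ would be $(0,0)$-UL with zero sub-optimality, and the bound would fail.
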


\begin{proof}
To prove the lower bound, we could first relax the model by allowing that $\pi'$ has access to the entire dataset $D$ rather than $T(D)$. Since the point space $\+Z$ is bounded and providing additional information only helps. Moreover, because $\pi(D)$ is a deterministic function of $D$, it is without loss of generality to rewrite $\pi'(U,\pi(D),T(D))$ as $\pi'(U,D)$ and $\pi'(\emptyset,\pi(D\setminus U),T(D\setminus U))$ as $\pi'(\emptyset, D\setminus U)$. 

We construct two instances $I_1$ and $I_2$ to prove the lower bound, where the rewards are drawn from some Bernoulli distributions supported on $[0,1]$. Let $\vmu_{I_1}$ and $\vmu_{I_2}$ represent the vectors of arm mean rewards in $I_1$ and $I_2$. In particular,
\[
    \vmu_{I_1} = (\frac{1}{2}+\Delta, \frac{1}{2}), \quad \vmu_{I_2} = (\frac{1}{2}-\Delta, \frac{1}{2}).
\]
Here $0\leq \Delta \leq \frac{1}{5}$ and its value will be specified later. Let $D_1(\vN), D_2(\vN)$ denote the two datasets respectively, where $a_0=1$ and $\vN= (N(a_0), N-N(a_0))^\top$. In addition, the reward distribution underlying $D_1(\vN)$ is $\vmu_{I_1}$, while that underlying $D_2(\vN)$ is $\vmu_{I_2}$. Finally, let $U_1, U_2$ be the corresponding unlearning request from $D_1(\vN), D_2(\vN)$ respectively. 

Let $\E[D,U]{}$ represent the expectation conditioned on $D, U$. From the definition, for any algorithm $\pi,\pi'$, we have 
\begin{align}
    \E[D_1(\vN),U_1]{\mu(a^*) - \mu(\hat a')}  
    &= \Pr{\hat a' = a_2} \cdot \Delta \notag \\  
    &= \Pr{\pi'(U_1,D_1(\vN)) = a_2} \cdot \Delta  \notag  \\
    &\overset{(a)}{\ge} e^{-\eps}\tp{\Pr{\pi'(\emptyset, D_1(\vN)\setminus U_1) = a_2} -\delta} \cdot \Delta \notag  \\
    &= e^{-\eps} \Pr{\pi'(\emptyset, D_1(\vN)\setminus U_1) = a_2} \cdot \Delta - e^{-\eps}\delta  \Delta,
\end{align}
where $(a)$ is because $\pi$, $\pi'$ are $(\eps,\delta)$-UL. Similarly for $D_2, U_2$, we have
\begin{align}
    \E[D_2(\vN),U_2]{\mu(a^*) - \mu(\hat a')} \ge e^{-\eps} \Pr{\pi'(\emptyset, D_2(\vN)\setminus U_2) = a_1} \cdot \Delta - e^{-\eps}\delta  \Delta.
\end{align}
 
Next we generate $U_1$ by selecting $k$ points in the data points of $a_0$ from $D_1(\vN)$ uniformly and randomly, while $U_2$ is generated in the same way from $D_2(\vN)$. Therefore $U$ and the rewards of each point will be mutually independent. Denote $\vN'= (N(a_0)-k, N-N(a_0))^\top$ as the sample-count vector after unlearning request $U$. Then we apply the classic Le Cam’s method to give a lower bound of $SubOpt(\vN,N_*,k,\eps,\delta)$. Denote $\!{TV}(\cdot,\cdot)$ as the total variation distance between two distributions, $\!{KL}(\cdot \| \cdot)$ as the KL-divergence of two distributions. We have
\begin{align*}
    &\phantomeq SubOpt(\vN,N_*,k,\eps,\delta) \\
    &\ge \frac{1}{2} \tp{\E[D_1(\vN),U_1]{\mu(a^*) - \mu(\hat a')}  + \E[D_2(\vN),U_2]{\mu(a^*) - \mu(\hat a')}} \\
    &=\frac{e^{-\eps}\Delta}{2} \tp{\Pr{\pi'(\emptyset, D_1(\vN)\setminus U_1) = a_2} + \Pr{\pi'(\emptyset, D_2(\vN)\setminus U_2) = a_1}} - e^{-\eps}\delta\Delta  \\
    &\overset{(a)}{=} \frac{e^{-\eps}\Delta}{2} \tp{\Pr{\pi'(\emptyset, D_1(\vN')) = a_2} + \Pr{\pi'(\emptyset, D_2(\vN')) = a_1}} - e^{-\eps}\delta\Delta \\
    &\ge \frac{e^{-\eps}\Delta}{2} \tp{1 - \!{TV}\tp{\pi'(\emptyset, D_1(\vN')),\pi'(\emptyset, D_2(\vN'))}} - e^{-\eps}\delta\Delta \\
    &\overset{(b)}{\ge} \frac{e^{-\eps}\Delta}{4} \exp\tp{-\!{KL}\tp{\pi'(\emptyset, D_1(\vN')) \| \pi'(\emptyset, D_2(\vN'))}} - e^{-\eps}\delta\Delta  \\
    &\overset{(c)}{\ge} \frac{e^{-\eps}\Delta}{4} \exp\tp{-\!{KL}\tp{D_1(\vN') \| D_2(\vN')}} - e^{-\eps}\delta\Delta \\
    &\overset{(d)}{\ge} \frac{e^{-\eps}\Delta}{4} \exp\tp{-10\Delta^2 (N(a_0)-k)} - e^{-\eps}\delta\Delta,
\end{align*}
where $(a)$ is from $U$ and the reward being mutually independent, $(b)$ is from Bretagnolle–Huber inequality, $(c)$ is from the data processing inequality, $(d)$ is from direct calculation and the fact that $\ln \frac{1+2\Delta}{1-2\Delta} \leq 5\Delta$ when $0\leq \Delta \leq \frac{1}{5}$. By choosing $\Delta = \min\set{\frac{1}{5}, \frac{1}{2}\sqrt{\frac{1}{5(N(a_0)-k)}} }$, when $\delta \leq \frac{1}{8\sqrt{e}}$, we have  
\[
    SubOpt(\vN,N_*,k,\eps,\delta) \ge \frac{e^{-\eps}}{16}\sqrt{\frac{1}{5e(N(a_0)-k)}} = \Omega\tp{e^{-\eps}\sqrt{\frac{1}{N(a_0)-k}}}.
\]
\end{proof}

\begin{theorem}[\Cref{thm:dist_single_lb} restated]
Let $\eps \ge 0$, $\delta \leq \frac{1}{8\sqrt{e}}$. For any single-source unlearning request $U$, when $C^* \in [2,\infty)$, the lower bound of sub-optimality satisfies
\[
    SubOpt(N,C^*,k,\eps,\delta) = \Omega\tp{e^{-\eps}\sqrt{\frac{C^*}{N-k}}},
\]
while $C^*\in (1,2)$,
\[
    SubOpt(N,C^*,k,\eps,0) = \Omega\tp{\frac{2-C^*}{e^\eps}\exp\tp{-(N-k)(2-C^*)\ln\tp{\frac{2}{C^*-1}}} }.
\]
\end{theorem}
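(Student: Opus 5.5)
The plan follows the template of \Cref{thm:app_lb_fixed_single}. First relax the model so that $\pi'$ sees the whole dataset, writing $\pi'(U,D)$ and $\pi'(\emptyset,D\setminus U)$. Then use the $(\eps,\delta)$-UL inequality to move from the unlearned output on $D$ to the output of a learner trained on $D\setminus U$. Choose $U$ as $k$ uniformly random points of arm $a_0$, independent of the rewards. Under the distribution model, $D\setminus U$ is then distributed (conditionally on the arm counts) like a dataset of size $N-k$. This reduces both claims to ordinary offline-MAB lower bounds at sample size $N-k$, each multiplied by $e^{-\eps}$ and corrected by an additive $\delta$ term.

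\emph{Case $C^*\in[2,\infty)$.} I would use two arms with behavior policy $d=(1/C^*,1-1/C^*)$, so the coverage assumption holds in both instances. The means are $\vmu_{I_1}=(\frac12+\Delta,\frac12)$ and $\vmu_{I_2}=(\frac12-\Delta,\frac12)$. Only arm $1$'s reward law differs between them, so the KL divergence of one sample is at most $\frac{1}{C^*}\cdot 10\Delta^2$. Take $a_0=1$, which forces $k\le N(a_0)$; alternatively, delete from arm $2$ or condition on the counts, and either choice gives the stated form. The KL between the post-deletion data laws is then about $10\Delta^2\,\E{N(a_0)-k}\lesssim 10\Delta^2 (N-k)/C^*$. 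Here I may need to work conditionally on the counts, using the fact that deleting $k$ arm-1 points leaves an arm-1 count of $N(1)-k$. Le Cam's method with Bretagnolle–Huber and data processing, exactly as before, gives
\[
SubOpt\ge \tfrac{e^{-\eps}\Delta}{4}\exp\!\big(-10\Delta^2(N-k)/C^*\big)-e^{-\eps}\delta\Delta .
\]
Setting $\Delta\asymp\sqrt{C^*/(N-k)}$ (capped at $1/5$) and using $\delta\le\frac{1}{8\sqrt e}$ yields $\Omega(e^{-\eps}\sqrt{C^*/(N-k)})$.

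\emph{Case $C^*\in(1,2)$, $\delta=0$.} This needs a different construction, because the $e^{-N}$ rate comes from the \emph{arm-count} distribution rather than from reward noise. Use two instances with deterministic rewards whose optimal arms differ: in $I_1$ arm $1$ is optimal with $d(1)=1/C^*$, and in $I_2$ the roles are swapped with $d(2)=1/C^*$. Both satisfy coverage, and both have gap bounded below by a constant times $(2-C^*)$, for instance $\mu=(1,0)$ against $(0,1)$ scaled appropriately. The two instances must be indistinguishable on some event of datasets that is likely enough. So I would design the behavior policies (possibly with a third "uninformative" arm carrying mass $1-1/C^*$) so that the probability that the $N-k$ retained points contain no informative samples is at least $\big(\tfrac{C^*-1}{2}\big)^{(N-k)(2-C^*)}$-ish. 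This probability would come from lower-bounding a binomial tail in which a $(2-C^*)$-fraction of the draws must land on the ambiguous arm. On that event the learner's output law is identical under $I_1$ and $I_2$, so one of the two instances errs with probability at least $\frac12$ times the event's probability. The pure $(\eps,0)$-UL then transfers this to $\pi'(U,D)$ with the factor $e^{-\eps}$, and multiplying by the gap $(2-C^*)$ gives the bound.

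The main obstacle is the second case: the exact construction of $I_1,I_2$ that reproduces the precise exponent $(N-k)(2-C^*)\ln\frac{2}{C^*-1}$. I would first try the two-arm mixture in which both instances share the policy $d=(1/C^*,1-1/C^*)$ but have rewards arranged so that the data reveal the optimal arm only through the empirical frequency. I would then bound from below the probability of an ambiguous count vector by a single binomial term, estimated with $\binom{n}{j}\ge 1$ and the crude bounds $(C^*-1)/C^*\ge (C^*-1)/2$ and $1/C^*\ge 1/2$.
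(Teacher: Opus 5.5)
\paragraph{Case $C^*\in[2,\infty)$: essentially the paper's argument.} You use the same two instances with shared policy $(\frac1{C^*},1-\frac1{C^*})$, then Le Cam with Bretagnolle--Huber and data processing, with the same choice of $\Delta$. The only difference is $U$. The paper deletes a uniformly random $k$-subset of all of $D$ and uses \Cref{lem:dist_equiv}, so $D\setminus U$ is exactly $\mathcal{D}^{N-k}$ and no conditioning on counts is needed. Your arm-$1$ deletion is more faithful to ``single-source''. It gives a KL proportional to the expectation of $N(1)-k$, which is at most $(N-k)/C^*$, once you handle the event $N(1)<k$. Your arm-$2$ fallback, however, only yields $\sqrt{C^*/N}$. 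That loses a $\sqrt{C^*}$ factor when $N-k=\Theta(N/C^*)$, so it does not give ``the stated form'' in general.

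\paragraph{Case $C^*\in(1,2)$: a genuine gap.} None of your candidate constructions can produce the claimed exponent.
\begin{itemize}
\item A shared policy is ruled out. For $C^*<2$, at most one arm has mass at least $1/C^*>\frac12$. Coverage then forces the same optimal arm in both instances, so the two-point argument has no separation.
\item Deterministic rewards are capped, whatever auxiliary arm you add. Some arm whose reward differs between the instances must be optimal in one of them, so it has mass at least $1/C^*$ there. A single draw of that arm identifies the instance. Hence for $N-k$ i.i.d.\ retained draws, $1-\mathrm{TV}\le(1-\frac1{C^*})^{N-k}$.
\item This cap has per-sample exponent $\ln\frac{C^*}{C^*-1}$, while the target has $(2-C^*)\ln\frac{2}{C^*-1}$. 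At $C^*=\frac32$ these are $\ln 3$ versus $\ln 2$. As $C^*\to 2$ the target exponent vanishes while the cap's exponent tends to $\ln 2$.
\end{itemize}
So your claimed lower bound on the probability of ``no informative samples'' is false there. The ``$(2-C^*)$-fraction on the ambiguous arm'' heuristic also contradicts ``no informative samples''. Finally, the estimates $\binom{n}{j}\ge1$ and $1/C^*\ge\frac12$ throw away exponential factors.

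\paragraph{The missing idea.} Keep the rewards noisy, so the two laws are mutually absolutely continuous. Then let the swapped policies enter through the KL in Bretagnolle--Huber, not through an indistinguishability event. The paper takes
\begin{itemize}
\item $d_{I_1}=(\frac1{C^*},1-\frac1{C^*})$ and $d_{I_2}=(1-\frac1{C^*},\frac1{C^*})$;
\item $\vmu_{I_1}=(\frac12+\Delta,\frac12)$ and $\vmu_{I_2}=(\frac12,\frac12+\Delta)$, with $\Delta=\frac{2-C^*}{2}$.
\end{itemize}
The per-sample KL is then the sum of two parts:
\begin{itemize}
\item a policy part $\frac{2-C^*}{C^*}\ln\frac{1}{C^*-1}$, which is exactly where the factor $2-C^*$ in the exponent comes from;
\item a reward part $\frac{2-C^*}{C^*}\ln(3-C^*)$.
\end{itemize}
Their total is at most $\frac{2-C^*}{C^*}\ln\frac{2}{C^*-1}$.

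Your reduction ``conditionally on the arm counts'' is not available here, because the counts are the signal. With a fixed source arm and known $k$, the counts of $D$ are recoverable from $D\setminus U$, so the policy part of the KL is paid on all $N$ draws. The paper gets $N-k$ by again taking $U$ uniform over $D$ (which is not literally single-source), so that $D\setminus U\sim\mathcal{D}^{N-k}$.

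Bretagnolle--Huber combined with $(\eps,0)$-UL then gives $\frac{2-C^*}{8}e^{-\eps}\exp\bigl(-(N-k)\frac{2-C^*}{C^*}\ln\frac{2}{C^*-1}\bigr)$. This dominates the claimed bound since $C^*>1$.
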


\begin{proof}
Following the proof of \Cref{thm:app_lb_fixed_single}, we relax the model and rewrite $\pi'(U,\pi(D),T(D))$ as $\pi'(U,D)$ and $\pi'(\emptyset,\pi(D\setminus U),T(D\setminus U))$ as $\pi'(\emptyset, D\setminus U)$. For $C^*\ge 2$, we construct two closely related instances $I_1$ and $I_2$ as in \Cref{thm:app_lb_fixed_single}, where $\vmu_{I_1}$ and $\vmu_{I_2}$ represent the vectors of arm mean rewards in $I_1$ and $I_2$. In particular,
\[
    \vmu_{I_1} = (\frac{1}{2}+\Delta, \frac{1}{2}), \quad \vmu_{I_2} = (\frac{1}{2}-\Delta, \frac{1}{2}).
\]
The only difference is the value of $\Delta$, which will be specified later. For the behavior policy $d$, let $d_{I_1}(1) = d_{I_2}(1) = \frac{1}{C^*}$, $d_{I_1}(2) = d_{I_2}(2) = 1 - \frac{1}{C^*}$. Since $C^* \ge 2$, we have $1-\frac{1}{C*} \ge \frac{1}{C^*}$ for $I_2$. Let $D_1(N,C^*)$ denote the dataset obtained by drawing $N$ i.i.d. samples from $d_{I_1} \otimes \vmu_{I_1}$ and $D_2(N,C^*)$ denote the dataset which draws $N$ i.i.d. samples from $d_{I_2} \otimes \vmu_{I_2}$. Finally, let $U_1, U_2$ be the unlearning request from $D_1(N,C^*), D_2(N,C^*)$ respectively.

From the definition, for any algorithm $\pi,\pi'$, we have 
\begin{align}
    \E[D_1(N,C^*),U_1]{\mu(a^*) - \mu(\hat a')}  
    &= \Pr{\hat a' = a_2} \cdot \Delta \notag \\  
    &= \Pr{\pi'(U_1, D_1(N,C^*)) = a_2} \cdot \Delta  \notag  \\
    &\overset{(a)}{\ge} e^{-\eps}\tp{\Pr{\pi'(\emptyset, D_1(N,C^*)\setminus U_1) = a_2} -\delta} \cdot \Delta \notag  \\
    &= e^{-\eps} \Pr{\pi'(\emptyset, D_1(N,C^*)\setminus U_1) = a_2} \cdot \Delta - e^{-\eps}\delta  \Delta,
\end{align}
where $(a)$ comes from $\pi$, $\pi'$ are $(\eps,\delta)$-UL. Similarly for $D_2, U_2$, we have
\begin{align}
    \E[D_2(N,C^*),U_2]{\mu(a^*) - \mu(\hat a')} \ge e^{-\eps} \Pr{\pi'(\emptyset, D_2(N,C^*)\setminus U_2) = a_1} \cdot \Delta - e^{-\eps}\delta  \Delta.
\end{align}
 
Next we generate $U_1$ by selecting $k$ points in $D_1$ uniformly and randomly, while $U_2$ is generated in the same way from $D_2$. We have the following lemma:
\begin{lemma} \label{lem:dist_equiv}
    $\forall i = 1,2$, $D_i(N-k, C^*)$ and $D_i(N,C^*)\setminus U_i$ follows the same distribution.
\end{lemma}

\begin{proof}
It is sufficient to prove the case of $i=1$, while $i=2$ follows the same. We need to prove that for any $N$ arms $\set{X_1,\cdots,X_N}$ sampled from $d_{I_1}$, after drawing a random set $S=\set{i_1 < i_2 < \cdots < i_{N-k}}$ from $[N]$ uniformly and randomly, the distribution of the remaining set $\set{X_{i_1},\cdots,X_{i_{N-k}}}$ is the same as drawing $N-k$ samples $\set{Y_1,\cdots,Y_{N-k}}$ from $d_{I_1}$. 

Fix arbitrary values $a_1,\cdots,a_{N-k} \in \mathcal{A}$. We need to compute
\[
    \Pr{X_{i_1} = a_1,\cdots,X_{i_{N-k}} = a_{N-k}}.
\]
By the law of total probability with respect to the random index set $S$, we have
\begin{align}
    \Pr{X_{i_1} = a_1,\dots,X_{i_{N-k}} = a_{N-k}} 
    &= \sum_{S\in [N]} \Pr{S}\cdot \Pr{X_{i_1} = a_1,\dots,X_{i_{N-k}} = a_{N-k}\mid S} \notag \\
    &\overset{(a)}{=} \sum_{S_0\in [N]} \frac{1}{\binom{N}{N-k}} \Pr{X_{i_1} = a_1,\dots,X_{i_{N-k}} = a_{N-k}\mid S_0} \label{line:cond_prob_fixed},
\end{align}
where $(a)$ is because $S$ is chosen uniformly from $[N]$ and independently of $\set{X_1,\dots,X_N}$.

For a fixed $S_0 = \{i_1<\dots<i_{N-k}\}$, the randomness of $X_{i_1},\cdots,X_{i_{N-k}}$ only comes from $d_{I_1}$, thus
\[
    \Pr{X_{i_1} = a_1,\dots,X_{i_{N-k}} = a_{N-k}\mid S_0} = \prod_{i=1}^{N-k} d(a_i).
\]
Bring this back to \Cref{line:cond_prob_fixed}, we obtain that
\begin{align*}
    \Pr{X_{i_1} = a_1,\dots,X_{i_{N-k}} = a_{N-k}} 
    =\sum_{S_0\in [N]} \frac{1}{\binom{N}{N-k}} \prod_{i=1}^{N-k} d(a_i) 
    =\prod_{i=1}^{N-k} d(a_i) 
    =\Pr{Y_1 = a_1,\dots,Y_{N-k} = a_{N-k}}, 
\end{align*}
which is exactly the joint mass function of $(N-k)$ i.i.d. samples from common law $d$.     
\end{proof}
Similarly we apply Le Cam’s method to give a lower bound of $SubOpt(N,C^*,k,\eps,\delta)$. We have
\begin{align*}
    SubOpt(N,C^*,k,\eps,\delta) 
    &\ge \frac{1}{2} \tp{\E[D_1(N,C^*),U_1]{\mu(a^*) - \mu(\hat a')}  + \E[D_2(N,C^*),U_2]{\mu(a^*) - \mu(\hat a')}} \\
    &=\frac{e^{-\eps}\Delta}{2} \tp{\Pr{\pi'(\emptyset, D_1(N,C^*) \setminus U_1) = a_2} + \Pr{\pi'(\emptyset, D_2(N,C^*) \setminus U_2) = a_1}} - e^{-\eps}\delta\Delta  \\
    &\overset{(a)}{=} \frac{e^{-\eps}\Delta}{2} \tp{\Pr{\pi'(\emptyset, D_1(N-k,C^*)) = a_2} + \Pr{\pi'(\emptyset, D_2(N-k,C^*)) = a_1}} - e^{-\eps}\delta\Delta \\
    &\ge \frac{e^{-\eps}\Delta}{2} \tp{1 - \!{TV}\tp{\pi'(D_1(N-k,C^*),\pi'(D_2(N-k,C^*)}} - e^{-\eps}\delta\Delta \\
    &\overset{(b)}{\ge} \frac{e^{-\eps}\Delta}{4} \exp\tp{-\!{KL}\tp{\pi'(D_1(N-k,C^*) \| \pi'(D_2(N-k,C^*)}} - e^{-\eps}\delta\Delta  \\
    &\overset{(c)}{\ge} \frac{e^{-\eps}\Delta}{4} \exp\tp{-\!{KL}\tp{D_1(N-k,C^*) \| D_2(N-k,C^*)}} - e^{-\eps}\delta\Delta \\
    &\overset{(d)}{\ge} \frac{e^{-\eps}\Delta}{4} \exp\tp{-\frac{4\Delta^2 (N-k)}{C^*}} - e^{-\eps}\delta\Delta,
\end{align*}
where $(a)$ is from \Cref{lem:dist_equiv}, $(b)$ is from Bretagnolle–Huber inequality, $(c)$ is from the data processing inequality, $(d)$ is from direct calculation and the fact that $\ln \frac{1+2\Delta}{1-2\Delta} \leq 5\Delta$ when $0\leq \Delta \leq \frac{1}{5}$. By choosing $\Delta = \min\set{\frac{1}{5}, \frac{1}{2}\sqrt{\frac{C^*}{5(N-k)}} }$, when $\delta < \frac{1}{8\sqrt{e}}$, we have  
\[
    SubOpt(N,C^*,k,\eps,\delta) \ge \frac{e^{-\eps}}{16}\sqrt{\frac{C^*}{5e(N-k)}} = \Omega\tp{e^{-\eps}\sqrt{\frac{C^*}{N-k}}}.
\]
When $C^* \in (1,2)$, the proof is similar to the case $C^* \in [2,\infty)$. We construct new instances that are different in both the reward distributions as well as the behavior policy. More specifically,
\[
    \vmu_{I_1} = (\frac{1}{2}+\Delta, \frac{1}{2}), \quad \vmu_{I_2} = (\frac{1}{2}, \frac{1}{2}+\Delta).
\]
Here we set $\Delta = \frac{2-C^*}{2}$. For the behavior policy $d$ on $\+A$, $d_{I_1}(1) = d_{I_2}(2) = \frac{1}{C^*}$, $d_{I_1}(2) = d_{I_2}(1) = 1 - \frac{1}{C^*}$. Therefore in both datasets $a^*$ satisfy $d(a^*) \ge \frac{1}{C^*}$. The notations $D_1(N,C^*), D_2(N,C^*), U_1, U_2$ are defined analogously. Moreover, we generate $U_1$ and $U_2$ in the same manner as in the case $C^*\ge 2$, from $D_1(N,C^*)$ and $D_2(N,C^*)$ respectively. And we could also establish \Cref{lem:dist_equiv} for $D_i(N-k,C^*)$ and $D_i(N,C^*)\setminus U_i$, $\forall i = 1,2$. Similarly we have 
\begin{align}
    SubOpt(N,C^*,k,\eps,0) 
    &\ge \frac{1}{2} \tp{\E[D_1(N,C^*),U_1]{\mu(a^*) - \mu(\hat a')}  + \E[D_2(N,C^*),U_2]{\mu(a^*) - \mu(\hat a')}} \notag \\
    &=\frac{e^{-\eps}\Delta}{2} \tp{\Pr{\pi'(\emptyset, D_1(N,C^*) \setminus U_1) = a_2} + \Pr{\pi'(\emptyset, D_2(N,C^*) \setminus U_2) = a_1}}  \notag \\
    &\ge \frac{e^{-\eps}\Delta}{2} \tp{1 - \!{TV}\tp{\pi'(D_1(N-k,C^*),\pi'(D_2(N-k,C^*)}}  \notag \\
    &\ge \frac{e^{-\eps}\Delta}{4} \exp\tp{-\!{KL}\tp{D_1(N-k,C^*) \| D_2(N-k,C^*)}}  \label{line:KLdiv_known}.
\end{align}
From direct calculation, we have
\begin{align*}
    \!{KL}\tp{D_1(N-k,C^*) \| D_2(N-k,C^*)} 
    &= (N-k)\cdot \!{KL}\tp{d_{I_1} \otimes \vmu_{I_1} \| d_{I_2} \otimes \vmu_{I_2}} \\
    &= (N-k) \tp{\!{KL}\tp{d_{I_1} \| d_{I_2}} + \sum_{a=1}^2 d_{I_1}(a)\cdot \!{KL}\tp{\vmu_{I_1}(a) \| \vmu_{I_2}(a)}} \\
    &= (N-k) \cdot \tp{ \tp{\frac{1+\Delta}{C^*}-\frac{1}{2}} \ln\tp{\frac{1+2\Delta}{C^*-1}} + \tp{\frac{1-\Delta}{C^*}-\frac{1}{2}} \ln \tp{\frac{1-2\Delta}{C^*-1}} } \\
    &= \frac{(N-k)(2-C^*)}{C^*} \ln\tp{\frac{2}{C^*-1}}. 
\end{align*}
Bring this result back to \Cref{line:KLdiv_known}, we obtain the conclusion
\begin{align*}
    SubOpt(N,C^*,k,\eps,0) 
    &\ge \frac{2-C^*}{8}\cdot e^{-\eps-\frac{(N-k)(2-C^*)}{C^*} \ln\tp{\frac{2}{C^*-1}}} \\
    &= \Omega\tp{(2-C^*)e^{-\eps-(N-k)(2-C^*)\ln\tp{\frac{2}{C^*-1}}} }.
\end{align*}
\end{proof}

\subsection{The Mixing Algorithm} \label{app:algo_fixed_single_mixing}
In this section, we study the mixing algorithm and investigate whether it can outperform the two base algorithms: Gaussian mechanism or rollback. For clarity of exposition, we present the algorithm under $\+M_f$ (we could obtain similar conclusion under $\+M_d$). When $a_0 \neq a^*$, rollback is always preferable, since it does not degrade the performance of the underlying learning algorithm. Accordingly, \Cref{alg:fixed_single_mixing} retains the initial check $\hat a = a_0$. 

Let $U'\subseteq U$ denote the subset of points that are first handled via rollback, and write $\abs{U'} = k'$. After rolling back $U'$, the algorithm adds Gaussian noise to the LCB vector with $\sigma = \frac{3(k-k')}{2(N(a_0) - k')}$. The parameter $k'$ interpolates between the two extremes: $k' = k$ recovers pure rollback, whereas $k'=0$ recovers the pure Gaussian-noise mechanism.

\begin{algorithm}[H]
\caption{Single-source unlearning mixing algorithm under $\+M_f$}
\label{alg:fixed_single_mixing}
{\bfseries Input:} Output of $\pi(D)$: $\hat a$, additional statistics $T(D)$: $\wh \vf(D)$, $\wh \vmu$, $\wh \vN$, unlearning request: $U$ from $a_0$, $\abs{U} = k$, a confidence level: $\tau \in (0,1)$, mixing parameter: $k' \in [0,k]$  
\begin{algorithmic}[1]
    \STATE Set $\wh \vf(D') \gets \wh \vf(D)$
    \IF{$\hat a \neq a_0$}
        \STATE $k' \gets k$
    \ENDIF
    \STATE Set $N'(a_0) \gets N(a_0) - k'$ and choose $k'$ data points in $U$ uniformly and randomly, denote them as $U'$
    \STATE Compute the new empirical mean reward  $\hat \mu'(a_0) \gets \frac{\hat\mu(a_0)\cdot N(a_0) - \sum_{(a_0,r_i) \in U'} r_i}{N'(a_0)}$
    \STATE Compute the new penalty $b'(a_0) \gets \sqrt{\frac{\ln \frac{ m}{\tau}}{2N'(a_0)}}$ 
    \STATE Set $\wh \vf_{(a_0)}(D') \gets \hat \mu'(a_0) - b'(a_0)$ 
    \STATE Set $\Delta_{\vf} \gets \frac{3(k-k')}{2(N(a_0) - k')}$, $\sigma \gets \Delta_{\vf}\gamma$
    \STATE Sample $\nu \in \bb R$ from $\+N(0,\sigma^2)$   
    \STATE Set $\wh \vf_{(a_0)}(D') \gets \wh \vf_{(a_0)}(D') + \nu$ \\
\end{algorithmic}
{\bfseries Output:} $\hat a' = \argmax_{a} \wh \vf_{(a)}(D')$ 
\end{algorithm}

\begin{algorithm}[H]
\caption{Single-source unlearning mixing algorithm under $\+M_f$ with input $\emptyset$}
{\bfseries Input:} Output of $\pi(D')$, additional statistics $T(D')$: $\pi(D) = \hat a$, $a_0$, $k$, $N(a_0)$, $\wh \vf(D')$, a confidence level: $\tau \in (0,1)$, mixing parameter: $k' \in [0,k]$    
\begin{algorithmic}[1]
    \IF{$\hat a \neq a_0$}
        \STATE $k' \gets k$
    \ENDIF
    \STATE Set $\Delta_{\vf} \gets \frac{3(k-k')}{2(N(a_0)-k')}$, $\sigma \gets \Delta_{\vf}\gamma$ 
    \STATE Sample $\nu \in \bb R$ from $\+N(0,\sigma^2)$   
    \STATE Set $\wh \vf_{(a_0)}(D') \gets \wh \vf_{(a_0)}(D') + \nu$ 
\end{algorithmic}
{\bfseries Output:} $\hat a' = \argmax_{a} \wh \vf_{(a)}(D')$ 
\end{algorithm}

\begin{theorem} 
    \Cref{alg:fixed_single_mixing} and \Cref{alg:LCB} are $(\eps,\delta)$-unlearning.
\end{theorem}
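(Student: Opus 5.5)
I would follow the template of the proof of \Cref{thm:unlearn_fixed_single}. The plan is to compare the LCB vector produced when the mixing algorithm is run on $(U,\pi(D),T(D))$ with the one produced when its $\emptyset$-counterpart is run on $(\emptyset,\pi(D\setminus U),T(D\setminus U))$. I would show these two vectors are $(\eps,\delta)$-indistinguishable, and then pass to the output arm $\hat a'=\argmax_a \wh\vf_{(a)}(D')$ by the post-processing lemma (Proposition~2.1 of \citet{DR14}).

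First I split on the branch taken, which depends only on $\hat a=\pi(D)$. Both versions read the same $\hat a$ (it is stored in $T(D\setminus U)$), so they take the same branch.

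\emph{Case $\hat a\neq a_0$.} Here $k'=k$. The realistic run rolls back all of $U$, giving exactly $\vf(D\setminus U)$, and it adds noise with $\sigma=0$. The fictitious run also uses $\sigma=0$ on $\vf(D\setminus U)$. The two outputs are therefore identically distributed, and the conditions of \Cref{def:unlearning} hold trivially.

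\emph{Case $\hat a=a_0$.} After rolling back the random subset $U'$, the realistic run holds the vector $\vf(D\setminus U')$. The fictitious run holds $\vf(D\setminus U)$. The intermediate dataset $D\setminus U'$ has $N(a_0)-k'$ samples of arm $a_0$, and $D\setminus U$ is obtained from it by deleting the further $k-k'$ points of $U\setminus U'$. Applying \Cref{lem:LCB_sensi} to the dataset $D\setminus U'$ with request $U\setminus U'$ gives
\[
\norm{\vf(D\setminus U')-\vf(D\setminus U)}_2\le \frac{3(k-k')}{2(N(a_0)-k')}=\Delta_{\vf}.
\]
The lemma's hypothesis here is $k-k'\le (N(a_0)-k')-\sqrt{(N(a_0)-k')\ln\frac m\tau/2}$. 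I would check this from the standing assumption $k\le N(a_0)-\sqrt{N(a_0)\ln\frac m\tau/2}$: it suffices that $\sqrt{x\,c}-\sqrt{(x-k')c}\le k'$, and this holds for $x$ large relative to $c$, which is the regime the assumption already enforces. Both runs add $\+N(0,\sigma^2)$ with $\sigma=\Delta_{\vf}\gamma$, using the same $k'$ and $N(a_0)$ (stored in $T$), only on coordinate $a_0$. The Gaussian mechanism (Theorem~A.1 of \citet{DR14}) then yields both inequalities of \Cref{def:unlearning} for every measurable $R$. The degenerate case $k'=k$ gives $\sigma=0$ and identical vectors, as before.

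One point needs care: $U'$ is random. I would fix $U'$ (it is chosen independently of the noise), apply the Gaussian mechanism conditionally, and then average over $U'$. Both inequalities are linear in the probabilities and the right-hand side does not depend on $U'$, so the guarantee survives averaging. Post-processing by $\argmax$ then finishes the proof.

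I expect the main obstacle to be verifying the sensitivity bound for the partially rolled-back dataset. That is, I must confirm that \Cref{lem:LCB_sensi} legitimately applies with $N(a_0)-k'$ in place of $N(a_0)$ and $k-k'$ in place of $k$. If the inequality above fails in some corner, my fallback is to recompute the sensitivity directly, as in the lemma's proof. The mean term contributes $\frac{k-k'}{N(a_0)-k'}$, and the penalty term contributes $\frac{(k-k')\sqrt{\ln(m/\tau)}}{2(N(a_0)-k)\sqrt{2(N(a_0)-k')}}$. The latter is bounded by $\frac{k-k'}{2(N(a_0)-k')}$ using the hypothesis on $k$ together with $N(a_0)-k'\ge N(a_0)-k$.
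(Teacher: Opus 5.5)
Your proposal is correct and matches the paper's proof: bound the $\ell_2$ sensitivity between $\vf(D\setminus U')$ and $\vf(D\setminus U)$ by $\frac{3(k-k')}{2(N(a_0)-k')}$ (your fallback computation is exactly the paper's), then rerun the Gaussian-mechanism and post-processing argument of \Cref{thm:unlearn_fixed_single}; your explicit treatment of the $\hat a\neq a_0$ branch and of the randomness of $U'$ (condition on $U'$, then average) fills in details the paper leaves implicit. The obstacle you anticipate is not real, but your stated sufficient condition $\sqrt{xc}-\sqrt{(x-k')c}\le k'$ is the wrong one: with $c=\frac12\ln\frac m\tau$, the hypothesis of \Cref{lem:LCB_sensi} for $D\setminus U'$ is equivalent to $k\le N(a_0)-\sqrt{(N(a_0)-k')c}$, and this follows at once from the standing assumption because $\sqrt{(N(a_0)-k')c}\le\sqrt{N(a_0)c}$.
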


We first calculate the $\ell_2$ sensitivity of $\vf$. After the unlearning of $U'$, $\wh \vf_{(a_0)}(D')$ becomes $\hat \mu'(a_0) - b'(a_0)$ where $N'(a_0)= N(a_0)-k'$, $\hat \mu'(a_0) = \frac{\hat \mu(a_0)\cdot N(a_0) - \sum_{(a_0,r_i)\in U'} r_i}{N'(a_0)}$ and $b'(a_0) = \sqrt{\frac{\ln \frac{m}{\tau}}{2N'(a_0)}}$. For $a\neq a_0$, $\wh \vf_{(a)}(D') = \wh \vf_{(a)}(D)$. Now we calculate $\Delta_{\vf}$ when the dataset changes from $D\setminus U'$ to $D\setminus U = D'$:

\begin{align*}
    \Delta_{\vf} 
    &= \max_{D\setminus U', D'} \norm{f(D\setminus U')-\vf(D')}_2 \\
    &= \max_{D\setminus U', D'}
    \abs{\hat\mu'(a_0) - \frac{\hat\mu'(a_0)\cdot \tp{N(a_0)-k'} - \sum_{(a_0,r_i)\in U\setminus U'} r_i}{N(a_0)-k} + \sqrt{\ln \frac{m}{\tau}} \tp{\sqrt{\frac{1}{2(N(a_0)-k)}} - \sqrt{\frac{1}{2(N(a_0) - k')}}} } \\
    &\leq \max_{D\setminus U', D'}
    \abs{\frac{\tp{N(a_0) - k'}\sum_{(a_0,r_i)\in U\setminus U'} r_i - \tp{k-k'} \sum_{(a_0,r_i)\in D\setminus U'} r_i }{\tp{N(a_0)-k'} \tp{N(a_0)-k}} + \frac{(k-k') \sqrt{\ln \frac{m}{\tau}}}{2\tp{N(a_0)-k}\sqrt{2(N(a_0) - k')} } } \\
    &\overset{(a)}{=} \abs{\frac{k-k'}{N(a_0) - k'} +  \frac{(k-k') \sqrt{\ln \frac{m}{\tau}}}{2\tp{N(a_0)-k}\sqrt{2(N(a_0)-k')} }  } \overset{(b)}{\leq} \frac{3(k-k')}{2(N(a_0) - k')},
\end{align*}
where $(a)$ comes from choosing $r_i = 1, \forall (a_0,r_i)\in U\setminus U'$ and $r_j = 0, \forall (a_0,r_j)\in D\setminus U$, $(b)$ comes from $k \leq  N(a_0) - \sqrt{\frac{N(a_0)\ln \frac{m}{\tau}}{2}}$. Thus, by following the same proof as \Cref{thm:unlearn_fixed_single}, we conclude that \Cref{alg:fixed_single_mixing} and \Cref{alg:LCB} satisfy \((\eps,\delta)\)-unlearning.

\begin{theorem}[$\+M_f$] \label{thm:ub_fixed_single_mixing} 
Consider any offline dataset $D(\vN)$, any unlearning request $U$ from $a_0$. $N(a^*)\ge N_*, \abs{U} = k$. When $k \leq  N(a_0) - \sqrt{\frac{N(a_0) \ln \frac{m}{\tau}}{2}}$, by setting $\tau = \frac{1}{N}$, the arm $\hat a'$ returned by \Cref{alg:fixed_single_mixing} satisfies
\begin{align*}
    SubOpt(\vN,N_*,k,\eps,\delta) = O\tp{\max\set{\sqrt{\frac{\ln \tp{Nm}}{N(a_0) - k'}} +  \frac{(k-k') \gamma \sqrt{\ln \tp{Nm}}}{N(a_0) - k'}, \sqrt{\frac{\ln \tp{N m} }{N_*}} }}. 
\end{align*}    
\end{theorem}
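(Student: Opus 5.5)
The proof will mirror the argument for \Cref{thm:fixed_single_ub}. It splits on whether $a_0=a^*$, works on a high-probability good event, and bounds the sub-optimality on that event. We take $\tau=\frac1N$ throughout.

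\textbf{Case $a_0\neq a^*$.} If $\hat a\neq a_0$, the algorithm sets $k'=k$ and performs pure rollback on a suboptimal arm. Since only the coordinate of $a_0$ changes, $\hat a'\in\{\hat a,a_0\}$. The rolled-back LCB of $a_0$ can only beat $\hat a$ if it stays below $\mu(a_0)$ on the good event, so the loss is no worse than that of \Cref{alg:LCB}. If $\hat a=a_0$, then both the partial rollback and the noise touch only coordinate $a_0$, and the argument from the proof of \Cref{thm:fixed_single_ub} applies verbatim. By \Cref{lem:ub_LCB} this gives $O\bigl(\sqrt{\ln(Nm)/N_*}\bigr)$, which is the second term of the bound.

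\textbf{Case $a_0=a^*$.} We define the good event $\Eps$ as the intersection of three events:
\begin{itemize}
\item Hoeffding concentration for every arm on $D$;
\item Hoeffding concentration for the rolled-back mean $\hat\mu'(a_0)$ built from $N(a_0)-k'$ points, which is unbiased because $U'$ is chosen uniformly and independently of the rewards;
\item the Gaussian tail bound $|\nu|\le\sigma\sqrt{2\ln\frac m\tau}$.
\end{itemize}
A union bound gives $\Pr{\Eps^c}=O(\tau)$, which contributes only $O(1/N)$.

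On $\Eps$, suppose $\hat a'\neq a^*$. Then
\[
\hat\mu'(a^*)-b'(a^*)+\nu\le\hat\mu(\hat a')-b(\hat a')\le\mu(\hat a').
\]
It follows that $\mu(a^*)-\mu(\hat a')\le 2b'(a^*)+|\nu|$, with $b'(a^*)=\sqrt{\ln(m/\tau)/(2(N(a_0)-k'))}$. We then substitute $\sigma=\frac{3(k-k')\gamma}{2(N(a_0)-k')}$ and bound $|\nu|\le\sigma\sqrt{2\ln(Nm)}$. This yields
\[
O\!\left(\sqrt{\tfrac{\ln(Nm)}{N(a_0)-k'}}+\tfrac{(k-k')\gamma\sqrt{\ln(Nm)}}{N(a_0)-k'}\right),
\]
which is exactly the first term. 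Here we deliberately bound $|\nu|$ by its high-probability value instead of by $\E{\nu^-}=\sigma/\sqrt{2\pi}$. That choice is what produces the extra $\sqrt{\ln}$ factor in the statement.

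If instead $\hat a\neq a_0$, the algorithm again performs pure rollback on $a^*$. Then $\hat a'\in\{\hat a,a^*\}$, and the loss is bounded by the LCB guarantee $\sqrt{2\ln(Nm)/N(a_0)}$. This is dominated by the first term because $N(a_0)-k'\le N(a_0)$.

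\textbf{Combining.} Taking the maximum over the two cases gives the stated bound. The main point needing care is the Hoeffding step for $\hat\mu'(a_0)$ after the random partial deletion $U'\subseteq U$. Its unbiasedness and concentration rely on $U$, and hence $U'$, being independent of the rewards. We also need $N(a_0)-k'>0$, which follows from the standing assumption on $k$. Everything else is bookkeeping identical to the proof of \Cref{thm:fixed_single_ub}.
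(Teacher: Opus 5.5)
Your proposal is correct and follows the paper's proof essentially step for step. It uses the same split on whether $a_0=a^*$, the same good event (Hoeffding on $D$, Hoeffding for the rolled-back mean of $a_0$, a Gaussian tail bound on $\nu$), and the same sub-case analysis on whether $\hat a=a_0$. The only difference is how the noise enters. The paper bounds its contribution by the expected negative part $\sigma/\sqrt{2\pi}$, which gives the noise term $\frac{(k-k')\gamma}{N(a_0)-k'}$ without the $\sqrt{\ln(Nm)}$ factor, so the paper actually proves a slightly sharper bound than the one stated. Your deterministic bound $|\nu|\le\sigma\sqrt{2\ln(Nm)}$ on the good event reproduces the stated form exactly and sidesteps the paper's looser conditional-expectation bookkeeping.
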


\begin{proof}
We first consider the case where $a_0 = a^*$. Denote $\Eps_1$ as the event that $\abs{\hat \mu(a) - \mu(a)} \leq \sqrt{\frac{\ln \frac{ m}{\tau}}{2N(a)}}$ for every $a\in \+A$ and $\abs{\hat \mu'(a_0) - \mu(a_0)} \leq \sqrt{\frac{\ln \frac{m}{\tau}}{2(N(a_0) - k')}}$ for $a_0$ after the rollback of $k'$ points. Applying Hoeffding's inequality for a fixed arm $a$, we have 
\[
    \Pr{\abs{\hat \mu(a) - \mu(a)} \leq \sqrt{\frac{\ln \frac{ m}{\tau}}{2N(a)}}} \ge 1-  \frac{2\tau}{ m}.
\]
According to our assumption, $U$ is independent of the reward of $a_0$ and $U'$ is drawn from $U$ uniformly and randomly, thus $\hat \mu'(a_0)$ is unbiased regard to $\mu(a_0)$. We could also applying Hoeffding's inequality to bound the probability 
\[
    \Pr{\abs{\hat \mu'(a_0) - \mu(a_0)} \leq \sqrt{\frac{\ln \frac{m}{\tau}}{2(N(a_0) - k')}}} \ge 1- \frac{2\tau}{ m} \ge 1-\tau,
\]
which implies that $\Pr{\Eps_1} \ge 1- 3\tau$ using the union bound. Furthermore, regardless of whether $a_0 = \hat a$ happens or not, we could always let $\Eps_2$ be the event that
$\abs{\nu} \leq \sigma\sqrt{2\ln\frac{ m}{\tau}} = t$, according to the property of Gaussian $\+M_d$, we have 
\[
    \Pr{\abs{\nu} \leq t} \ge 1 - 2e^{-\frac{t^2}{2\sigma^2}} = 1- \frac{2\tau}{ m},
\]
then we obtain $\Pr{\Eps_2} \ge 1- \tau$. Therefore under the condition that $\Eps = \Eps_1 \cap \Eps_2$ happens, for every $a\in \+A$ and $\nu$, we have 
\[
    \abs{\hat\mu(a) - \mu(a)} \leq b(a), \quad \abs{\hat\mu'(a_0) - \mu(a_0)} \leq b'(a_0), \quad \abs{\nu} \leq \sigma\sqrt{2\ln\frac{ m}{\tau}}.  
\]
If the event $a_0 = \hat a$ happens, \Cref{alg:fixed_single_mixing} will first execute rollback of $k'$ points and then add Gaussian noise sampled from $\+N(0,\sigma^2)$. In view of the definition of $\hat a'$, for $\hat a' \neq a^*$, we have 
\[
    \hat\mu'(a^*) - b'(a^*) + \nu \leq \hat\mu(\hat a') - b(\hat a') 
    \Rightarrow \mu(a^*) \leq \mu(\hat a') + 2b'(a^*) - \nu.
\]
and this occurs only if $\nu < 0$. Then we have
\begin{align*}
    \E{\mu(a^*) - \mu(\hat a') \mid a_0 = \hat a} 
    &\leq \sqrt{\frac{2\ln \frac{ m}{\tau}}{N(a_0) - k'}} + \frac{\sigma}{\sqrt{2\pi}}.
\end{align*}
If the event $a_0 \neq \hat a$ happens, \Cref{alg:fixed_single_mixing} will execute rollback on $a_0 = a^*$. Before the rollback, we know that $\wh \vf_{(\hat a)}(D)$ is the largest among all $a\in \+A$. Since $\wh \vf_{(a)}(D') = \wh \vf_{(a)}(D)$ for every $a\neq a^*$, $\hat a'$ is either $\hat a$ or $a^*$ after the rollback. Then the expected sub-optimality will not be worse then the learning algorithm, and
\[
    \E{\mu(a^*) - \mu(\hat a) } \leq \sqrt{\frac{2\ln \frac{ m}{\tau}}{N(a^*)}} + 2\tau = \sqrt{\frac{2\ln \frac{ m}{\tau}}{N(a_0)}} + 2\tau.
\]
In conclusion, the expected sub-optimality satisfies:
\begin{align*}
    \E{\mu(a^*) - \mu(\hat a') } 
     &\leq \E{ \mu(a^*) - \mu(\hat a') \mid \Eps} +  
     \Pr{\Eps^c}\cdot \E{\mu(a^*) - \mu(\hat a') \mid \Eps^c} \\ 
     &= \E{\tp{\mu(a^*) - \mu(\hat a')}\1{a_0 = \hat a} \mid \Eps} +  \E{\tp{\mu(a^*) - \mu(\hat a')}\1{a_0 \neq \hat a} \mid \Eps} + \sum_{j=1}^2 \Pr{\Eps_j^c} \\ 
     &\leq \Pr{a_0 = \hat a \mid \Eps}\cdot \tp{ \sqrt{\frac{2\ln \frac{ m}{\tau}}{N(a_0) - k'}} + \frac{\sigma}{\sqrt{2\pi}} } + \Pr{a_0 \neq \hat a \mid \Eps}\cdot \tp{\sqrt{\frac{2\ln \frac{ m}{\tau}}{N(a_0)}} + \tau} + 4\tau \\
     &\leq \sqrt{\frac{2\ln \frac{ m}{\tau}}{N(a_0) - k'}} + \frac{\sigma}{\sqrt{2\pi}} + 4\tau \\ 
     &\leq \sqrt{\frac{2\ln \tp{Nm}}{N(a_0) - k'}} +  \frac{3(k-k') \gamma} {2\sqrt{2\pi}(N(a_0) - k')} + \frac{4}{N}. 
\end{align*}

The second case is $a_0 \neq a^*$. If the event $a_0 = \hat a$ occurs, then \Cref{alg:fixed_single_mixing} first rolls back $k'$ points and subsequently adds Gaussian noise to $\wh \vf_{(a_0)}(D)$. This operation does not affect $\hat \vf_{(a)}(D)$ for any $a\neq a_0$. If the noise realization satisfies $\nu \ge 0$, then $\hat a'$ remains $a_0$, and the performance is guaranteed by the learning algorithm. If $\nu \le 0$, the performance is still guaranteed, since all coordinates $ \vf_{(a)}(D)$ for $a\neq a_0$ remain unchanged. 

If instead the event $a_0\neq \hat a$ occurs, then \Cref{alg:fixed_single_mixing} executes rollback on arm $a_0$, and the expected sub-optimality is no worse than that of the learning algorithm. Therefore, when $a_0\neq a^*$, the expected sub-optimality under \Cref{alg:fixed_single_mixing} is no worse than under the learning algorithm, and thus
\[
    \E{\mu(a^*) - \mu(\hat a') } \leq  \E{\mu(a^*) - \mu(\hat a) } \leq \sqrt{\frac{2\ln \frac{ m}{\tau}}{N(a^*)}} + \tau \leq \sqrt{\frac{2\ln \frac{ m}{\tau}}{N_*}} + \tau.
\]
In conclusion, the upper bound satisfies
\begin{align*}
 \E{\mu(a^*) - \mu(\hat a')} 
     &=  
     \begin{cases}
        O\tp{\sqrt{\frac{\ln \tp{Nm}}{N(a_0) - k'}} +  \frac{(k-k') \gamma}{N(a_0) - k'}}, & a_0 = a^*, \\
        O\tp{\sqrt{\frac{\ln \tp{N m} }{N_*}} }, & a_0 \neq a^*.
    \end{cases}
\end{align*}
\end{proof}

\paragraph{Comparison}
We optimize the upper bound when $a_0 = a^*$:
\[
    \E{\mu(a^*) - \mu(\hat a')} \leq \sqrt{\frac{2\ln \tp{Nm}}{N(a_0) - k'}} +  \frac{3(k-k') \gamma} {2\sqrt{2\pi}(N(a_0) - k')} + \frac{4}{N}.
\]
Let $f(x) = 2\sqrt{\frac{\pi \ln \tp{Nm}}{N(a_0) - x}} +  \frac{3(k-x) \gamma}{2(N(a_0) - x)}, 0\leq x\leq k$. Then we calculate the derivative:
\[
    f'(x) = \frac{\sqrt{\pi\ln \tp{Nm} \tp{N(a_0)-x}} - 3\gamma(N(a_0)-k)}{(N(a_0)-x)^2}.
\]
Since the denominator is positive, the numerator is monotonically decreasing as $x$ increases, then the minimum point is obtained at $k' = 0$ or $k' = k$.

\subsection{Improved Results for $C^* \in (1,2)$} \label{app:algo_dist_single_le2}

\begin{algorithm}[H]
\caption{Single-source unlearning algorithm under $\+M_d$ ($C^*\in (1,2)$)}
\label{alg:dist_single_le2}
{\bfseries Input:} Output of $\hat a = \pi(D)$, additional statistics: $\wh \vN$, unlearning request $U$ from $a_0$, $\abs{U} = k$
\begin{algorithmic}[1]
    \STATE Set $\wh \vN' \gets \wh \vN$, $\wh \vN'_{(a_0)} \gets \wh \vN_{(a_0)} - k$ \\
\end{algorithmic}
{\bfseries Output:} $\hat a' = \argmax_{a} \wh \vN'_{(a)}$ 
\end{algorithm}

When the input of unlearned request is $\emptyset$, \Cref{alg:dist_single_le2} simply outputs the learned arm $\pi(D\setminus U)$, which is the arm with the most sample counts computed from $D\setminus U$. Then we prove the following theorem.

\begin{theorem} \label{thm:unlearn_dist_single_le2}
    \Cref{alg:dist_single_le2} and imitation learning are $(0,0)$-unlearning.
\end{theorem}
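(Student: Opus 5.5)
The plan is to show that the two output distributions in \Cref{def:unlearning} coincide exactly, for every $D$ and every $U \subseteq D$. Then both inequalities hold with $\eps = 0$ and $\delta = 0$. The whole argument is deterministic: neither imitation learning nor \Cref{alg:dist_single_le2} uses internal randomness. So it suffices to prove the pointwise identity
\[
\pi'(U,\pi(D),T(D)) = \pi'(\emptyset,\pi(D\setminus U),T(D\setminus U)).
\]
Here the stored statistic is $T(D) = \wh \vN$, the sample-count vector of $D$.

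\emph{Step 1 (left-hand side).} Since $U$ consists of $k$ samples all from arm $a_0$, the count vector of $D' = D\setminus U$ is exactly $\wh\vN$ with coordinate $a_0$ decreased by $k$, with every other coordinate unchanged. This is precisely $\wh\vN'$ as built in \Cref{alg:dist_single_le2}. Hence the output $\argmax_a \wh\vN'_{(a)}$ is the arm with the most samples in $D'$.

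\emph{Step 2 (right-hand side).} On input $\emptyset$, the algorithm returns $\pi(D\setminus U)$ by definition. That is the imitation-learning output on $D'$, namely $\argmax_a N_{D'}(a)$.

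\emph{Step 3 (conclusion).} The two sides are the same argmax over the same vector. To avoid ambiguity, I will fix one deterministic tie-breaking rule, such as the smallest index, and use it in both the learner and the unlearner. The outputs are then identical as arms, so for every $A \subseteq \+A$ the two probabilities are equal, each being $0$ or $1$. This gives both inequalities of \Cref{def:unlearning} with $\eps = \delta = 0$.

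The only delicate point is the tie-breaking convention and making sure the stored statistic suffices to reproduce the count vector of $D'$ exactly. Both are handled by the observations above; no probabilistic or post-processing argument is needed.
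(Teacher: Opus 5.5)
Your proposal is correct and takes essentially the same approach as the paper: both observe that the realistic-request branch and the empty-request branch end with the identical count vector of $D\setminus U$, so their argmax outputs coincide. The paper phrases the final step as post-processing of identical vectors, whereas you argue pointwise equality directly and make the common tie-breaking rule explicit; these are the same argument.
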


\begin{proof}
When the inputs are \(\emptyset,\,\pi(D\setminus U),\,T(D\setminus U)\), denote $\wh \vN_1'$ as the sample-count vector at the output step. Similarly, let $\wh \vN_2'$ denote the corresponding vector when the inputs are \(U,\,\pi(D),\,T(D)\). In either cases, the algorithm outputs the same vector, i.e. $\wh \vN_1' = \wh \vN_2' = \wh \vN'$, where $\wh \vN'$ is the sample-count vector of $D\setminus U$. Viewing $g(\wh \vN) = \argmax_{a}\wh \vN_{(a)}$ as a post-processing map, we obtain that \Cref{alg:dist_single_le2} and the imitation learning rule are $(0,0)$-unlearning.
\end{proof}

\begin{theorem}[\Cref{thm:dist_single_le2_ub} restated]
Consider any offline dataset $D\sim \+D^N$, where $d(a^*) \ge \frac{1}{C^*}$ for some $C^* \in (1,2)$, any unlearning request $U$ from $a_0$. $\abs{U} = k$. When $k \leq min\set{\frac{3N}{4}, \frac{(2-C^*)N}{C^*}}$, the arm $\hat a'$ returned by \Cref{alg:dist_single_le2} satisfies
\begin{align*}
    SubOpt(N,C^*,k,\eps,0) = O\tp{e^{-\frac{(N-k)}{2}\ln \frac{C^*}{8(C^*-1)} + \frac{(N+k)}{2}\ln \frac{2}{C^*} }}.
\end{align*}  
\end{theorem}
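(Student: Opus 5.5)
The output $\hat a'$ of \Cref{alg:dist_single_le2} is the most frequent arm in $D'=D\setminus U$. Since the sub-optimality is at most $1$, it suffices to bound $\Pr{\hat a' \neq a^*}$. I will split on whether $a_0 = a^*$. The worst case is $a_0=a^*$, since deleting $k$ samples of $a^*$ only lowers its count while every competitor's count is unchanged. In both cases the event $\hat a'\neq a^*$ implies
\[
N(a^*) - k\1{a_0=a^*} \le N - N(a^*),
\]
because the remaining arms together must have at least as many samples as $a^*$ in $D'$ for some other arm to tie or win. With two arms this inclusion is exact, and with more arms it is only looser. So in either case
\[
\Pr{\hat a'\neq a^*}\le \Pr{N(a^*)\le \tfrac{N+k}{2}},
\]
where $N(a^*)\sim \mathrm{Bin}(N,p)$ and $p=d(a^*)\ge 1/C^*>1/2$. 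The condition $k\le \frac{(2-C^*)N}{C^*}$ is exactly $\frac{N+k}{2}\le \frac{N}{C^*}\le Np$. Hence the threshold lies at or below the mean, and this is a genuine lower-tail event.

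Next I bound the lower tail by brute-force counting, which matches the shape of the target exponent better than a Chernoff/KL bound. Write
\[
\Pr{N(a^*)\le \tfrac{N+k}{2}}=\sum_{j\le (N+k)/2}\binom{N}{j}p^j(1-p)^{N-j}.
\]
Since $p>1/2$, each term is maximized at the largest admissible $j$. Therefore each term is at most $p^{(N+k)/2}(1-p)^{(N-k)/2}$ times a binomial coefficient. Bounding the sum of binomial coefficients crudely by $2^N$ gives
\[
\Pr{\cdot}\le 2^N p^{\frac{N+k}{2}}(1-p)^{\frac{N-k}{2}}.
\]
Now use $1-p\le 1-\frac{1}{C^*}=\frac{C^*-1}{C^*}$ and $p\le 1$, then regroup the factors of $2$ and $C^*$ between the $\frac{N+k}{2}$ and $\frac{N-k}{2}$ exponents. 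Writing
\[
2^N=2^{\frac{N-k}{2}}\cdot 2^{\frac{N+k}{2}}
\]
and inserting $C^{*\,\pm(N\pm k)/2}$ appropriately should produce
\[
\exp\!\Big(-\tfrac{N-k}{2}\ln\tfrac{C^*}{c(C^*-1)}+\tfrac{N+k}{2}\ln\tfrac{2}{C^*}\Big)
\]
for an absolute constant $c$. The stated $c=8$ leaves slack for a cruder count, such as $\binom{N}{j}\le 2^{N}$ together with an extra factor for the number of terms. I also expect the hypothesis $k\le 3N/4$ to be used to absorb the number of summands, or a factor like $2^{(N+k)/2}$, into constants of this form.

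The main obstacle is this bookkeeping step: choosing how to split $2^N$ and the powers of $C^*$ so that the bound matches the stated exponent exactly, including the constant $8$. If the direct count falls short, my fallback is the multiplicative Chernoff bound for the lower tail $\Pr{N(a^*)\le (1-\eta)Np}$ with $\eta = 1-\frac{N+k}{2Np}$. I would then relax the resulting KL expression into the $\ln\frac{C^*}{8(C^*-1)}$ and $\ln\frac{2}{C^*}$ form using $p\ge 1/C^*$. Finally, taking the supremum over instances and requests and adding nothing from $\delta$ (since the mechanism is $(0,0)$-UL by \Cref{thm:unlearn_dist_single_le2}) gives the stated $O(\cdot)$ bound on $SubOpt(N,C^*,k,\eps,0)$.
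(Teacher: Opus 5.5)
Your reduction to $\Pr{N(a^*)\le\frac{N+k}{2}}$ is essentially the paper's. From there the paper applies the KL form of the Chernoff bound against $\mathrm{Ber}(1/C^*)$. It then relaxes the two logarithms separately, using $\frac{N+k}{2N}\ge\frac12$ and $\frac{N-k}{2N}\ge\frac18$. The latter is the only use of $k\le\frac{3N}{4}$, and it is where the constant $8$ comes from. So your fallback is precisely the paper's proof.

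Your primary counting route is different and does work, but the step you deferred as bookkeeping is not a regrouping, and it does not close as written. After $p\le 1$ and $1-p\le\frac{C^*-1}{C^*}$, your bound is $2^N\bigl(\frac{C^*-1}{C^*}\bigr)^{\frac{N-k}{2}}$. The target divided by this is $2^{N-k}C^{*\,-\frac{N+k}{2}}$, which is not at least $1$ for free.

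The hypothesis $k\le\frac{3N}{4}$ cannot supply this. Absorbing $C^{*\,\frac{N+k}{2}}$ into the $\frac{N-k}{2}$ exponent would only give a constant like $2C^{*7}$ in place of $8$, and a constant inside the exponent is not absorbed by $O(\cdot)$.

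What closes it is $k\le\frac{(2-C^*)N}{C^*}$, used a second time. Set $t=\frac{N+k}{2}$. The map $p\mapsto p^{t}(1-p)^{N-t}$ is decreasing on $[t/N,1]$, and $t/N\le\frac{1}{C^*}\le p$. So instead of using $p\le 1$, you may evaluate the product at $p=\frac{1}{C^*}$:
\[
\Pr{N(a^*)\le t}\le 2^N\Bigl(\tfrac{1}{C^*}\Bigr)^{t}\Bigl(\tfrac{C^*-1}{C^*}\Bigr)^{N-t}=\exp\Bigl(-\tfrac{N-k}{2}\ln\tfrac{C^*}{2(C^*-1)}+\tfrac{N+k}{2}\ln\tfrac{2}{C^*}\Bigr).
\]
This is the stated bound with $8$ improved to $2$. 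Alternatively, keep $p\le 1$ and verify $(N-k)\ln 2\ge\frac{N+k}{2}\ln C^*$. This follows from $N-k\ge\frac{2(C^*-1)N}{C^*}$, $\frac{N+k}{2}\le\frac{N}{C^*}$, and $\ln C^*\le C^*-1<2\ln 2\,(C^*-1)$.

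Once completed, your route avoids the KL computation, needs only $k\le\frac{(2-C^*)N}{C^*}$, and gives a slightly sharper constant. The paper's Chernoff route is the more standard tool. Its loss comes from relaxing the two logarithms separately, which is what costs the $8$ and the extra condition $k\le\frac{3N}{4}$.
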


\begin{proof}
When $a_0 = a^*$, we bound $\E{\mu(a^*) - \mu(\hat a')}$ as follows,
\begin{align*}
    \E{\mu(a^*) - \mu(\hat a')} 
    &\leq \Pr{\hat a' \neq a^*} \\
    &\leq \Pr{\exists a \neq a^*, N(a) \ge N(a^*) - k}  \\
    &\leq \Pr{N - N(a^*) \ge N(a^*) - k} \\
    &= \Pr{N(a^*) \leq \frac{N+k}{2}}.
\end{align*}
Since $k \leq \frac{(2-C^*)N}{C^*}$, we have that $\frac{N+k}{2} \leq \frac{N}{C^*}$. Then applying Chernoff's bound for binomial random variables, we have 
\begin{align*}
    \Pr{N(a^*) \leq \frac{N+k}{2}} 
    &\leq \exp\tp{-N\cdot \!{KL}\tp{\Ber{\frac{N+k}{2N}} \, \Big\Vert \, \Ber{\frac{1}{C^*}}}} \\
    &= \exp\tp{-N\tp{\frac{N+k}{2N}\ln \frac{(N+k)C^*}{2N} + \frac{N-k}{2N}\ln \frac{(N-k)C^*}{2N(C^*-1)}} } \\
    &\overset{(a)}{\leq} \exp\tp{\frac{N+k}{2}\ln \frac{2}{C^*} - \frac{N-k}{2}\ln \frac{C^*}{8(C^*-1)}},
\end{align*}
where $(a)$ is from $k\leq \frac{3N}{4}$. When $C^* \rightarrow 1$, the second term $\frac{N-k}{2}\ln \tp{\frac{C^*}{8(C^*-1)}}$ dominates, and the resulting probability is on the order of $\tp{C^*-1}^{N-k}$, which matches the lower bound in \Cref{thm:dist_single_lb} in its dependence on $N,C^*,k$.

When $a_0 \neq a^*$, we know that 
\begin{align*}
    \E{\mu(a^*) - \mu(\hat a')} \leq \Pr{N - N(a^*) -k \ge N(a^*)} 
    \leq \Pr{N(a^*) \leq \frac{N-k}{2}} \leq \Pr{N(a^*) \leq \frac{N+k}{2}}.
\end{align*}
Therefore this bound is smaller than  the corresponding bound in the case $a_0 = a^*$. In conclusion, 
\begin{align*}
    SubOpt(N,C^*,k,\eps,0) = O\tp{e^{-\frac{(N-k)}{2}\ln \frac{C^*}{8(C^*-1)} + \frac{(N+k)}{2}\ln \frac{2}{C^*} }}.
\end{align*}  
\end{proof}

\subsection{Multi-Source Unlearning under $\+M_f$} \label{app:algo_fixed_multi}

\begin{algorithm}[H]
\caption{Multi-source unlearning algorithm under $\+M_f$}
 \label{alg:fixed_multi}
{\bfseries Input:} Output of $\pi(D)$: $\hat a$, additional statistics $T(D)$: $\wh \vf(D)$, $\wh \vmu$, $\wh \vN$, unlearning request: $U = \bigcup_{i=1}^\ell U_i$, $U_i$ is from $a_{u_i}$, $\abs{U_i} = k_i$, a confidence level: $\tau \in (0,1)$ 
\begin{algorithmic}[1]
    \STATE Set $\wh \vf(D') \gets \wh \vf(D)$
    \FOR{$i=1$ to $\ell$}
        \IF{($\hat a = a_{u_i}$)}
            \STATE $\gamma_0 \gets \frac{4}{3}\sqrt{\frac{\pi \ln \frac{m}{\tau} }{N_{\min}-k_{\max}}}$
            \IF{($\gamma < \gamma_0$)}
                \STATE Set $\Delta_{\vf} \gets \frac{3k_i}{2N(a_{u_i})}$, $\sigma \gets \Delta_{\vf}\gamma$
                \STATE Sample $\nu \in \bb R$ from $\+N(0,\sigma^2)$   
                \STATE Set $\wh \vf_{(a_{u_i})}(D') \gets \wh \vf_{(a_{u_i})}(D') + \nu$
            \ENDIF
        \ELSE
            \STATE Set $N'(a_{u_i}) \gets N(a_{u_i}) - k_i$ 
            \STATE Compute the new empirical mean reward  $\hat \mu'(a_{u_i}) \gets \frac{\hat\mu(a_{u_i})\cdot N(a_{u_i}) - \sum_{(a_{u_i},r_j) \in U_i} r_j}{N'(a_{u_i})}$
            \STATE Compute the new penalty $b'(a_{u_i}) \gets \sqrt{\frac{\ln \frac{ m}{\tau}}{2N'(a_{u_i})}}$ 
            \STATE Set $\wh \vf_{(a_{u_i})}(D') \gets \hat \mu'(a_{u_i}) - b'(a_{u_i})$ 
        \ENDIF
    \ENDFOR   
\end{algorithmic}
{\bfseries Output:} $\hat a' = \argmax_{a} \wh \vf_{(a)}(D')$ 
\end{algorithm}

\begin{algorithm}[H]
\caption{Multi-source unlearning algorithm under $\+M_f$ with input $\emptyset$}
{\bfseries Input:} Output of $\pi(D')$, additional statistics $T(D')$: $\pi(D) = \hat a$, $\set{a_{u_i}}_{i=1}^\ell$, $k_i$, $N(a_{u_i})$ for $i\in [\ell]$, $\wh \vf(D')$, a confidence level: $\tau \in (0,1)$  
\begin{algorithmic}[1]
    \FOR{$i=1$ to $\ell$}
        \IF{($\hat a = a_{u_i}$)}
            \STATE $\gamma_0' \gets \frac{4}{3}\sqrt{\frac{\pi \ln \frac{m}{\tau} }{N_{\min}-k_{\max}}}$
            \IF{($\gamma < \gamma_0$)}
                \STATE Set $\Delta_{\vf} \gets \frac{3k_i}{2N(a_{u_i})}$, $\sigma \gets \Delta_{\vf}\gamma$
                \STATE Sample $\nu \in \bb R$ from $\+N(0,\sigma^2)$   
                \STATE Set $\wh \vf_{(a_{u_i})}(D') \gets \wh \vf_{(a_{u_i})}(D') + \nu$
            \ENDIF
        \ENDIF
    \ENDFOR    
\end{algorithmic}
{\bfseries Output:} $\hat a' = \argmax_{a} \wh \vf_{(a)}(D')$ 
\end{algorithm}

We let $a_{u_1}$ be the arm with the largest $\wh \vf_{(a)}(D)$ among $\set{a_{u_i}}_{i=1}^\ell$ without loss of generality. Therefore if $\hat a \in \set{a_{u_i}}_{i=1}^\ell$, then $\hat a = a_{u_1}$. Moreover, let $\hat \mu'(a) = \hat \mu(a), b'(a) = b(a)$ if arm $a$ does not rollback. 

For every $a_{u_i}$, the way to achieve $(\eps,\delta)$-UL is the same as \Cref{alg:single}. Since \Cref{alg:fixed_multi} will add gaussian noise to at most one arm $a_{u_1}$ and execute rollback on other arms, by applying \Cref{thm:unlearn_fixed_single}, we could obtain that \Cref{alg:fixed_multi} and \Cref{alg:LCB} are $(\eps,\delta)$-UL. Then we give an upper bound of sub-optimality $SubOpt(\vN,N_*,\set{k_i}_{i=1}^\ell,\eps,\delta)$.

\begin{theorem}[\Cref{thm:fixed_multi_ub} restated]
Consider any offline dataset $D(\vN)$, any unlearning request $U = \bigcup_{i=1}^{\ell} U_i$ where $U_i$ is selected from the data points of $a_{u_i}$. $N(a^*) \ge N_*$, $\abs{U_i} = k_i, \forall i\in[\ell]$. When $k_i \leq  N(a_{u_i}) - \sqrt{\frac{N(a_{u_i})\ln\frac{m}{\tau}}{2}}, \forall i\in [\ell]$, $k_{\max} < N_{\min}$, by setting $\tau = \frac{1}{N}$, the arm $\hat a'$ returned by \Cref{alg:fixed_multi} satisfies when $\gamma < \gamma_0' = \frac{4}{3}\sqrt{\frac{\pi \ln \tp{Nm} }{N_{\min} - k_{\max}}}$,
\begin{align*}
    SubOpt(\vN,N_*,\set{k_i}_{i=1}^\ell,\eps,\delta)  = O\tp{\max\set{\sqrt{\frac{\ln \tp{Nm}}{N_{\min}}} +  \frac{k_{\max} \cdot \gamma}{N_{\min}}, \sqrt{\frac{\ln \tp{N m} }{N_*}} }}. 
\end{align*}
while when $\gamma \ge \gamma_0'$,
\begin{align*}
    SubOpt(\vN,N_*,\set{k_i}_{i=1}^\ell,\eps,\delta)  = O\tp{\max\set{\sqrt{\frac{\ln \tp{N m} }{N_{\min} - k_{\max}} }, \sqrt{\frac{\ln \tp{N m} }{N_*}} }}. 
\end{align*} 
\end{theorem}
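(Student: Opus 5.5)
The plan is to reduce the multi-source statement to the single-source analysis of \Cref{thm:fixed_single_ub}. Since \Cref{alg:fixed_multi} adds noise to at most one arm, the proof splits according to whether $a^*$ is among the requesting arms. As in the preceding discussion, I label the arms so that $a_{u_1}$ has the largest LCB among $\set{a_{u_i}}_{i=1}^\ell$. Then only $a_{u_1}$ can receive Gaussian noise (if $\hat a = a_{u_1}$ and $\gamma<\gamma_0'$), and every other $a_{u_i}$ is rolled back.

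I will work on a single good event $\Eps$, which is the intersection of three events, and set $\tau=1/N$.
\begin{enumerate}
\item Hoeffding concentration $\abs{\hat\mu(a)-\mu(a)}\le b(a)$ holds for all arms on $D$.
\item Hoeffding concentration holds for every rolled-back estimate $\hat\mu'(a_{u_i})$ with radius $b'(a_{u_i})=\sqrt{\ln\frac m\tau/(2(N(a_{u_i})-k_i))}$. This is valid because each $U_i$ is independent of the rewards, so $\hat\mu'(a_{u_i})$ is unbiased.
\item The Gaussian tail bound $\abs{\nu}\le\sigma\sqrt{2\ln\frac m\tau}$ holds.
\end{enumerate}
A union bound over at most $m+\ell+1\le 2m+1$ events gives $\Pr{\Eps^c}=O(\tau)$. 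On $\Eps$ every LCB, whether original or rolled back, lower bounds the true mean. The noisy coordinate is handled separately.

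\textbf{Case $a^*\notin\set{a_{u_i}}$.} The coordinate of $a^*$ keeps its value $\hat\mu(a^*)-b(a^*)\ge\mu(a^*)-2b(a^*)$. Every rolled-back competitor $a$ has LCB at most $\mu(a)$. If the output is the noised $a_{u_1}$, then $a_{u_1}=\hat a$ and $\nu\ge$ (threshold), so it still beat $a^*$ with its original LCB. If instead $\nu$ is negative, the argmax is taken over coordinates that are each either original or rolled back, and all of them are valid. So in all cases $\mu(a^*)-\mu(\hat a')\le 2b(a^*)\le\sqrt{2\ln(Nm)/N_*}$.

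\textbf{Case $a^*=a_{u_j}$.} There are three sub-cases.
\begin{itemize}
\item If $a^*$ is rolled back, the standard LCB argument with the rolled-back penalty gives $\mu(a^*)-\mu(\hat a')\le 2b'(a^*)\le\sqrt{2\ln(Nm)/(N_{\min}-k_{\max})}$. Rollback happens either because $\hat a\ne a^*$ or because $\gamma\ge\gamma_0'$.
\item If $\hat a\ne a^*$ and $\gamma<\gamma_0'$, I instead follow the single-source argument: $a^*$ is rolled back and the other coordinates are valid LCBs. This gives the bound $\sqrt{2\ln(Nm)/(N(a^*)-k_j)}$ in the worst case. Since $\gamma<\gamma_0'$, this is within a constant of the Gaussian-branch bound by the definition of $\gamma_0'$, as in the "how to obtain $\gamma_0$" computation.
\item If $a^*=\hat a=a_{u_1}$ and $\gamma<\gamma_0'$, the single-source computation applies verbatim. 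It gives $\sqrt{2\ln(Nm)/N(a^*)}+\sigma/\sqrt{2\pi}$ with $\sigma=\frac{3k_1\gamma}{2N(a_{u_1})}\le\frac{3k_{\max}\gamma}{2N_{\min}}$.
\end{itemize}
Taking the worst case over which arm is $a^*$ replaces $N(a_{u_j})$ and $k_j$ by $N_{\min}$ and $k_{\max}$. Adding $\Pr{\Eps^c}\le O(1/N)$ and taking the maximum with the first case then yields both displayed bounds.

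The main obstacle is the middle sub-case under $\gamma<\gamma_0'$ when $\hat a\ne a^*$. There I must show that the rollback bound $\sqrt{\ln(Nm)/(N_{\min}-k_{\max})}$ is dominated by $\sqrt{\ln(Nm)/N_{\min}}+k_{\max}\gamma/N_{\min}$. This comparison does not hold uniformly. I would first try the single-source device: after rollback, $\hat a'\in\set{\hat a,a^*}$ or another rolled-back arm, so the sub-optimality is no worse than the original learner's bound $\sqrt{2\ln(Nm)/N(a^*)}$. This requires checking that rolling back other $a_{u_i}$ only weakly changes their LCBs. If that fails, I would fall back on the threshold comparison via $\gamma_0'$, which yields equality of the two expressions up to constants.
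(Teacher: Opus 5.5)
\paragraph{The gap.} Your proposal breaks down in exactly the sub-case you flag: $a^*=a_{u_j}$, $\hat a\neq a^*$, $\gamma<\gamma_0'$. Your second bullet claims the rollback bound $\sqrt{2\ln(Nm)/(N(a^*)-k_j)}$ is within a constant of the Gaussian-branch bound ``by the definition of $\gamma_0'$''. Your fallback relies on the same comparison.

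This is backwards. $\gamma_0'$ is the crossing point of the two expressions, so for $\gamma<\gamma_0'$ the Gaussian expression is the \emph{smaller} one, and the ratio is unbounded. The hypothesis only guarantees $N(a^*)-k_j\ge\sqrt{N(a^*)\ln(Nm)/2}$. Take $N_*=N(a^*)=N_{\min}$, $k_j$ at its maximal allowed value, and $\gamma\to0$. Then your rollback bound is of order $(\ln(Nm)/N_{\min})^{1/4}$, while the target is of order $(\ln(Nm)/N_{\min})^{1/2}$.

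Your first bullet has a related flaw when $\gamma<\gamma_0'$. If $\hat a=a_{u_1}\neq a^*$, that coordinate carries Gaussian noise and is not a valid lower bound. So the ``standard LCB argument'' with $b'(a^*)$ does not apply as stated, and even if patched it only gives the too-weak $2b'(a^*)$.

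\paragraph{The missing idea.} This is how the paper argues: in this sub-case, never use $a^*$'s rolled-back LCB; compare against $\hat a$'s coordinate instead. Since $\hat a$ is the argmax on $D$, either $\hat a\notin\{a_{u_i}\}$ (its coordinate is untouched) or $\hat a=a_{u_1}$ (its coordinate is only noised). On $\mathcal E$ we also have $\hat f_{(\hat a)}(D)\ge\hat f_{(a^*)}(D)\ge\mu(a^*)-2b(a^*)$, with the \emph{original} penalty.

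If $\hat a'=\hat a$, sub-optimality is at most $2b(a^*)$. If $\hat a'\neq\hat a$, its coordinate is an original or rolled-back LCB, which is valid on $\mathcal E$. Hence
\[
\mu(\hat a')\ge\hat f_{(\hat a')}(D')\ge\hat f_{(\hat a)}(D)+\nu\,\mathbf 1\{\hat a=a_{u_1}\}\ge\mu(a^*)-2b(a^*)+\nu\,\mathbf 1\{\hat a=a_{u_1}\}.
\]
Taking expectations gives $\sqrt{2\ln(Nm)/N(a^*)}+\sigma/\sqrt{2\pi}$ with $\sigma\le 3k_{\max}\gamma/(2N_{\min})$, which is the Gaussian-branch bound.

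Two corrections to your sketch of this device follow:
\begin{itemize}
\item No condition that the other rolled-back LCBs change only ``weakly'' is needed. They may move arbitrarily; only their validity (your item 2) matters.
\item The result is not ``no worse than the learner''. When $\hat a=a_{u_1}\neq a^*$, you must keep the $\sigma/\sqrt{2\pi}$ term coming from negative noise.
\end{itemize}

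With this replacement, the rest of your plan goes through and matches the paper. That covers the case $a^*\notin\{a_{u_i}\}$, the case $\hat a=a^*=a_{u_1}$, and the regime $\gamma\ge\gamma_0'$, where all requesting arms are rolled back and $2b'(a^*)$ is the correct bound.
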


\begin{proof}
When $\gamma < \gamma_0'$, we first consider the case where $a^* \in \set{a_{u_i}}_{i=1}^\ell$. Denote $\Eps_1$ as the event that $\abs{\hat \mu(a) - \mu(a)} \leq \sqrt{\frac{\ln \frac{ m}{\tau}}{2N(a)}}$ for every $a\in \+A$ on dataset $D$ and $\abs{\hat \mu'(a_{u_i}) - \mu(a_{u_i})} \leq \sqrt{\frac{\ln \frac{ m}{\tau}}{2(N(a_{u_i} - k_i)}}$ for every $a_{u_i} \in \set{a_{u_i}}_{i=1}^\ell$ on $D'$. Applying Hoeffding's inequality for a fixed arm $a\in \+A$ and $a_{u_i}\in \set{a_{u_i}}_{i=1}^\ell$, we have 
\[
    \Pr{\abs{\hat \mu(a) - \mu(a)} \leq \sqrt{\frac{\ln \frac{ m}{\tau}}{2N(a)}}} \ge 1 - \frac{2\tau}{ m}, \quad \Pr{\abs{\hat \mu'(a_{u_i}) - \mu(a_{u_i})} \leq \sqrt{\frac{\ln \frac{ m}{\tau}}{2(N(a_{u_i}) - k_i)}}} \ge 1 - \frac{2\tau}{ m}
\]
which implies that $\Pr{\Eps_1} \ge 1- 4\tau$ using the union bound. Furthermore, regardless of whether $\hat a \in \set{a_{u_i}}_{i=1}^\ell$ occurs or not, we could always let $\Eps_2$ be the event that
$\abs{\nu} \leq \sigma\sqrt{2\ln\frac{ m}{\tau}} = t$, according to the property of Gaussian $\+M_d$, we have 
\[
    \Pr{\abs{\nu} \leq t} \ge 1 - 2e^{-\frac{t^2}{2\sigma^2}} = 1- \frac{2\tau}{ m},
\]
then we obtain $\Pr{\Eps_2} \ge 1- \tau$ since $m \ge 2$. Therefore under the condition that $\Eps = \Eps_1 \cap \Eps_2$ occurs, for every $a\in \+A$, $a_{u_i}\in \set{a_{u_i}}_{i=1}^\ell$ and $\nu$, we have 
\[
    \abs{\hat\mu(a) - \mu(a)} \leq b(a), \quad \abs{\hat \mu'(a_{u_i}) - \mu(a_{u_i})} \leq \sqrt{\frac{\ln \frac{ m}{\tau}}{2(N(a_{u_i}) - k_i)}}, \quad \abs{\nu} \leq \sigma\sqrt{2\ln\frac{ m}{\tau}}.  
\]

If the event $\hat a \in \set{a_{u_i}}_{i=1}^\ell$ (i.e. $\hat a = a_{u_1}$) occurs, \Cref{alg:fixed_multi} will add Gaussian noise sampled from $\+N(0,\sigma^2)$ to $\wh \vf_{(a_{u_1})}(D)$ and execute rollback on $a_{u_i}, 2\leq i \leq \ell$. There are also two branches of whether $\hat a = a^*$ or not. If $\hat a = a^*$, in view of the definition of $\hat a'$, for $\hat a' \neq \hat a$, we have 
\[  
    \mu(\hat a') \ge \wh \vf_{(\hat a')}(D') \ge \wh \vf_{(a^*)}(D)+ \nu \ge \mu(a^*) - 2b(a^*) + \nu.
\]
Else $\hat a \neq a^*$, for $\hat a' \neq \hat a$, we have 
\[
    \mu(\hat a') \ge \wh \vf_{(\hat a')}(D') \ge \wh \vf_{(\hat a)}(D) + \nu \ge \wh \vf_{(a^*)}(D) + \nu   \ge \mu(a^*) - 2b(a^*) + \nu.
\]
otherwise $\hat a' = \hat a$ and the sub-optimality is bounded by the learning algorithm. And in both branches $\hat a' \neq \hat a$ occurs only if $\nu < 0$. Similarly we could prove that 
\begin{align*}
    \E{\mu(a^*) - \mu(\hat a') } 
    &\leq \sqrt{\frac{2\ln \frac{ m}{\tau}}{N(a^*)}} + \frac{\sigma}{\sqrt{2\pi}}.
\end{align*}

If the event $\hat a \notin \set{a_{u_i}}_{i=1}^\ell$ occurs, \Cref{alg:fixed_multi} will execute rollback on all $a_{u_i}$. Before the rollback, we know that $\wh \vf_{(\hat a)}(D) > \wh \vf_{(a^*)}(D) \ge \mu(a^*) - 2b(a^*)$ and $\wh \vf_{(\hat a)}(D)$ remains the same after the rollback. For any $\hat a' \neq \hat a$, we know that 
\[
    \mu(\hat a') \ge \wh \vf_{(\hat a')}(D') \ge \wh \vf_{(\hat a)}(D) \ge \wh \vf_{(a^*)}(D) \ge \mu(a^*) - 2b(a^*).
\] 
In conclusion, when $a^* = a_{u_j}$ for some $j\in [\ell]$, the expected sub-optimality satisfies:
\begin{align*}
    \E{\mu(a^*) - \mu(\hat a') } 
     &\leq \E{ \mu(a^*) - \mu(\hat a') \mid \Eps} +  
     \Pr{\Eps^c}\cdot \E{\mu(a^*) - \mu(\hat a') \mid \Eps^c} \\ 
     &= \E{\tp{\mu(a^*) - \mu(\hat a')}\1{\hat a \in \set{a_{u_i}}_{i=1}^\ell} \mid \Eps} +  \E{\tp{\mu(a^*) - \mu(\hat a')}\1{\hat a\notin \set{a_{u_i}}_{i=1}^\ell} \mid \Eps} + \sum_{j=1}^2 \Pr{\Eps_j^c} \\ 
     &\leq \Pr{\hat a \in \set{a_{u_i}}_{i=1}^\ell \mid \Eps}\cdot \tp{\sqrt{\frac{2 \ln \frac{ m}{\tau}}{N(a^*)}} + \frac{\sigma}{\sqrt{2\pi}} } + \Pr{\hat a\notin \set{a_{u_i}}_{i=1}^\ell \mid \Eps}\cdot \tp{ \sqrt{\frac{2\ln \frac{ m}{\tau}}{N(a^*)}} + \tau } + 5\tau \\
     &\leq \sqrt{\frac{2\ln \frac{ m}{\tau}}{N(a^*)}} + \frac{\sigma}{\sqrt{2\pi}} + 5\tau \\ 
     &\leq \sqrt{\frac{2\ln \tp{Nm}}{N_{\min}}} +  \frac{3k_{\max} \gamma}{2\sqrt{2\pi}N_{\min}} + \frac{5}{N}. 
\end{align*}
The second case is $a^* \notin \set{a_{u_i}}_{i=1}^\ell$. Regardless of whether the event $\hat a \in \set{a_{u_i}}_{i=1}^\ell$ occurs or not, $\wh \vf_{(a^*)}(D)$ will remain the same. Thus the expected sub-optimality will not be worse than the learning algorithm, therefore 
\begin{align*}
 \E{\mu(a^*) - \mu(\hat a') } 
     &\leq  \E{\mu(a^*) - \mu(\hat a) } \leq \sqrt{\frac{2\ln \tp{Nm}}{N_*}} + \frac{2}{N}.  
\end{align*}
In conclusion, when $\gamma < \gamma_0'$, the upper bound satisfies
\begin{align*}
\E{\mu(a^*) - \mu(\hat a')} 
     &=  
     \begin{cases}
        O\tp{\sqrt{\frac{\ln \tp{Nm}}{N_{\min}}} +  \frac{k_{\max} \gamma}{N_{\min}} }, & a^* \in \set{a_{u_i}}_{i=1}^\ell, \\
        O\tp{\sqrt{\frac{\ln \tp{N m} }{N_*}} }, & a^* \notin \set{a_{u_i}}_{i=1}^\ell.
    \end{cases}
\end{align*}

When $\gamma \ge \gamma_0'$, \Cref{alg:fixed_multi} executes rollback on all $a \in \set{a_{u_i}}_{i=1}^\ell$. Additionally when $a^* \notin \set{a_{u_i}}_{i=1}^\ell$, we could similarly prove that the expected sub-optimality will not be worse than the learning algorithm, thus 
\begin{align*}
 \E{\mu(a^*) - \mu(\hat a') } 
     &\leq  \E{\mu(a^*) - \mu(\hat a) } \leq \sqrt{\frac{2\ln \tp{Nm}}{N(a^*)}} + \frac{2}{N}.  
\end{align*}
When $a^*\in \set{a_{u_i}}_{i=1}^\ell$, the upper bound satisfies
\begin{align*}
 \E{\mu(a^*) - \mu(\hat a')} 
     &\leq \E{ \mu(a^*) - \mu(\hat a') \mid \Eps} +  
     \Pr{\Eps^c}\cdot \E{\mu(a^*) - \mu(\hat a') \mid \Eps^c} \\ 
     &\leq \sqrt{\frac{2\ln \tp{Nm}}{N_{\min} - k_{\max} }} + \frac{5}{N}. 
\end{align*}
In conclusion, when $\gamma \ge \gamma_0'$, the upper bound satisfies
\begin{align*}
    \E{\mu(a^*) - \mu(\hat a')} 
     &=  
     \begin{cases}
        O\tp{\sqrt{\frac{\ln \tp{N m} }{N_{\min} - k_{\max}}} }, & a^* \in \set{a_{u_i}}_{i=1}^\ell,  \\
        O\tp{\sqrt{\frac{\ln \tp{N m} }{N_*}} }, & a^* \notin \set{a_{u_i}}_{i=1}^\ell.
    \end{cases}
\end{align*}

\end{proof}

\paragraph{How to obtain $\gamma_0'$}
Similarly, the threshold $\gamma_0'$ is optimized in the case $a^*\in \set{a_{u_i}}_{i=1}^\ell$, since the upper bound is fixed to be $O\tp{\sqrt{\frac{\ln \tp{N m} }{N(a^*)}}}$ when $a^*\notin \set{a_{u_i}}_{i=1}^\ell$. Let 
\[
    \sqrt{\frac{2\ln \frac{m}{\tau} }{N_{\min} - k_{\max} }} = \sqrt{\frac{2\ln \frac{m}{\tau}}{N_{\min}}} +  \frac{3k_{\max} \gamma}{2\sqrt{2\pi}N_{\min}},
\]
and we could obtain that $\gamma = \frac{4\sqrt{\pi N_{\min} \ln \frac{m}{\tau} }}{3\sqrt{N_{\min}-k_{\max}}\tp{\sqrt{N_{\min}-k_{\max}} + \sqrt{N_{\min}}} } \approx \frac{4}{3}\sqrt{\frac{\pi \ln \frac{m}{\tau} }{N_{\min}-k_{\max}}} = \gamma_0'$.

\section{Experiments}
\label{append:exp}
The experiments are performed on a Dell Inspiration 16 PLUS 7620 laptop equipped with a 12th Generation Intel(R) Core(TM) i7-12700H CPU running at 2.30 GHz, alongside 16 GB of RAM. The operating system utilized is Windows 11, and the Python version employed is 3.13.5.  

\subsection{Additional Experimental Details of \Cref{sec:exp_fixed}}\label{append:exp_fixed}

We generate synthetic offline bandit data using a round-robin behavior policy over 5 arms with Bernoulli rewards $\vmu = (0.10, 0.08, 0.06, 0.04, 0.02)$. For each experimental configuration, we generate 200 independent trajectories and evaluate the algorithms under a prefix-sharing protocol. Unless otherwise stated, trajectories are partitioned into blocks with an effective block size of $B = 100$.

\paragraph{Deletion protocols.}
For experiments that vary the total sample size $N$ or the parameter $\gamma$, deletions always target the first 5 blocks of a designated arm $a_0$. For experiments that vary the deletion size $k$, we fix $N = 3000$ and delete all samples of arm $a_0$ within a sequence of consecutive blocks, with the number of blocks ranging from 1 to 25. Under the round-robin behavior policy, each block contains approximately $B/5 = 20$ samples per arm. As a result, deleting between 1 and 25 consecutive blocks of arm $a_0$ corresponds to deletion sizes ranging from $k = 20$ to $k = 500$, which explains the range of $k$ considered in \Cref{fig:alg2.pdf}. In both experimental settings, each prefix is subjected to 5 independent deletion requests.

We consider both the hard deletion case, where $a_0 = a^*$ is the optimal arm, and the easy deletion case, where $a_0$ is the second-best suboptimal arm. All results are averaged over 10 independent runs per configuration.

\paragraph{Reporting protocol.}
Due to multiple sources of randomness---including stochastic rewards, the behavior policy, and randomized baselines---the variance across runs is highly heterogeneous across methods and parameter regimes. Plotting error bars in a single figure would therefore significantly clutter the visualization and obscure the main trends. For clarity, we only report mean performance across runs.

\paragraph{Implementation details.}
Although we describes blocks of size $B = 100$ for ease of exposition here, in the actual implementation we generate blocks of size $B/2 = 50$. Two consecutive blocks are treated as a single logical block when reporting results, so that the effective block size remains $B = 100$. This design choice is motivated by the mixing baseline: for each logical block, we split the samples into two disjoint subsets, using one half for the rollback step and the other half for the Gaussian mechanism. Generating blocks of size $B/2$ allows this split to be implemented cleanly at the block level without introducing additional randomness. Importantly, this implementation detail does not affect the total sample size, the prefix-sharing protocol, or the effective deletion sizes reported in the main text. All reported results are therefore consistent with the description in \Cref{sec:exp_fixed}.

\subsection{Additional Experimental Details of \Cref{sec:exp_distribution}}
To study the $C^* \in [2,+\infty)$ case, we study the performance of our methods under a stochastic behavior policy that selects the optimal arm with probability $1/C^*, C^*=5$ and distributes the remaining probability uniformly over suboptimal arms, inducing controlled imbalance in the data. Prefixes of length $N\in\{1000,\dots,5000\}$ are used. For experiments varying $N$ or $\gamma$, deletions target a designated arm with $k=80$ samples per prefix. For experiments varying the deletion size $k$ (ranging from 20 to 500 at fixed $N=3000$), $k$ consecutive samples are removed. In both cases, each prefix undergoes 5 independent deletion requests. The configurations are also averaged over $10$ runs for each configuration. This setup isolates the effect of data imbalance on unlearning performance.

\subsection{Experiments of \Cref{alg:fixed_single_mixing}}
In the experiments below, we focus exclusively on the hard deletion case
where \(a^* = a_0\). This choice is motivated by the design of \Cref{alg:fixed_single_mixing}, in which the mixing parameter is set to \(k' = k\) whenever \(\hat a \neq a_0\). As a result, for easy deletion cases (\(a^* \neq a_0\)), different values of \(\eta\) do not produce meaningful differences across algorithms, and including these cases would not provide additional insights.

We adopt the same experimental setup as described below to evaluate the performance of \Cref{alg:fixed_single_mixing}, except for a minor implementation difference in the fixed-sample model. Specifically, in the fixed-sample model, each logical block of size \(B=100\) is further partitioned into 4 equal-sized sub-blocks of size $B/4=25$. This construction allows us to implement different mixing ratios \(\eta \in \{0, 0.25, 0.5, 0.75, 1\}\), where \(\eta = k'/k\) controls the fraction of samples allocated to the rollback step, and the remaining samples are processed using the Gaussian mechanism. In particular, \(\eta=0\) corresponds to the pure Gaussian mechanism, while \(\eta=1\) corresponds to the pure rollback procedure. Intermediate values of \(\eta\) are realized by assigning the corresponding number of sub-blocks to rollback. Apart from this block-level construction in the fixed-sample model, all other aspects of the experimental protocol—including data generation, prefix-sharing, deletion strategies, and evaluation metrics—remain identical to those described in the main text. In particular, the experiments under the distribution model use exactly the same configuration as in \Cref{sec:exp}. \Cref{fig:alg2_mixing.pdf} shows that optimal performance is achieved at the two extremes \(\eta=0\) and \(\eta=1\) at almost all time, which validates the design choices behind \Cref{alg:single} and \Cref{alg:dist_single_le2}. We only need to consider the trade-off between Gaussian mechanism and rollback. 

\begin{figure}[H]
    \centering
    \begin{subfigure}{\linewidth}
        \centering
        \includegraphics[width=\linewidth]{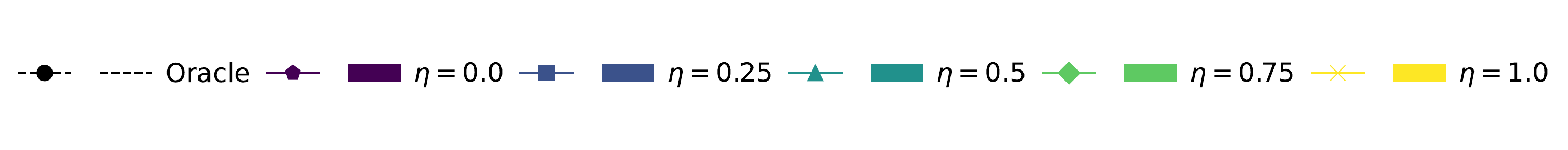}
    \end{subfigure}
    \begin{subfigure}{0.33\linewidth}
        \centering
        \includegraphics[width=\linewidth]{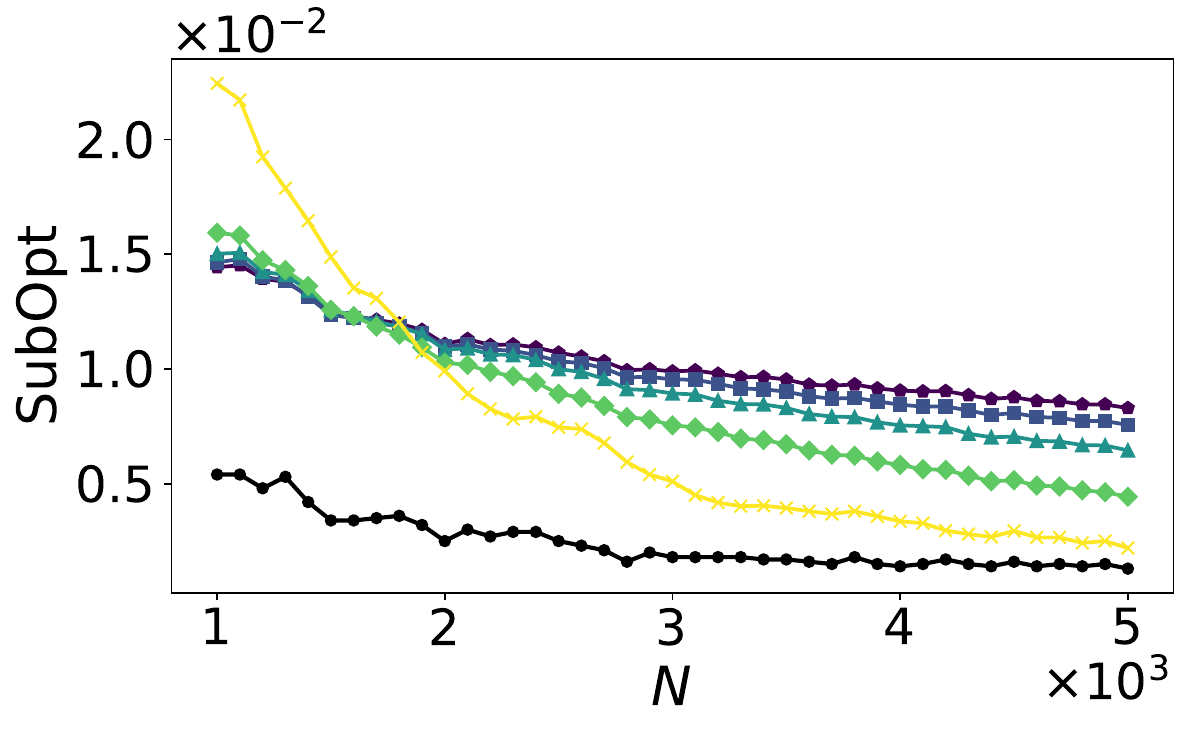}
        \caption{$\+M_f$, hard case}
    \end{subfigure}
    \begin{subfigure}{0.33\linewidth}
        \centering
        \includegraphics[width=\linewidth]{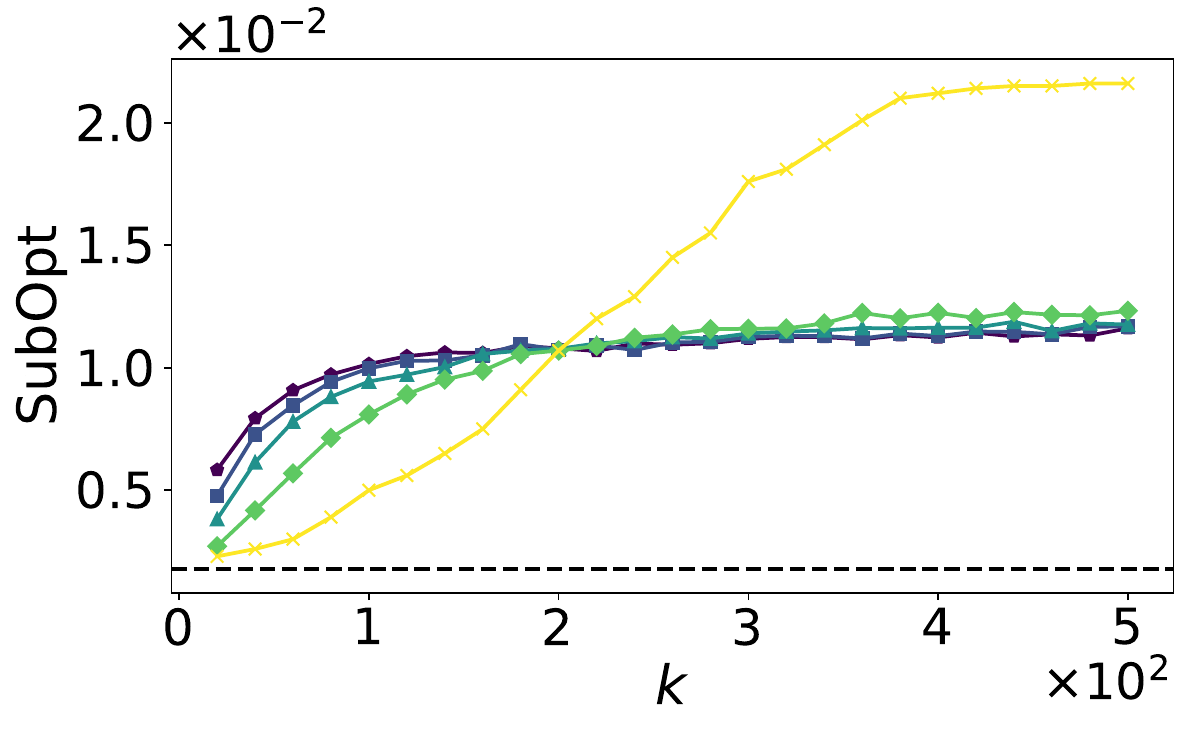}
        \caption{$\+M_f$, hard case}
    \end{subfigure}
    \hfill
    \begin{subfigure}{0.33\linewidth}
        \centering
        \includegraphics[width=\linewidth]{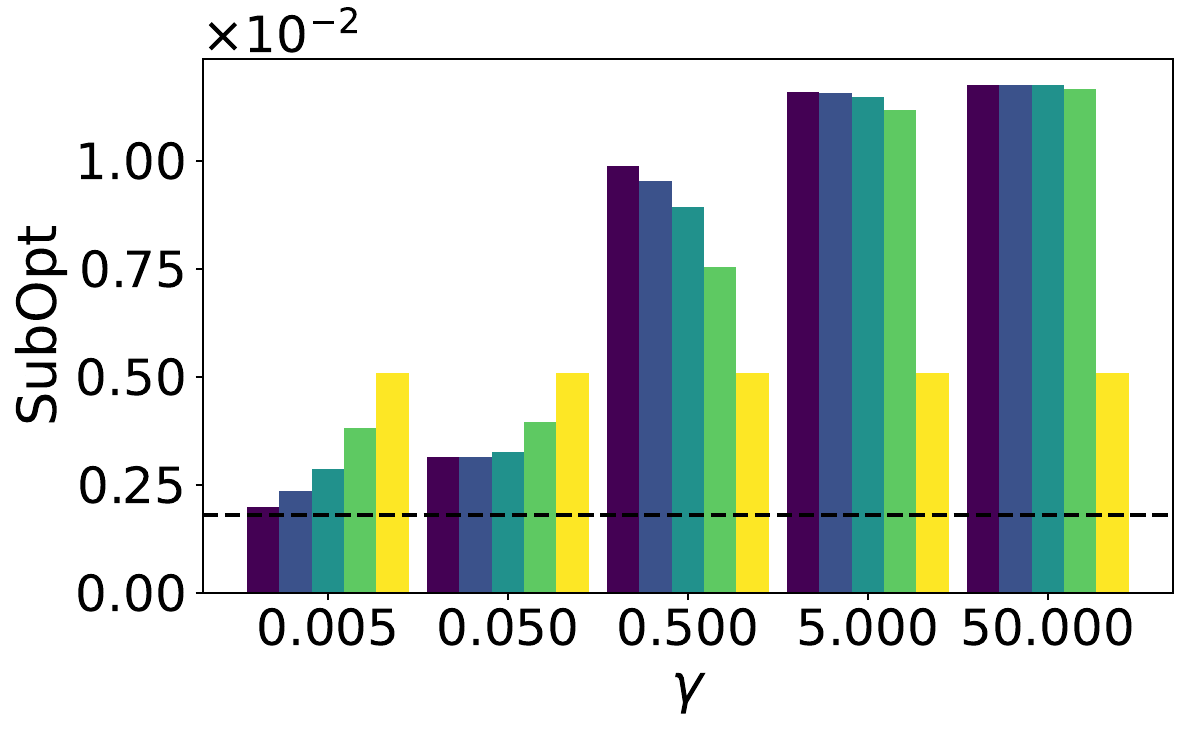}
        \caption{$\+M_f$, hard case}
    \end{subfigure}
    \begin{subfigure}{0.33\linewidth}
        \centering
        \includegraphics[width=\linewidth]{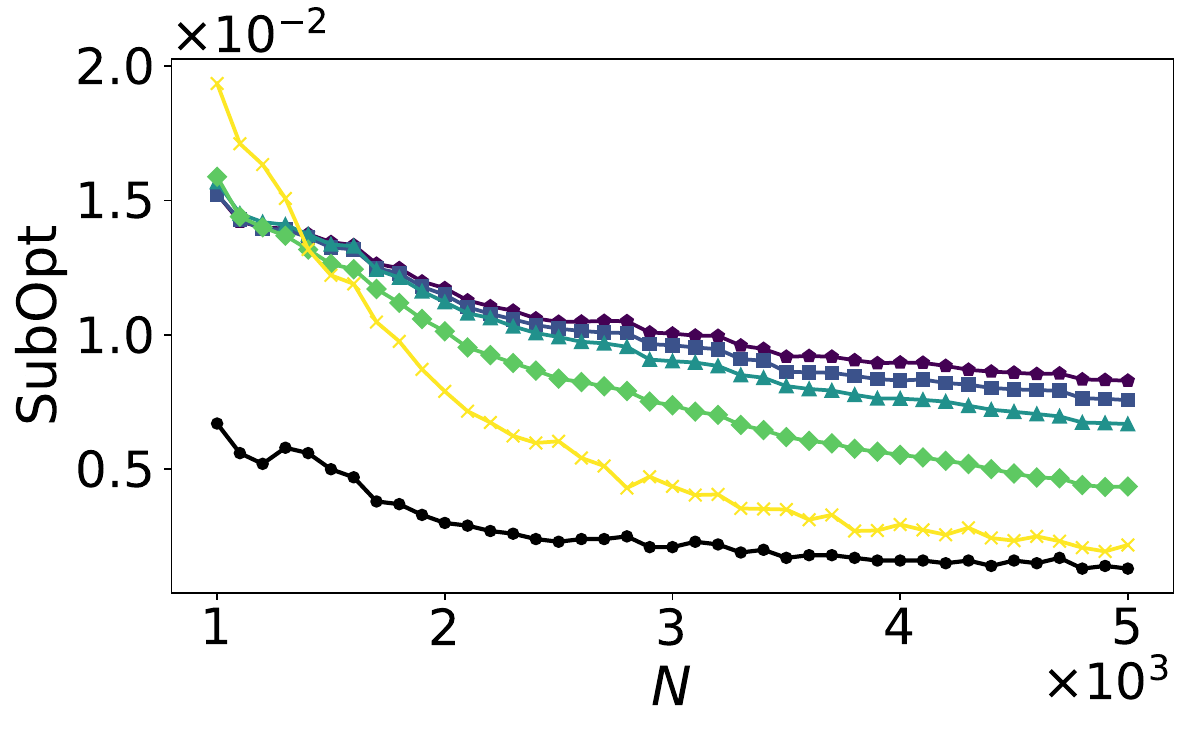}
        \caption{$\+M_d$, hard case}
    \end{subfigure}
    \begin{subfigure}{0.33\linewidth}
        \centering
        \includegraphics[width=\linewidth]{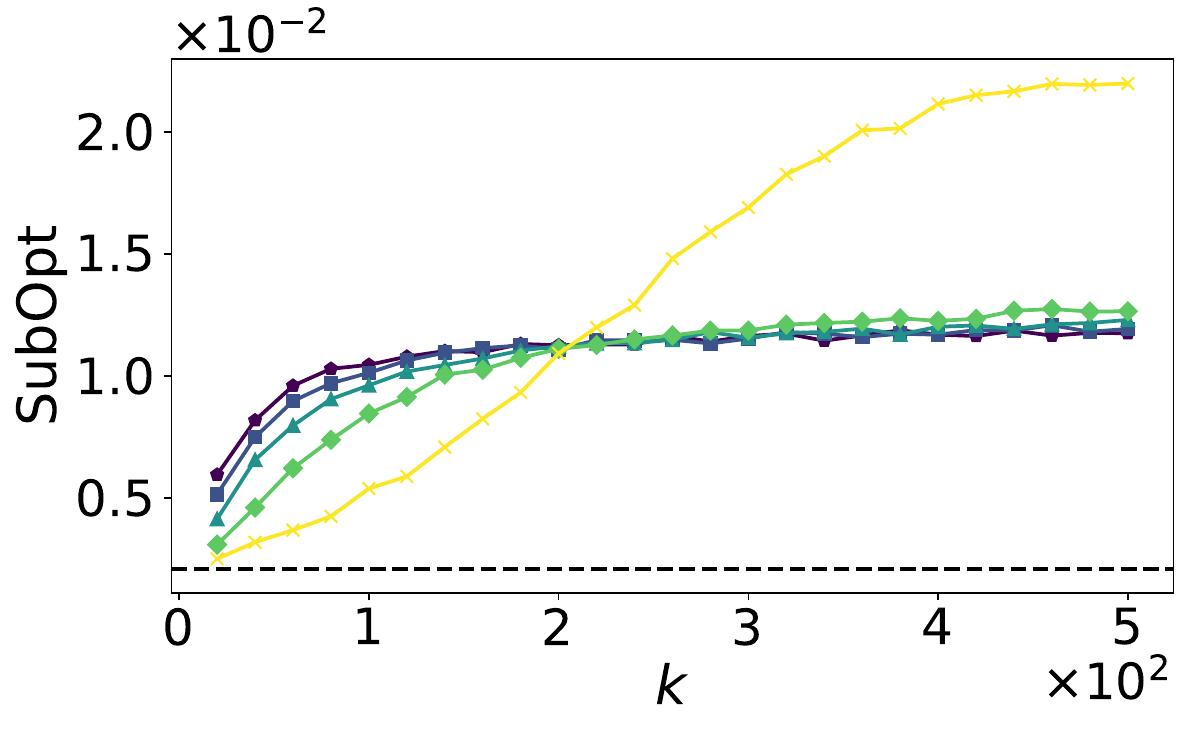}
        \caption{$\+M_d$, hard case}
    \end{subfigure}
    \hfill
    \begin{subfigure}{0.33\linewidth}
        \centering
        \includegraphics[width=\linewidth]{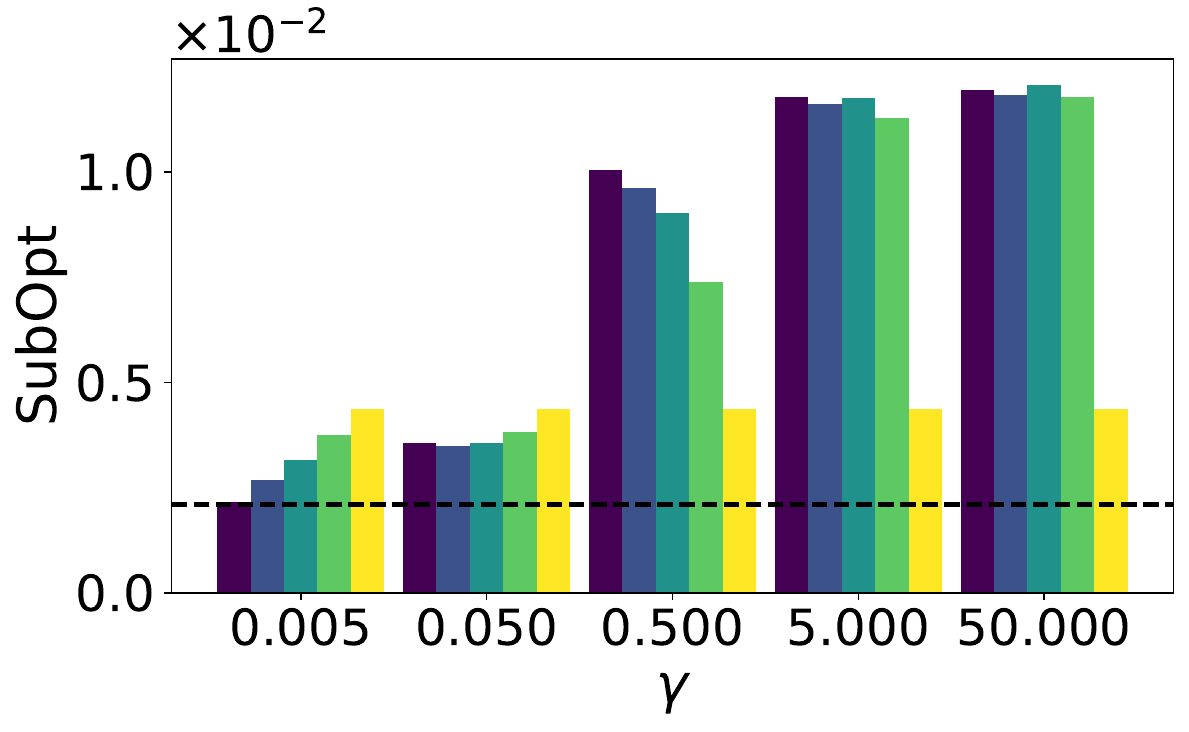}
        \caption{$\+M_d$, hard case}
    \end{subfigure}
    \caption{Comparison of \Cref{alg:fixed_single_mixing} with baselines under $\+M_f$ (uniform) and $\+M_d$ ($C^*=5$) when fixing $\gamma=0.5$ and $k=80$, fixing $\gamma=0.5$ and $N=3000$, and fixing $k=80$ and $N=3000$.}
    \label{fig:alg2_mixing.pdf}
\end{figure}




\subsection{Experiments of \Cref{alg:fixed_multi}}
The experimental setup follows the same protocol as described in \Cref{append:exp_fixed}, except that we focus exclusively on the fixed-sample setting, in accordance with the theoretical analysis of the multi-source unlearning algorithm. For the 5-arm fixed-sample model with ordering \(0 > 1 > 2 > 3 > 4\), the hard case corresponds to deleting $k$ samples each from arms 0 and 1, while the easy case corresponds to deleting $k$ samples each from arms 2 and 3. All other aspects of the experimental protocol—including block construction, prefix-sharing, and evaluation over multiple independent runs—remain identical to those described in \Cref{append:exp_fixed}.

\Cref{fig:multi.pdf} shows that \Cref{alg:fixed_multi} consistently exhibits robust and competitive performance across all configurations. In the hard case, where deletions affect the top arms, \Cref{alg:fixed_multi} remains consistently competitive and avoids the sharp degradation observed for the Gaussian mechanism under large $\gamma$ or large $k$. Quantitatively, when $N=2000$ and $\gamma=0.05$, it achieves 8\% ($k=40$) and 9\% ($k=80$) reduction in sub-optimality relative to the Gaussian mechanism, and  17\% ($k=40$) and 12\% ($k=80$) reduction relative to rollback when deletions are severe. As $N$ increases, the performance gap narrows and \Cref{alg:fixed_multi} remains perform better than Gaussian mechanism and rollback, demonstrating stable behavior even under adversarial deletion patterns. In the easy case, the algorithm achieves sub-optimality that is nearly identical to the Oracle and rollback baselines across all settings, while significantly outperforming the Gaussian mechanism. For example, when $N=2000$ and $\gamma=0.5$, \Cref{alg:fixed_multi} reduces sub-optimality by more than 80\% relative to the Gaussian mechanism, and remains either the best-performing method or within 1\% of it. This behavior persists as $N$ increases to $4000$, indicating that the algorithm scales favorably with sample size while maintaining stability. Overall, these results highlight a consistent trend: while baseline methods can be optimal in specific regimes, their performance varies substantially across configurations. In contrast, \Cref{alg:fixed_multi} maintains low sub-optimality across both hard and easy cases, offering a robust trade-off between accuracy and unlearning effectiveness in the multi-source fixed-sample setting.

\begin{figure}[H]
    \centering
    \begin{subfigure}{\linewidth}
        \centering
        \includegraphics[width=\linewidth]{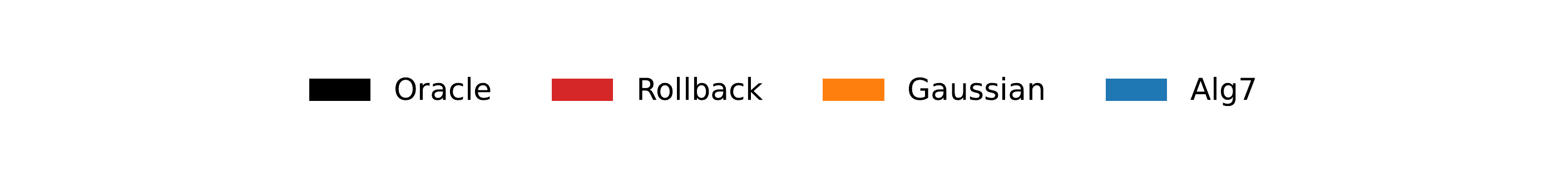}
    \end{subfigure}
    \begin{subfigure}{0.48\linewidth}
        \centering
        \includegraphics[width=\linewidth]{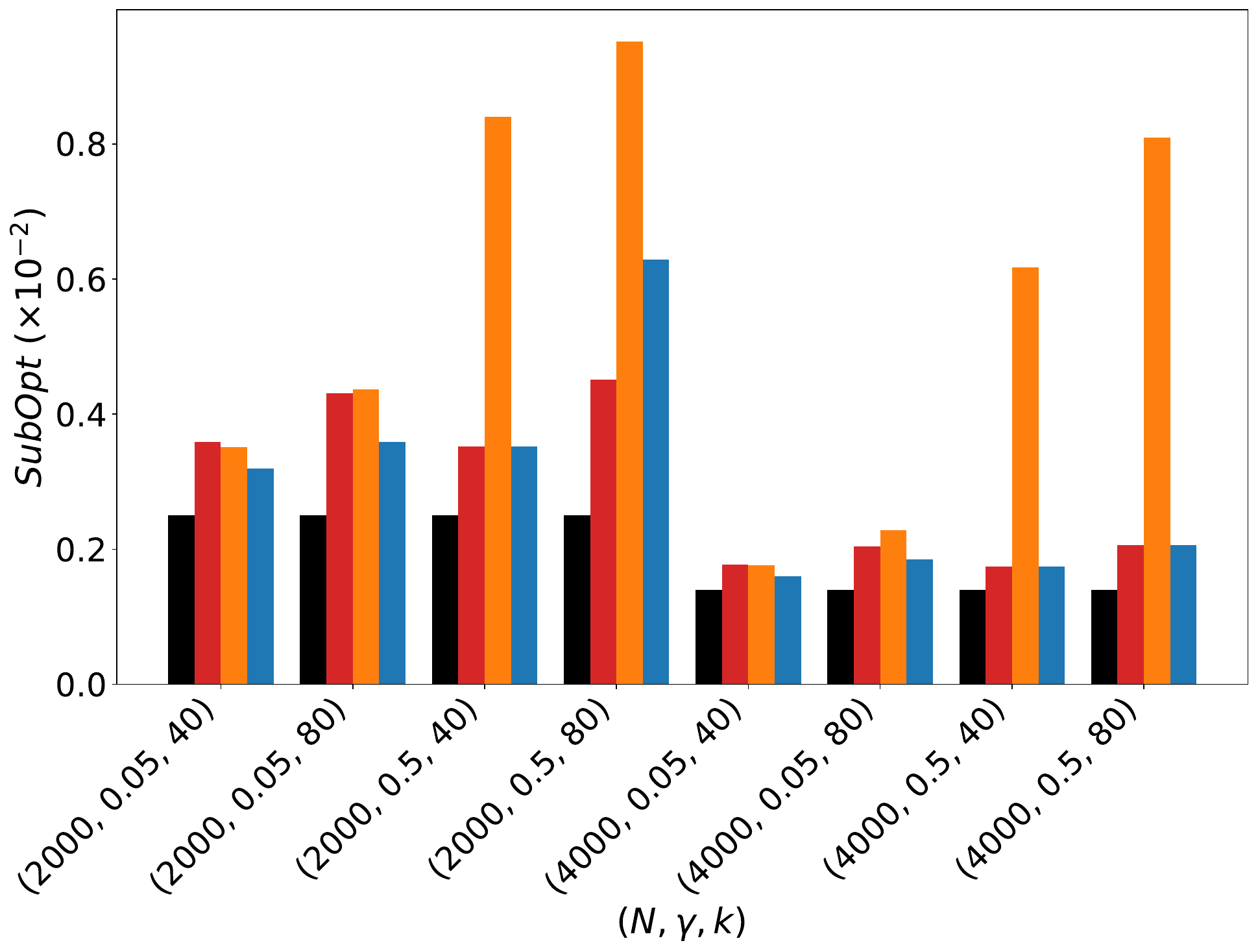}
        \caption{$\+M_f$, hard case}
    \end{subfigure}
    \begin{subfigure}{0.48\linewidth}
        \centering
        \includegraphics[width=\linewidth]{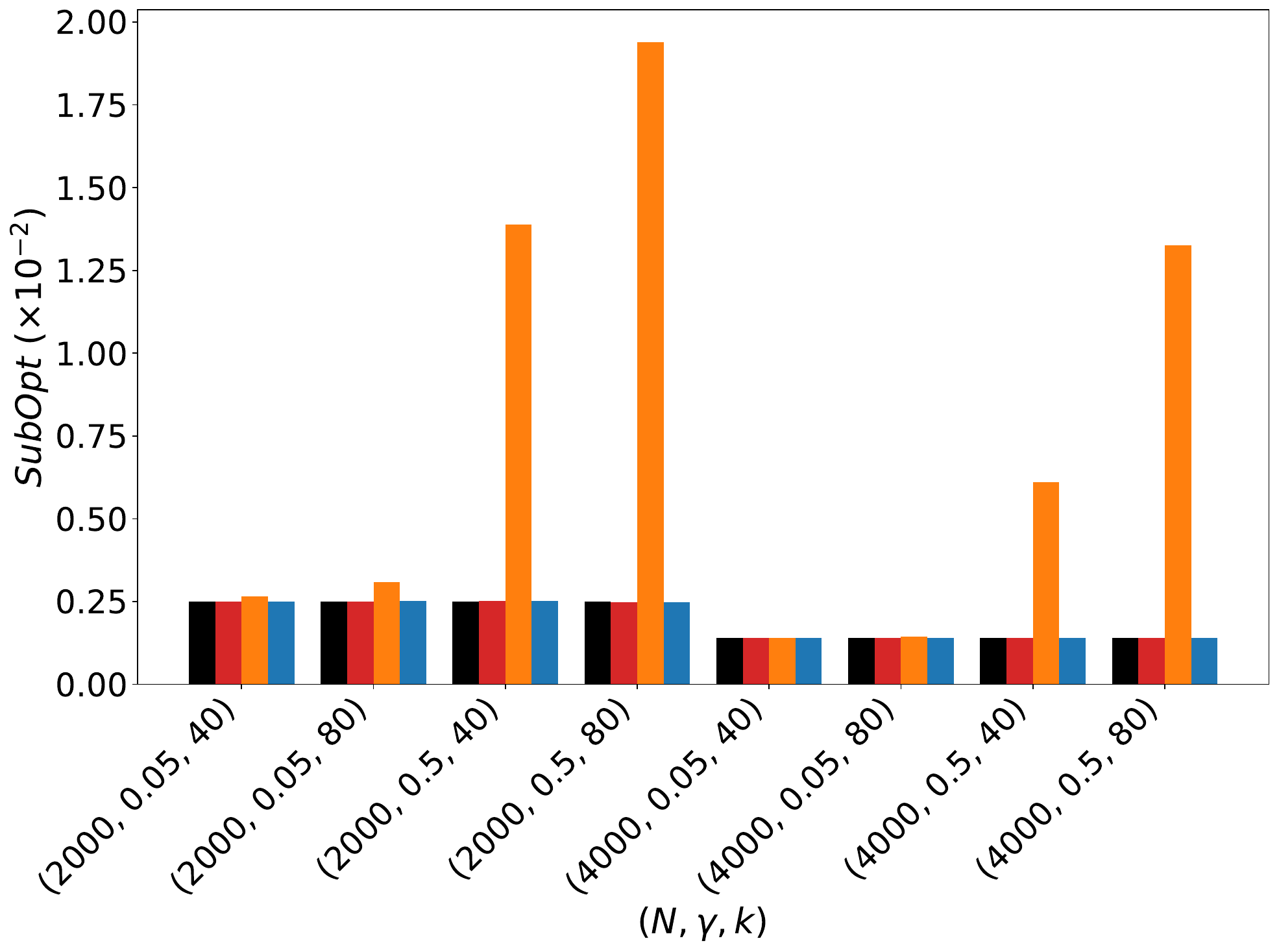}
        \caption{$\+M_f$, easy case}
    \end{subfigure}
    \caption{Comparison of \Cref{alg:fixed_multi} with baselines under $\+M_f$ (uniform).}
    \label{fig:multi.pdf}
\end{figure}

\end{document}